\documentclass[letter,11pt]{article}
\usepackage[margin=1in]{geometry}
\usepackage{amsthm}
\usepackage{amsfonts}
\usepackage{amsmath,graphicx,multicol}
\usepackage{amssymb}
\usepackage[utf8]{inputenc}
\usepackage[english]{babel}
\usepackage{bm}
\usepackage{epstopdf}
\usepackage{hyperref}
\usepackage{enumitem}
\usepackage{algorithm,algorithmic}
\usepackage{cleveref}
\usepackage{xcolor}
\usepackage{subcaption}
\usepackage{url}
\newcommand{\E}{\mathbb{E}}

\usepackage{cite}
\usepackage{booktabs} 
\usepackage{mathtools}

\newtheorem{assumption}[]{Assumption}
\newtheorem{remark}{\textbf{Remark}}
\newtheorem{proposition}{Proposition}[]
\newtheorem{theorem}{Theorem}[]
\newtheorem{corollary}{Corollary}[theorem]
\newtheorem{lemma}[]{Lemma}
\usepackage{multirow}
\usepackage{makecell}   
\usepackage{colortbl}   
\usepackage{hhline}
\definecolor{tabcolor}{RGB}{245,237,227} 

\usepackage{titletoc}
\usepackage{enumitem}
\definecolor{tabcolor}{RGB}{245,237,227} 
\hypersetup{bookmarksdepth=2} 

\title{\vspace{-10mm}\textbf{RAMPAGE: RAndomized Mid-Point for debiAsed Gradient Extrapolation}}
\date{}
\vspace{-0.05in}
\author{Zhankun Luo$^\dagger$, M. Berk Sahin$^\dagger$, Antesh Upadhyay$^\dagger$, Behzad Sharif, Abolfazl Hashemi\thanks{Authors are with the School of Electrical and Computer Engineering, Purdue University, West Lafayette, IN 47907, USA. $^\dagger$ denotes equal contribution.}}
\begin{document}
\maketitle

\vspace{-5mm}
\begin{abstract}
\vspace{-7mm}
\noindent{
}
\\\\\noindent
A celebrated method for Variational Inequalities (VIs) is Extragradient (EG), which can be viewed as a standard discrete-time integration scheme. With this view in mind, in this paper we show that EG may suffer from discretization bias when applied to non-linear vector fields, conservative or otherwise. To resolve this discretization shortcoming, we introduce RAndomized Mid-Point for debiAsed Gradient Extrapolation (RAMPAGE) and its variance-reduced counterpart, RAMPAGE+, which leverages antithetic sampling. In contrast with EG, both methods are unbiased. Furthermore, leveraging negative correlation, RAMPAGE+ acts as an unbiased, geometric path-integrator that completely removes internal first-order terms from the variance, provably improving upon RAMPAGE. We further demonstrate that both methods enjoy provable $\mathcal{O}(1/k)$ convergence guarantees for a range of problems including root finding under co-coercive, co-hypomonotone, and generalized Lipschitzness regimes. Furthermore, we introduce symmetrically scaled variants to extend our results to constrained VIs. Finally, we provide convergence guarantees of both methods for stochastic and deterministic smooth convex-concave games. Somewhat interestingly, despite being a randomized method, RAMPAGE+ attains purely deterministic bounds for a number of the studied settings.
\end{abstract}
\section{Introduction}
In recent years, modern learning and decision-making paradigms, e.g., Generative Adversarial Networks (GANs), adversarial training, Distributionally Robust Optimization (DRO), robust reinforcement learning, and multi-agent competitive systems, have triggered a renewed interest in the study of the broader domain of min-max optimization and Variational Inequalities (VIs).
Formally, consider the problem of finding an equilibrium state $\theta^* \in \mathcal{X} \subset \mathbb{R}^p$ such that the variational condition $\langle F(\theta^*), \theta - \theta^* \rangle \ge 0,$ for all $\theta \in \mathcal{X},$ is satisfied,
where $F: \mathbb{R}^p \to \mathbb{R}^p$ represents the driving operator of the system. In the unconstrained regime where $\mathcal{X} = \mathbb{R}^p$, this condition reduces to the root-finding problem $F(\theta^*) = 0.$
This formalism encompasses many applications:
In pure minimization tasks, the operator represents a conservative gradient field, $F(\theta) = \nabla f(\theta)$, where the underlying geometry is governed by potential energy dissipation. However, in min-max games and generalized VIs, the operator $F$ frequently exhibits non-conservative, skew-symmetric (rotational) components (e.g., the classical bilinear min-max games). 

\subsection{Motivation}\label{sec:motivation}
The discrete update of most first-order algorithms is fundamentally an attempt to numerically integrate the continuous flow defined by the ordinary differential equation
\begin{equation} \label{eq:continuous_flow}
\dot{\theta}(\tau) = -F(\theta(\tau)) \qquad \equiv \qquad \theta_{t+1} = \theta_t - \int_{t}^{t+1} F(\theta(s)) ds.
\end{equation}
With this view, a fundamental observation of modern optimization is that naive discrete-time integration schemes, such as the standard Forward Euler method (corresponding to Gradient Descent for minimization problems and Gradient Descent-Ascent for min-max problems), exhibit divergence when subjected to vector fields with rotational components. 

To stabilize the discrete dynamics, the field overwhelmingly relies on Extragradient (EG), a celebrated method that constructs an extrapolated intermediate state to approximate an implicit backward step. For a step size $\eta > 0$, the update is defined by
\begin{equation} \label{eq:eg}
\tag{EG}
\theta_{t+1} = \theta_t - \eta F\big(\theta_t - \eta F(\theta_t)\big).
\end{equation}
While \eqref{eq:eg} provides stability in smooth, well-conditioned bilinear games and other settings, an analysis of its local truncation error reveals a bias that may hinder its efficacy and stability in highly non-linear, high-dimensional settings. Let us state a concrete motivating example next.

\noindent\textbf{Infinite-Width GANs as Motivating Example:}
As a motivating example which is highly non-linear, consider GANs in the Neural Tangent Kernel (NTK) regime \cite{jacot2018neural}. Consider a Dirac-GAN architecture \cite{mescheder2018training} where the true data distribution is defined as a singular point mass at the origin $\theta^* = 0$. The generator's parameter is constrained to the unit circle $\mathbb{S}^1$, parameterized by the angle $\theta \in [-\pi, \pi]$. For the discriminator $D$, we assume the infinite-width limit, leading to an isomorphism between the discriminator and a linear functional operating within an RKHS $\mathcal{H}$. This space is uniquely governed by the deterministic covariance kernel $k(\cdot, \cdot)$ induced by the selected activation function \cite{jacot2018neural}. To give a concrete example, when the discriminator utilizes the Rectified Linear Unit (ReLU) as activation, i.e., $\phi(z) = \max(0, z)$, the kernel evaluates to
$k(|\theta|) = \frac{1}{\pi} \big( \sin |\theta| + (\pi - |\theta|) \cos |\theta| \big)$ \cite{arora2019fine,li2019enhanced,golikov2022neural}.

To ensure a bounded supremum for the functional objective, we consider the following regularized min-max game
\begin{equation}
\min_{\theta \in [-\pi, \pi]} \max_{D \in \mathcal{H}} \left( D(0) - D(\theta) - \frac{1}{2}\|D\|_{\mathcal{H}}^2 \right).
\end{equation}
As derived in Appendix \ref{sec:gan}, solving the regularized inner maximization and exploiting the rotational invariance of the kernel on the periodic domain reduces the generator's trajectory to the gradient flow with operator $F(\theta) = -\nabla_\theta k(|\theta|).$
The even periodic kernel $k(|\theta|)$ admits a uniform Fourier cosine expansion parameterized by spectral coefficients $\lambda_n$ (see, e.g. Chapter 1.1 in \cite{katznelson2004introduction}); writing $k(|\theta|)=\lambda_0+2\sum_{n=1}^{\infty}\lambda_n\cos(n\theta)$, taking derivatives yields $F(\theta) = \sum_{n=1}^\infty 2n \lambda_n \sin(n \theta).$

The asymptotic decay rate of the sequence $\lambda_n$ depends on the smoothness of the induced angular kernel. The induced kernel coefficients satisfy $\lambda_n=\mathcal{O}(n^{-(2s+1)})$ for an activation function in the Sobolev space $W^{s,2}$ (see Chapter 1.4 in~\cite{katznelson2004introduction}), then the differential operator linearly scales each harmonic coefficient to yield a decay rate of $\mathcal{O}(n^{-2s})$.

For the explicit ReLU angular kernel above, the nonzero Fourier coefficients satisfy $\lambda_n=\mathcal{O}(n^{-4})$. The differential operator then linearly scales each harmonic coefficient, so the corresponding nonzero coefficients of the vector field $F(\theta)$ decay as $\mathcal{O}(n^{-3})$. Thus, even in this simple infinite-width Dirac-GAN model, the generator vector field contains an infinite sequence of polynomially decaying higher harmonics. These high-frequency components produce a nonlinear kernel-induced flow. Motivated by this, our numerical results presented in Appendix \ref{sec:exp} and summarized in Fig. \ref{fig:main} are devised to test the impact of such high-frequency perturbations on learning and optimization dynamics.
\subsection{Proposed Idea}
With the above motivating example, to understand the limitations of \eqref{eq:eg}, we evaluate its truncation error with respect to \emph{an ideal estimator.}

As is evident by the integral form of \eqref{eq:continuous_flow}, an ideal discrete stabilization vector should capture the \emph{mean field} acting upon the state as it traverses the local exploration radius. We formalize this by defining the exact line integral over a parameterized exploration segment $\gamma(s) = \theta_t - c\eta s F(\theta_t)$ for $s \in [0, 1]$ where $c \ge 1$ dictates the integration scale. We set $c = 1$ for conservative vector fields and $c>1$ for general non-conservative fields. In the present section and in our main algorithms, we adopt $c=2$ so that the midpoint $s=1/2$ coincides with the standard EG extrapolation point (see Appendix \ref{app:scale} for more details). We thus define the continuous integral of the operator over this segment as
\begin{equation} \label{eq:intro_line_integral}
\mathcal{I}_{\gamma} = \int_0^1 F\big(\theta_t - 2\eta s F(\theta_t)\big) ds,
\end{equation}
which acts as a low-pass filter over the vector field, given the smoothing properties of integration.

While $\mathcal{I}_{\gamma}$ provides superior continuous-time stabilization, it is infeasible to compute it in general and as such \eqref{eq:eg} evaluates $\mathcal{I}_{\gamma}$ at the single extrapolated midpoint $s=1/2$, generating the update field $F_{EG} = F\big(\theta_t - \eta F(\theta_t)\big)$. Expanding $F_{EG}$ via Taylor series and subtracting it from the exact integral $\mathcal{I}_{\gamma}$ leads to the leading-order bias as $\mathcal{I}_{\gamma} - F_{EG} = \frac{1}{6}\eta^2 \nabla^2 F(\theta_t)\big[F(\theta_t), F(\theta_t)\big] + \mathcal{O}(\eta^3)$ (see Appendix \ref{app:variance} for more details).
This shows \eqref{eq:eg} is a biased estimator of $\mathcal{I}_{\gamma}$ and the bias scales with the Hessian tensor $\nabla^2 F(\theta_t)$.

\begin{figure*}[t]
\centering

\begin{subfigure}[t]{0.32\textwidth}
 \centering
 \includegraphics[width=\textwidth]{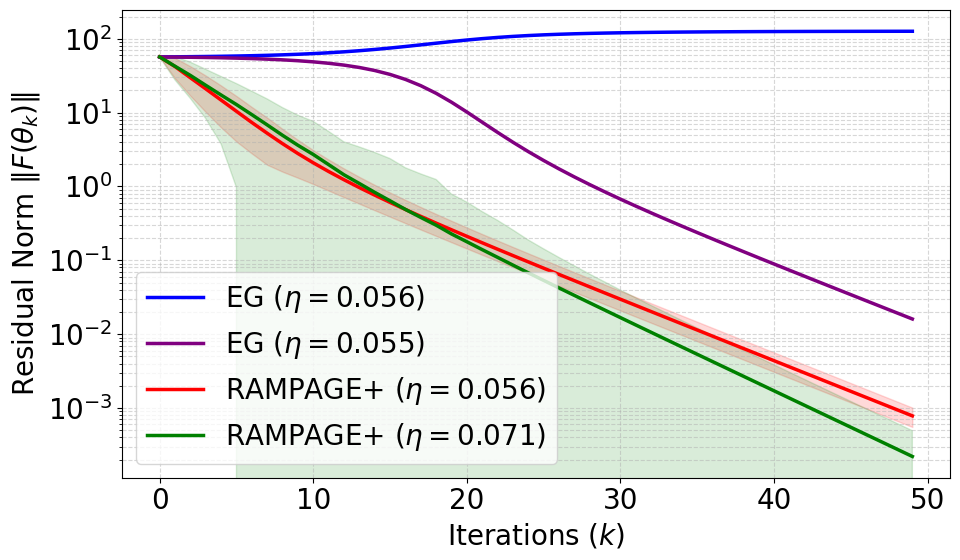}
 \caption{\footnotesize Conservative Polynomial Field}
\end{subfigure}\hfill
\begin{subfigure}[t]{0.32\textwidth}
 \centering
 \includegraphics[width=\textwidth]{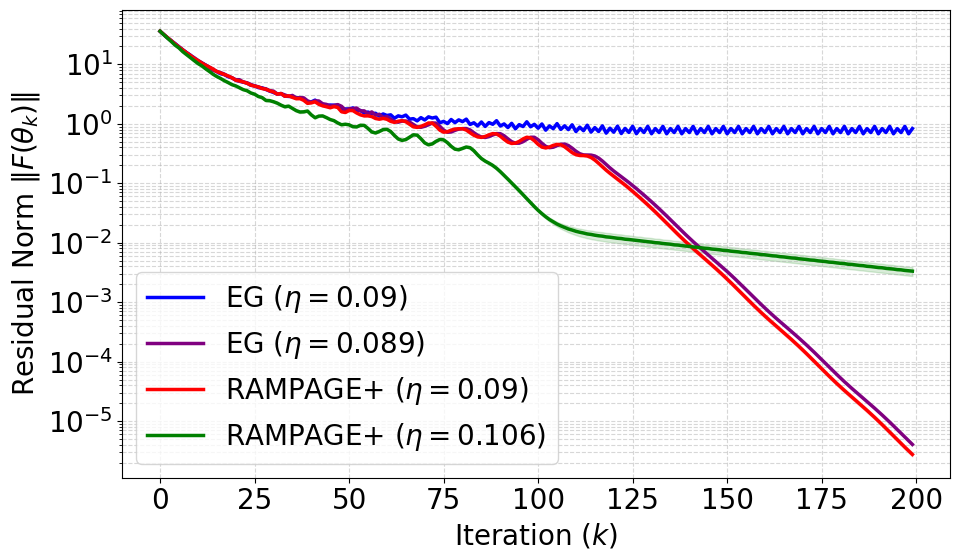}
 \caption{\footnotesize High-Frequency Min-Max Game}
\end{subfigure}\hfill
\begin{subfigure}[t]{0.32\textwidth}
 \centering
 \includegraphics[width=\textwidth]{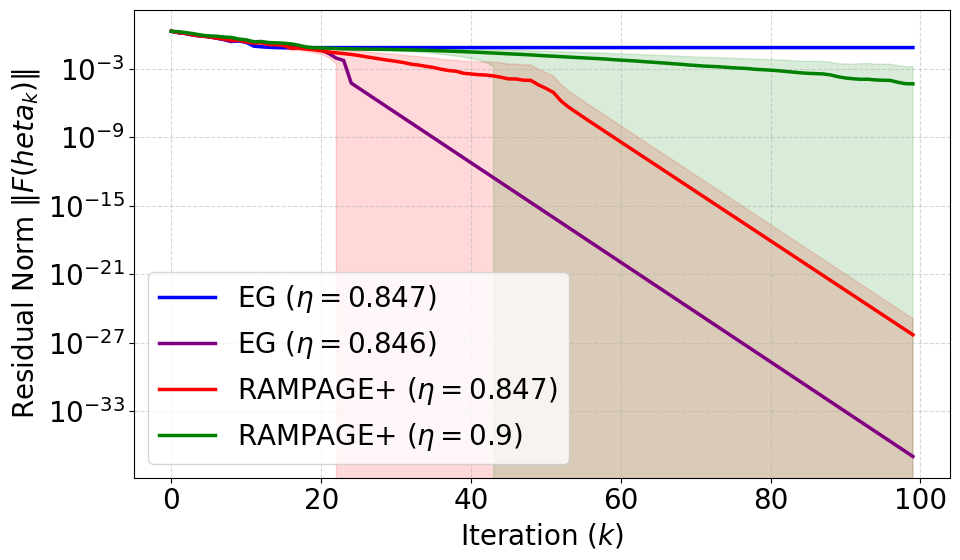}
 \caption{\footnotesize 2D High-Frequency Game: Residual}
\end{subfigure}

\medskip  

\begin{subfigure}[t]{0.32\textwidth}
 \centering
 \includegraphics[width=\textwidth]{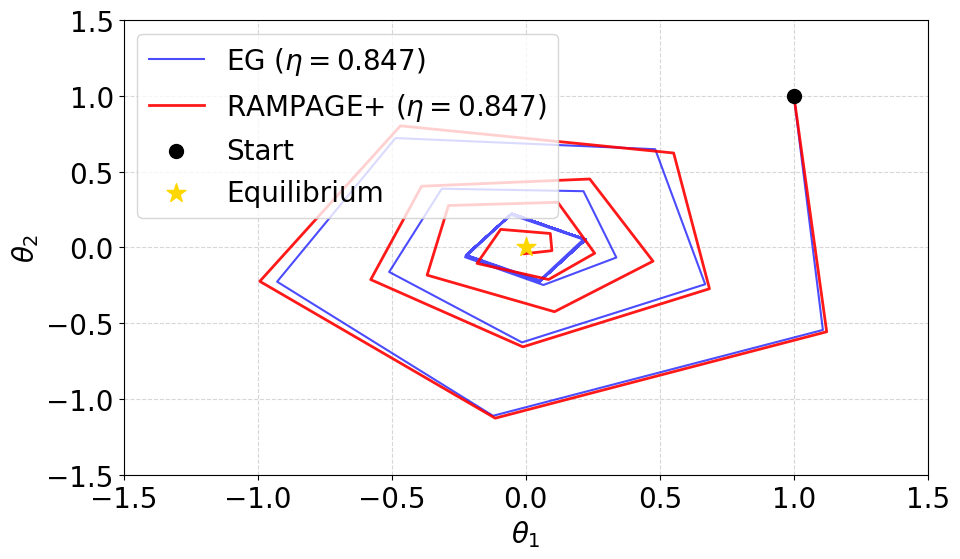}
 \caption{\footnotesize 2D High-Frequency Game: Trajectories}
\end{subfigure}\hfill
\begin{subfigure}[t]{0.32\textwidth}
 \centering
 \includegraphics[width=\textwidth]{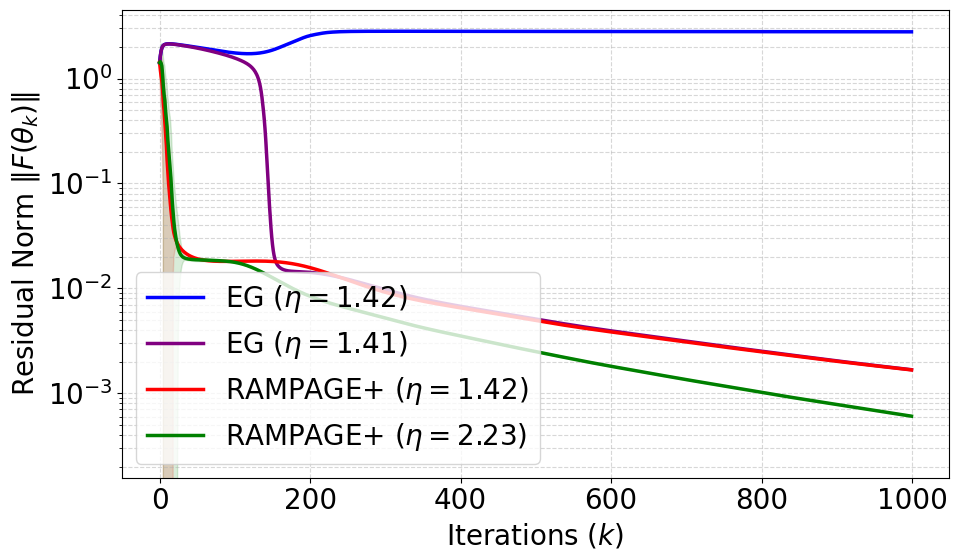}
 \caption{\footnotesize Distributionally Robust Optimization}
\end{subfigure}\hfill
\begin{subfigure}[t]{0.32\textwidth}
 \centering
 \includegraphics[width=\textwidth]{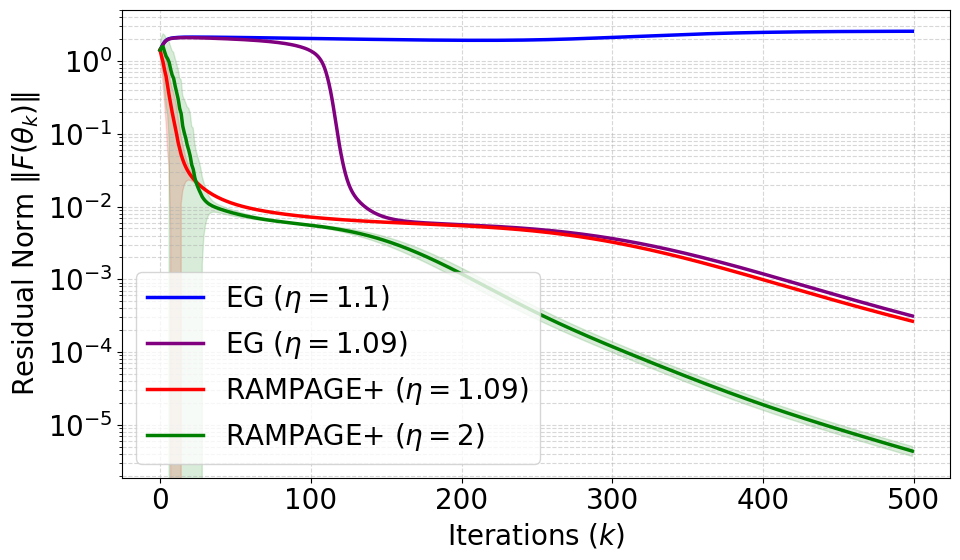}
 \caption{\footnotesize Adversarial Training}
\end{subfigure}

\caption{\small Comparison of \eqref{eq:eg} and \eqref{eq:rampage+}. See Appendix \ref{sec:exp} for details. (a) denotes an unconstrained optimization task with a nonconvex 4th-order polynomial objective, (b) denotes a nonconvex-nonconcave min-max game involving high-frequency sinusoids (motivated by infinite-width GANs), (c) and (d) denote a 2-dimensional nonconvex-nonconcave min-max game involving high-frequency sinusoids (again motivated by infinite-width GANs), and (e) and (f) correspond to 30-dimensional logistic regression tasks on the Breast Cancer Wisconsin dataset~\cite{breast_cancer_wisconsin_diagnostic_17} under KL-regularized distributionally robust optimization and adversarial training formulations, respectively. In all settings, we find two stepsizes for \eqref{eq:eg} on the edge of stability. The chosen larger stepsize makes \eqref{eq:eg} unstable ({\color{blue} blue curves}), consistent with the discretization-bias mechanism discussed above, while \eqref{eq:rampage+} enjoys convergence ({\color{red} red curves}). Furthermore, \eqref{eq:rampage+} allows larger stepsizes due to its higher stability ({\color{green} green curves}).}
\label{fig:main}
\vspace{-1em}
\end{figure*}


To circumvent the incurred bias of \eqref{eq:eg}, we motivate the use of \emph{stochastic approximation} and introduce RAndomized Mid-Point for debiAsed Gradient Extrapolation (RAMPAGE). By drawing a single random variable $u_t \sim \mathrm{Unif}([0, 1])$, the method evaluates the operator randomly along the exploration radius via $F_{\text{RAMPAGE}} = F\big(\theta_t - 2\eta u_t F(\theta_t)\big).$
Because the expectation over the uniform distribution exactly mirrors the integral $\int_0^1 ds$, in expectation $F_{\text{RAMPAGE}}$ acts as our ideal smoother
\begin{equation} \label{eq:intro_rampage_unbiased}
\mathbb{E}_{u_t} \big[ F_{\text{RAMPAGE}} \big] = \int_0^1 1 \cdot F\big(\theta_t - 2\eta s F(\theta_t)\big) ds = \mathcal{I}_{\gamma}.
\end{equation}
\eqref{eq:rampage} provides an unbiased estimator of the ideal low-pass filter $\mathcal{I}_{\gamma}$. However, evaluating a single random sample injects some degree of variance (see Appendix \ref{app:variance}).
To resolve this issue, we propose a variance-reduced variant, \eqref{eq:rampage+}. We maintain the unbiased integration while controlling the variance by drawing an antithetic pair of evaluations. Utilizing $u_t \sim \mathrm{Unif}([0,1])$ and its deterministic, antithetic complement $\tilde{u}_t = 1 - u_t$, the \eqref{eq:rampage+} update field is defined by
\begin{equation} \label{eq:intro_rampage_plus_update}
\bar{F}_t = \frac{1}{2} F\big(\theta_t - 2\eta u_t F(\theta_t)\big) + \frac{1}{2} F\big(\theta_t - 2\eta \tilde{u}_t F(\theta_t)\big).
\end{equation}
Given that $\tilde{u}_t = 1-u_t$ and both variables are uniformly distributed on $[0,1]$, \eqref{eq:rampage+} remains an unbiased estimator of $\mathcal{I}_{\gamma}$. The antithetic pairing also cancels the first-order Taylor fluctuation along the exploration segment, thereby reducing the leading-order estimation error compared to both the single-sample estimator and \eqref{eq:eg}; see Appendix \ref{app:variance} for details.

\begin{proposition} \label{prop:estimation_error1}
Let $\mathcal{I}_{\gamma} = \int_0^1 F(\theta_t - 2\eta s F_t) ds$ define the exact continuous-time average. Then, upon defining the estimation error as Bias$^2$+Variance, the leading term for \eqref{eq:eg} is $\frac{1}{36}\eta^4 \|H_t[F_t, F_t]\|^2$, while the leading term for \eqref{eq:rampage+} is reduced to $\frac{1}{45}\eta^4 \|H_t[F_t, F_t]\|^2$.
\end{proposition}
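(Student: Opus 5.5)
The plan is a direct second-order Taylor expansion of the integrand along the exploration segment. Write $F_t = F(\theta_t)$, $J_t = \nabla F(\theta_t)$, $H_t = \nabla^2 F(\theta_t)$, and set $g(s) = F(\theta_t - 2\eta s F_t)$ for $s \in [0,1]$. I will assume $F$ is $C^3$ with third derivative bounded on a neighborhood of $\theta_t$. Then
\[
g(s) = F_t + s\,a + s^2 b + R(s), \qquad a = -2\eta J_t F_t, \quad b = 2\eta^2 H_t[F_t,F_t], \quad \sup_{s\in[0,1]}\|R(s)\| = \mathcal{O}(\eta^3).
\]
Both estimators and $\mathcal{I}_{\gamma}$ are linear functionals of $g$, so each reduces to elementary moments of $s$ or $u_t$ applied to this polynomial.

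\textbf{Step 1 (exact average and EG).} Integrating term by term gives $\mathcal{I}_{\gamma} = F_t + \tfrac12 a + \tfrac13 b + \mathcal{O}(\eta^3)$. For EG we evaluate at the midpoint: $F_{EG} = g(1/2) = F_t + \tfrac12 a + \tfrac14 b + \mathcal{O}(\eta^3)$. This estimator is deterministic, so its variance is zero. Its bias is $\mathcal{I}_\gamma - F_{EG} = \tfrac{1}{12} b + \mathcal{O}(\eta^3) = \tfrac16 \eta^2 H_t[F_t,F_t] + \mathcal{O}(\eta^3)$. Squaring, the leading term of Bias$^2$ is $\tfrac{1}{36}\eta^4\|H_t[F_t,F_t]\|^2$. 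The cross term with the remainder is $\mathcal{O}(\eta^5)$.

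\textbf{Step 2 (RAMPAGE+).} With $\tilde u_t = 1-u_t$, the linear terms combine as $\tfrac12 a(u_t + 1 - u_t) = \tfrac12 a$, which is deterministic. This is the antithetic cancellation of the first-order fluctuation. Hence
\[
\bar F_t = F_t + \tfrac12 a + \tfrac12 b\,\bigl(u_t^2 + (1-u_t)^2\bigr) + \tfrac12\bigl(R(u_t)+R(1-u_t)\bigr).
\]
Unbiasedness holds exactly by \eqref{eq:intro_rampage_unbiased} and the fact that $1-u_t$ is also uniform, so Bias$^2 = 0$. For the variance, use $u^2+(1-u)^2 = 1 - 2u(1-u)$. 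This gives a variance of $\|b\|^2\,\mathrm{Var}(u(1-u))$ plus the remainder contributions. The moments are $\E[u(1-u)] = 1/6$ and $\E[u^2(1-u)^2] = \tfrac13 - \tfrac12 + \tfrac15 = \tfrac1{30}$, so $\mathrm{Var}(u(1-u)) = \tfrac1{30} - \tfrac1{36} = \tfrac1{180}$. Therefore the leading variance is $\tfrac{4\eta^4}{180}\|H_t[F_t,F_t]\|^2 = \tfrac{1}{45}\eta^4\|H_t[F_t,F_t]\|^2$.

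\textbf{Main obstacle.} The only delicate point is showing that the remainder does not contaminate the $\eta^4$ coefficient. Its fluctuation is $\mathcal{O}(\eta^3)$ uniformly in $u_t$. It therefore enters the variance only through a cross term with the $\mathcal{O}(\eta^2)$ quadratic fluctuation, which is $\mathcal{O}(\eta^5)$, and through its own square, which is $\mathcal{O}(\eta^6)$. I would handle this with Cauchy--Schwarz and the uniform bound on $R$, which follows from the local bound on the third derivative of $F$. Comparing $\tfrac1{45} < \tfrac1{36}$ then completes the statement.
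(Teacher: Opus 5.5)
Your proposal is correct and follows essentially the same route as the paper. In both, you Taylor-expand $F$ along the segment, read off the EG bias $\tfrac16\eta^2 H_t[F_t,F_t]$ from the midpoint-versus-average mismatch, use $u_t+\tilde u_t=1$ to cancel the Jacobian term, and integrate the remaining quadratic fluctuation. The paper integrates $(2s^2-2s+\tfrac13)^2$ directly, which equals your $4\,\mathrm{Var}(u(1-u))=\tfrac{1}{45}$. Your explicit $\mathcal{O}(\eta^5)$ control of the remainder, via a uniform third-derivative bound and Cauchy--Schwarz, adds rigor that the paper leaves implicit.
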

Therefore, one expects \eqref{eq:rampage+} to enjoy improved stability and to allow larger stepsizes compared to \eqref{eq:eg} in nonlinear regimes. 

\subsection{Numerical Verification}
To demonstrate the effectiveness of the proposed unbiased method in terms of its stability, our numerical results in Appendix \ref{sec:exp} consider a conservative case (corresponding to nonconvex optimization; see Appendix \ref{sec:exp-1}) and two non-conservative cases (corresponding to min-max games motivated by infinite-width GANs; see Appendices \ref{sec:exp-2} and \ref{sec:exp-3}), with results shown in Fig. \ref{fig:main}. For each case, we run two instances of \eqref{eq:eg} with slightly different stepsizes chosen at the edge of instability, one leads to convergence and the other leads to divergence due to the discretization bias we discussed. We use the latter, unstable stepsize for the proposed method as well. As the figure demonstrates, \eqref{eq:eg} becomes unstable at the larger stepsize, consistent with the discretization-bias mechanism discussed above, while \eqref{eq:rampage+} enjoys fast convergence, consistent with its unbiased path-averaging interpretation. Furthermore, \eqref{eq:rampage+} by using antithetic sampling enjoys a significantly low variance, as we explained. We also note that in all cases \eqref{eq:rampage+} still converges when using a larger stepsize than \eqref{eq:eg} (around 6\%-26\% larger). Therefore, \eqref{eq:rampage+} could lead to further stabilization for problems with nonlinear fields.

Additionally, Fig. \ref{fig:main} (e) and (f) correspond to standard logistic regression tasks on the Breast Cancer Wisconsin dataset~\cite{breast_cancer_wisconsin_diagnostic_17} under KL-regularized distributionally robust optimization and adversarial training formulations, respectively, both of which are inherently instances of minmax games (see Appendices \ref{sec:exp-4} and \ref{sec:exp-5} for detail). As the figure show, \eqref{eq:rampage+} can tolerate larger stepsizes and converge faster than \eqref{eq:eg} as a result, corroborating the stabilizing nature of the proposed stochastic approximation with variance reduced antithetic sampling.

\subsection{Theoretical Contributions}
Besides the proposed idea of leveraging stochastic integration with antithetic sampling, we provide convergence guarantees for \eqref{eq:rampage} and \eqref{eq:rampage+} in a number of settings. In particular, our key theoretical contributions are as follows:
\begin{itemize}[leftmargin=*]
 \item \textbf{Root-Finding Problems:} We establish $\mathcal{O}(1/k)$ best-iterate convergence rates for \eqref{eq:rampage} and \eqref{eq:rampage+} under co-coercivity, co-hypomonotonicity, and generalized Lipschitzness.
 \item \textbf{Variational Inequality:} We propose the symmetrically scaled variants of both methods and show their convergence under Lipschitzness and monotonicity for variational inequality problems.
 \item \textbf{Ergodic Rates for Convex-Concave Games:} We extend the analysis to unconstrained smooth convex-concave minimax games, proving that the ergodic sequence of the randomized updates converges to a Nash equilibrium at a rate of $\mathcal{O}(1/k)$, with \eqref{eq:rampage+} achieving this bound purely deterministically despite its stochastic nature. We further extend our results to stochastic unconstrained smooth convex-concave minimax games.
\end{itemize}
\section{Preliminaries and Problem Formulation}
\noindent\textbf{Root-Finding Problems:} The most fundamental unconstrained equilibrium framework is the root-finding problem, which seeks an optimal state vector $\theta^* \in \mathbb{R}^p$ residing in the zero set of the operator. This is formally defined by the algebraic condition
\begin{equation}\label{eq:rf}
\tag{RF}
F(\theta^*) = 0.
\end{equation}
We assume that the equilibrium set is non-empty, denoting an arbitrary optimal state as $\theta^* \in \mathrm{zer}(F)$, satisfying $F(\theta^*) = 0$.

\noindent\textbf{Variational Inequality Problems:} When the state space is defined by hard boundary conditions to be a non-empty, closed, and convex feasible set $\mathcal{X} \subset \mathbb{R}^p$, the unconstrained root-finding formulation is naturally replaced by the classic Stampacchia VI. A point $\theta^* \in \mathcal{X}$ is a solution to the Variational Inequality if it satisfies
\begin{equation} \label{eq:vi}
\tag{VI}
\langle F(\theta^*), \theta - \theta^* \rangle \ge 0, \quad \forall \theta \in \mathcal{X}.
\end{equation}
This framework encompasses a vast array of optimization paradigms, including standard convex minimization and smooth convex-concave minimax games. We refer the reader to Appendix \ref{sec:prior} for a detailed discussion of related work.

We state the following assumption, which formally defines certain functional properties leveraged by the prior work for the analysis of EG and its variants \cite{combettes2004proximal,bauschke2020generalized,chen2023generalized,vankov2024adaptive,tran2023sublinear}.

\begin{assumption}\label{ass:func}
Consider two arbitrary $\theta, \theta' \in \mathcal{X}$. An operator $F: \mathbb{R}^p \to \mathbb{R}^p$ is
\begin{enumerate}[leftmargin=*]
 \item $L$-Lipschitz continuous if $\exists\; L > 0$ such that $\|F(\theta) - F(\theta')\| \le L \|\theta - \theta'\|$.
 \item $\alpha$-symmetric $(L_0, L_1)$-Lipschitz if for some $L_0, L_1 \geq 0$ and $\alpha \in (0, 1]$,
 \begin{equation}
  \| F(\theta) - F(\theta')\| \leq \left( L_0 + L_1 \max_{\gamma \in [0, 1]} \left\|F(\gamma \theta + ( 1- \gamma) \theta') \right\|^{\alpha} \right) \| \theta- \theta'\|.
 \end{equation}
\item monotone if $\langle F(\theta) - F(\theta'), \theta - \theta' \rangle \ge 0.$
\item $\mu$-co-coercive if $\exists \; \mu > 0$ such that $\langle F(\theta) - F(\theta'), \theta - \theta' \rangle \ge \mu \|F(\theta) - F(\theta')\|^2.$
\item $\rho$-co-hypomonotone if $\exists \;\rho \geq 0$ such that $\langle F(\theta) - F(\theta'), \theta - \theta' \rangle \ge -\rho \|F(\theta) - F(\theta')\|^2.$
\end{enumerate}
\end{assumption}
While co-coercivity is satisfied by the gradients of smooth, convex objective functions and some non-conservative fields, it is violated by the rotational dynamics intrinsic to min-max games. On the other hand,
every $L$-Lipschitz monotone operator is $\rho$-co-hypomonotone with $\rho = 0$. Also, a $\mu$-co-coercive operator is monotone and $\frac{1}{\mu}$-Lipschitz.

We state the following proposition for $\alpha$-symmetric $(L_0, L_1)$-Lipschitz operators \cite{chen2023generalized}, which bas been recently used to analyze \eqref{eq:eg} and related methods \cite{vankov2024adaptive,pmlr-v235-vankov24a,choudhuryextragradient}.
\begin{proposition} \label{prop:equiv_formulation}
 Suppose $F$ is an $\alpha$-symmetric $(L_0, L_1)$-Lipschitz operator. Then, for $\alpha \in (0, 1)$ we have
 \begin{equation}\label{eq:alpha(0,1)}
 \textstyle
  \| F(\theta) - F(\theta') \| \leq \left(K_0 + K_1 \|F(\theta)\|^{\alpha} + K_2 \|\theta - \theta'\|^{\frac{\alpha}{1 - \alpha}} \right) \|\theta - \theta'\|,
 \end{equation}
 where $K_0 = L_0 (2^{\frac{\alpha^2}{1 - \alpha}} + 1)$, $K_1 = L_1 \cdot 2^{\frac{\alpha^2}{1 - \alpha}}$ and $K_2 = L_1^{\frac{1}{1 - \alpha}} \cdot 2^{\frac{\alpha^2}{1 - \alpha}} \cdot 3^{\alpha} (1 - \alpha)^{\frac{\alpha}{1 - \alpha}}$.
\end{proposition}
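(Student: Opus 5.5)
The plan is the standard ``self-bounding'' argument for generalized smoothness, in the spirit of \cite{chen2023generalized}. Fix $\theta,\theta'$ and write $d=\|\theta-\theta'\|$ and $M=\max_{\gamma\in[0,1]}\|F(\gamma\theta+(1-\gamma)\theta')\|$. Since $M$ appears on the right-hand side of the $\alpha$-symmetric definition, the whole task is to bound $M^{\alpha}$ by $\|F(\theta)\|^{\alpha}$ plus powers of $d$. Substituting that bound into $\|F(\theta)-F(\theta')\|\le (L_0+L_1M^{\alpha})d$ then gives \eqref{eq:alpha(0,1)}.

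\textbf{Step 1 (implicit inequality for $M$).} Take any point $z=\gamma\theta+(1-\gamma)\theta'$ on the segment. Apply the assumption to the pair $(\theta,z)$. The segment $[\theta,z]$ lies inside $[\theta,\theta']$, so its maximal field norm is at most $M$, and $\|\theta-z\|\le d$. This gives $\|F(z)\|\le \|F(\theta)\|+(L_0+L_1M^{\alpha})d$. Taking the maximum over $z$ yields
\[
M\le \|F(\theta)\|+L_0d+L_1 d\,M^{\alpha}.
\]

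\textbf{Step 2 (decouple via Young).} Since $\alpha<1$, the term $L_1dM^{\alpha}$ is sublinear in $M$, so it can be absorbed into the left-hand side. Write $L_1dM^{\alpha}=(cM)^{\alpha}\,(L_1 d\,c^{-\alpha})$ and apply Young's inequality with exponents $1/\alpha$ and $1/(1-\alpha)$:
\[
L_1dM^{\alpha}\le \alpha cM+(1-\alpha)(L_1d)^{\frac{1}{1-\alpha}}c^{-\frac{\alpha}{1-\alpha}}.
\]
Choosing $\alpha c=\tfrac12$ gives
\[
M\le 2\|F(\theta)\|+2L_0d+2(1-\alpha)(2\alpha)^{\frac{\alpha}{1-\alpha}}(L_1d)^{\frac{1}{1-\alpha}}.
\]

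\textbf{Step 3 (raise to the power $\alpha$ and substitute).} Apply $(a+b+c)^{\alpha}\le a^{\alpha}+b^{\alpha}+c^{\alpha}$, or the cruder $3^{\alpha}\max$-type bound, which is where the factor $3^{\alpha}$ in $K_2$ originates. Then plug the result into $(L_0+L_1M^{\alpha})d$. This produces three kinds of terms:
\begin{itemize}[leftmargin=*]
\item a term $L_1\|F(\theta)\|^{\alpha}$ times a power of $2$, which gives $K_1$;
\item a term in $L_1^{1/(1-\alpha)}d^{\alpha/(1-\alpha)}$, which gives $K_2$, since $(d^{1/(1-\alpha)})^{\alpha}=d^{\alpha/(1-\alpha)}$;
\item a mixed term $L_1L_0^{\alpha}d^{\alpha}$.
\end{itemize}

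\textbf{Main obstacle.} I expect the main difficulty to be handling the mixed cross term and matching the exact constants. My first attempt at the cross term is a second Young split,
\[
L_1L_0^{\alpha}d^{\alpha}\le \alpha L_0+(1-\alpha)L_1^{\frac{1}{1-\alpha}}d^{\frac{\alpha}{1-\alpha}},
\]
which distributes it into the $K_0$ and $K_2$ terms, consistent with $K_0=L_0(2^{\alpha^2/(1-\alpha)}+1)$. To reproduce the stated constants, in particular the exponent $2^{\alpha^2/(1-\alpha)}$, I would adjust the Young parameter $c$ in Step 2, and the split between Steps 2 and 3, following \cite{chen2023generalized}. Any valid choice of $c$ yields a bound of the form \eqref{eq:alpha(0,1)}; only the numerical constants depend on this bookkeeping.
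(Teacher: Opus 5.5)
Your Steps 1--3 are each correct, and together they prove an inequality of the form \eqref{eq:alpha(0,1)}. However, the constants you get are different from the stated $K_0,K_1,K_2$. Your claim that the stated constants follow by retuning $c$ is false. (The paper gives no proof here; it imports the statement, constants included, from \cite{chen2023generalized}.)

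The loss happens in Step~1, not in the Young step. Take $L_0=0$ and $F(\theta)=0$. Then $M=(L_1d)^{1/(1-\alpha)}$ satisfies $M\le \|F(\theta)\|+L_0d+L_1dM^{\alpha}$ with equality. So anything derived from that inequality alone gives at best $\|F(\theta)-F(\theta')\|\le L_1M^{\alpha}d=L_1^{1/(1-\alpha)}d^{1/(1-\alpha)}$. That is a $K_2$-prefactor of at least $1$ in front of $L_1^{1/(1-\alpha)}$. The stated prefactor $2^{\alpha^2/(1-\alpha)}3^{\alpha}(1-\alpha)^{\alpha/(1-\alpha)}$ drops below $1$ once $\alpha$ exceeds about $0.58$; at $\alpha=0.9$ it is about $7\times 10^{-7}$. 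For those $\alpha$, no choice of $c$ and no way of splitting terms between Steps~2 and~3 can reach the proposition as stated. (Your $c=1/(2\alpha)$ also gives $K_1'=2^{\alpha}L_1>K_1$ for $\alpha<1/2$, but that one is fixable by tuning.) The factor $(1-\alpha)^{\alpha/(1-\alpha)}$ in $K_2$ comes from integrating $g^{-\alpha}\,dg$ along the segment. Charging the whole segment at once at the worst slope $L_0+L_1M^{\alpha}$ throws it away.

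The missing idea is a Gr\"onwall-type argument. Let $w_s=\theta+s(\theta'-\theta)$ and $G(s)=\max_{u\in[0,s]}\|F(w_u)\|$, so $G(0)=\|F(\theta)\|$ and $G(1)=M$. Apply your Step~1 on $[s,s']$ with base point $w_s$. This gives $G(s')\le G(s)+(L_0+L_1G(s')^{\alpha})(s'-s)d$, i.e.\ $G'\le d\,(L_0+L_1G^{\alpha})$. Since $G$ is Lipschitz, this holds a.e.\ and can be integrated.

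For $L_1>0$ set $h=G+(L_0/L_1)^{1/\alpha}$. Concavity gives $L_0+L_1G^{\alpha}\le 2^{1-\alpha}L_1h^{\alpha}$, hence $h(1)^{1-\alpha}\le h(0)^{1-\alpha}+2^{1-\alpha}(1-\alpha)L_1d$. Now use $(x+y)^{1/(1-\alpha)}\le 2^{\alpha/(1-\alpha)}\big(x^{1/(1-\alpha)}+y^{1/(1-\alpha)}\big)$ and then subadditivity of $t\mapsto t^{\alpha}$. This gives
\[
M^{\alpha}\le 2^{\frac{\alpha^2}{1-\alpha}}\Big(\|F(\theta)\|^{\alpha}+\tfrac{L_0}{L_1}+2^{\alpha}\big((1-\alpha)L_1d\big)^{\frac{\alpha}{1-\alpha}}\Big).
\]
Substituting into $(L_0+L_1M^{\alpha})d$ yields exactly $K_0$ and $K_1$, and yields $K_2$ with $2^{\alpha}\le 3^{\alpha}$ in place of $3^{\alpha}$. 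The case $L_1=0$ is immediate. The case $L_0=0$ follows by replacing $L_0$ with $L_0+\epsilon$ and letting $\epsilon\to 0$.

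For the downstream uses in Theorems~\ref{thm:rampage_plus_alpha_monotone} and~\ref{thm:rampage_alpha_monotone}, any valid constants would do once $C_{\alpha}$ and $\nu$ are redefined. So your argument suffices there, but it does not prove the proposition with the constants it states.
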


Let $\mathcal{B} \subset \mathcal{X}$ be a closed and bounded set containing the optimal state $\theta^*$. The restricted gap function evaluated at a state $\theta \in \mathcal{X}$ over the domain $\mathcal{B}$ is defined as $\operatorname{Gap}_{\mathcal{B}}(\theta) = \sup_{v \in \mathcal{B}} \langle F(\theta), \theta - v \rangle$. The following standard lemma \cite{Konnov2001,Facchinei2003,nesterov2007dual} is typically used to establish convergence results in terms of $\operatorname{Gap}$ by leveraging results established in terms of the residual norm.
\begin{lemma}\label{lem:gap}
 Let $z_l = \Pi_{\mathcal{X}}(\theta_l - \eta F(\theta_l))$ and the residual $r_l = \theta_l - z_l$. It holds that
 \begin{equation}
\operatorname{Gap}_{\mathcal{B}}(\theta_l) =\sup_{v \in \mathcal{B}} \langle F(\theta_l), \theta_l - v \rangle\le \|r_l\| \left( \|F(\theta_l)\| + \frac{1}{\eta} \sup_{v \in \mathcal{B}} \|\theta_l - v\| \right).
\end{equation}
\end{lemma}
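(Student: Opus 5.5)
The plan is to bound $\langle F(\theta_l), \theta_l - v\rangle$ for a fixed $v\in\mathcal{B}$ using only the variational characterization of the projection defining $z_l$, and then take the supremum over $v$. No assumption on $F$ from Assumption \ref{ass:func} is needed; only closedness and convexity of $\mathcal{X}$ (so that $\Pi_{\mathcal{X}}$ is well defined) and $\mathcal{B}\subset\mathcal{X}$ are used.

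\textbf{Step 1 (split).} Fix $v \in \mathcal{B}$ and insert $z_l$:
\begin{equation*}
\langle F(\theta_l), \theta_l - v\rangle = \langle F(\theta_l), \theta_l - z_l\rangle + \langle F(\theta_l), z_l - v\rangle = \langle F(\theta_l), r_l\rangle + \langle F(\theta_l), z_l - v\rangle .
\end{equation*}
By Cauchy--Schwarz, the first term is at most $\|r_l\|\,\|F(\theta_l)\|$.

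\textbf{Step 2 (projection inequality).} Since $z_l = \Pi_{\mathcal{X}}(\theta_l - \eta F(\theta_l))$ and $v \in \mathcal{B}\subset\mathcal{X}$, the obtuse-angle characterization of the projection gives
\begin{equation*}
\langle \theta_l - \eta F(\theta_l) - z_l,\ v - z_l\rangle \le 0 .
\end{equation*}
Rearranging, $\eta\langle F(\theta_l), z_l - v\rangle \le \langle \theta_l - z_l, z_l - v\rangle = \langle r_l, z_l - v\rangle$.

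\textbf{Step 3 (remove $z_l$ and combine).} Write $z_l - v = (\theta_l - v) - r_l$. Then
\begin{equation*}
\langle r_l, z_l - v\rangle = \langle r_l, \theta_l - v\rangle - \|r_l\|^2 \le \|r_l\|\,\|\theta_l - v\| .
\end{equation*}
This is the only slightly delicate point. Bounding $\|z_l - v\|$ directly would give a bound in terms of $\sup_{v}\|z_l - v\|$ rather than the stated $\sup_{v}\|\theta_l - v\|$. Substituting $z_l = \theta_l - r_l$ and discarding the nonpositive term $-\|r_l\|^2$ yields exactly the quantity in the statement.

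Dividing by $\eta>0$ and adding Step 1 gives
\begin{equation*}
\langle F(\theta_l), \theta_l - v\rangle \le \|r_l\|\Big(\|F(\theta_l)\| + \tfrac{1}{\eta}\|\theta_l - v\|\Big).
\end{equation*}
Taking the supremum over $v\in\mathcal{B}$ proves the claim. The supremum is finite because $\mathcal{B}$ is bounded.
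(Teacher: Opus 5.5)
Your proof is correct and complete. The Cauchy--Schwarz bound on $\langle F(\theta_l), r_l\rangle$, the obtuse-angle inequality for $z_l$ tested at $v\in\mathcal{B}\subset\mathcal{X}$, and the rewriting $z_l - v = (\theta_l - v) - r_l$ (dropping $-\|r_l\|^2$) together give exactly the stated bound.

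The paper gives no proof of its own; it quotes the lemma as standard from the cited references, and yours is the standard argument. Your argument uses only $\eta>0$ and $\mathcal{B}\subset\mathcal{X}$, with no property of $F$ and no need for $\theta_l\in\mathcal{X}$. That is what the paper implicitly relies on when it applies the lemma with the randomized step sizes $2\eta\hat{u}_{l^*}$ and $2\eta u_{l^*}$ in Corollaries~\ref{cor:best_iterate_gap} and~\ref{cor:cp_rmp_gap}.
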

\section{RAMPAGE+ and Its Analyses}\label{sec:rampage+}
In this section, we analyze our main method, \eqref{eq:rampage+}, in numerous settings while the analyses of \eqref{eq:rampage} are presented in Appendix \ref{sec:rampage}.

At each iteration $t \ge 0$, given a base iterate $\theta_t \in \mathbb{R}^p$ and a step size $\eta > 0$, we draw a single uniform random variable $u_t \sim \mathrm{Unif}([0, 1])$. Then, \eqref{eq:rampage} is formally defined by
\begin{equation} \label{eq:rampage}
\tag{RAMPAGE}
\begin{aligned}
y_t &= \theta_t - 2\eta u_t F(\theta_t), \qquad
\theta_{t+1} = \theta_t - \eta F(y_t).
\end{aligned}
\end{equation}

\eqref{eq:rampage+} on the other hand generates an antithetic pair of intermediate exploration states, utilizing a single random scalar $u_t \sim \mathrm{Unif}([0,1])$ and its deterministic complement $\tilde{u}_t = 1 - u_t$. At each iteration $t \ge 0$, the update sequence is formulated as
\begin{equation} \label{eq:rampage+}
\tag{RAMPAGE+}
\begin{aligned}
y_t &= \theta_t - 2\eta u_t F(\theta_t), \qquad
\tilde{y}_t = \theta_t - 2\eta \tilde{u}_t F(\theta_t), \\
\bar{F}_t &= \frac{1}{2} \big( F(y_t) + F(\tilde{y}_t) \big), \qquad
\theta_{t+1} = \theta_t - \eta \bar{F}_t.
\end{aligned}
\end{equation}
Note that with $u_t = \tilde{u}_t = 1/2$, both methods reduce to \eqref{eq:eg}.
\subsection{Root-Finding Problems}
We first consider \eqref{eq:rf} with an operator $F: \mathbb{R}^p \to \mathbb{R}^p$ that is both $L$-Lipschitz continuous and $\mu$-co-coercive. 

\begin{theorem} \label{thm:esramp_cocoercive}
Let $F: \mathbb{R}^p \to \mathbb{R}^p$ be an $L$-Lipschitz continuous and $\mu$-co-coercive operator with $\mu > 0$. Assume $\mathrm{zer}(F) \neq \emptyset$. Let $\{\theta_t\}_{t \ge 0}$ denote the sequence generated by \eqref{eq:rampage+}. If the step size satisfies $0 < \eta <\frac{1}{\sqrt{3}L}$ with $\eta \leq 2\mu$, then for any $\theta^* \in \mathrm{zer}(F)$,
\begin{equation}
\min_{0 \le l \le k} \|F(\theta_l)\|^2 \le \frac{1}{k+1} \sum_{l=0}^k \|F(\theta_l)\|^2 \le \frac{\|\theta_0 - \theta^*\|^2}{C_{RA+} (k+1)},
\end{equation}
where $C_{RA+} = \eta^2 \left( 1 - 3\eta^2 L^2 \right) > 0$.
\end{theorem}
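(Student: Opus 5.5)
The plan is a one-step Lyapunov (Fejér-type) argument on $\|\theta_t-\theta^*\|^2$. The goal is the deterministic inequality
\begin{equation*}
\|\theta_{t+1}-\theta^*\|^2 \le \|\theta_t-\theta^*\|^2 - \eta^2(1-3\eta^2L^2)\|F(\theta_t)\|^2,
\end{equation*}
valid for every realization of $u_t$. Summing it over $t=0,\dots,k$ telescopes to the claim, and since the bound holds pathwise no expectation is needed.

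Write $F_t=F(\theta_t)$ and $d_t=\theta_t-\theta^*$. Expanding the update $\theta_{t+1}=\theta_t-\eta\bar F_t$ gives
\begin{equation*}
\|d_{t+1}\|^2=\|d_t\|^2-2\eta\langle \bar F_t,d_t\rangle+\eta^2\|\bar F_t\|^2 .
\end{equation*}
The inner product is handled by moving to the extrapolated points, where co-coercivity applies. Since $d_t=(y_t-\theta^*)+2\eta u_tF_t$ and $F(\theta^*)=0$, co-coercivity at $y_t$ gives
\begin{equation*}
\langle F(y_t),d_t\rangle\ge \mu\|F(y_t)\|^2+2\eta u_t\langle F(y_t),F_t\rangle,
\end{equation*}
and the analogous bound holds at $\tilde y_t$ with weight $1-u_t$. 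Averaging the two, and setting $a_t=u_tF(y_t)+(1-u_t)F(\tilde y_t)$, yields
\begin{equation*}
\|d_{t+1}\|^2\le\|d_t\|^2-\mu\eta\big(\|F(y_t)\|^2+\|F(\tilde y_t)\|^2\big)-2\eta^2\langle a_t,F_t\rangle+\eta^2\|\bar F_t\|^2 .
\end{equation*}
Because $\eta\le2\mu$, we have $\mu\eta\ge\eta^2/2$. By convexity, $\|\bar F_t\|^2\le\frac12(\|F(y_t)\|^2+\|F(\tilde y_t)\|^2)$, so the last term is absorbed by the second.

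For the cross term, use polarization:
\begin{equation*}
-2\langle a_t,F_t\rangle=\|a_t-F_t\|^2-\|a_t\|^2-\|F_t\|^2 .
\end{equation*}
Lipschitzness gives $\|F(y_t)-F_t\|\le 2\eta u_tL\|F_t\|$ and $\|F(\tilde y_t)-F_t\|\le 2\eta(1-u_t)L\|F_t\|$. Hence $\|a_t-F_t\|$ is at most a multiple of $\eta L\|F_t\|$ with an explicit polynomial in $u_t$, for instance $2\eta L(u_t^2+(1-u_t)^2)\|F_t\|$, or $\|a_t-F_t\|^2\le 4\eta^2L^2(u_t^3+(1-u_t)^3)\|F_t\|^2$ by Jensen. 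Together these give a decrease of order $\eta^2(1-c\,\eta^2L^2)\|F_t\|^2$.

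The main obstacle is getting the constant down to exactly $c=3$ uniformly in $u_t$. The crude bounds above give $c=4$ at the endpoints $u_t\in\{0,1\}$. To sharpen this, I would not discard the slack terms. These are $-\eta^2\|a_t\|^2$, the gap $\frac{\eta^2}{4}\|F(y_t)-F(\tilde y_t)\|^2$ in the convexity bound for $\|\bar F_t\|^2$, and the excess $(\mu\eta-\eta^2/2)$ in the co-coercive term. I would also try to use $\bar F_t$ rather than $F_t$ as the anchor in the polarization. Then I would optimize the resulting quadratic form in $u_t$, whose worst case should give $3\eta^2L^2$. The condition $\eta<1/(\sqrt3L)$ then makes $C_{RA+}>0$, and dividing the telescoped sum by $C_{RA+}(k+1)$ gives both the average and the min bound.
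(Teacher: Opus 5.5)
Everything you actually carry out is correct. The co-coercive splitting at $y_t$ and $\tilde y_t$, the absorption of $\eta^2\|\bar F_t\|^2$ via $\mu\eta\ge\eta^2/2$ and convexity, and the polarization against $a_t$ together give, pathwise, $\|\theta_{t+1}-\theta^*\|^2\le\|\theta_t-\theta^*\|^2-\eta^2(1-4\eta^2L^2)\|F(\theta_t)\|^2$. That is a strictly weaker result: it needs $\eta<\frac{1}{2L}$ and gives a smaller constant. The step that produces the constant $3$ is only announced (``should give $3\eta^2L^2$''), not proved. This is a genuine gap, and not a routine one.

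The slack that matters is the one you discarded when you replaced the parallelogram identity $\|\bar F_t\|^2=\frac12\|F(y_t)\|^2+\frac12\|F(\tilde y_t)\|^2-\frac14\|A-B\|^2$ by an inequality. Here $A=F(y_t)-F(\theta_t)$ and $B=F(\tilde y_t)-F(\theta_t)$. Once $-\frac14\|A-B\|^2$ is kept, the residual is an indefinite quadratic in $(A,B)$. Lipschitz continuity supplies only the upper bounds $\|A\|\le a:=2\eta Lu_t\|F(\theta_t)\|$ and $\|B\|\le b:=2\eta L\tilde u_t\|F(\theta_t)\|$, and no lower bound on $\|A-B\|$. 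So you cannot substitute the bounds term by term, and ``optimizing a quadratic form in $u_t$'' is not yet the right problem.

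The paper polarizes at each point separately, which leaves the residual $\eta^2\big(u_t\|A\|^2+\tilde u_t\|B\|^2-\frac14\|A-B\|^2\big)$. It then uses $\|A-B\|\ge\big|\|A\|-\|B\|\big|$ to reduce to the function $u_tX^2+\tilde u_tY^2-\frac14(X-Y)^2$ on the box $[0,a]\times[0,b]$. It takes the maximum at the corner $(a,b)$; its Hessian-determinant remark only excludes interior maxima, so the edges also need checking, but the conclusion holds. Evaluating at the corner gives $4\eta^4L^2(\frac34-2u_t+2u_t^2)\|F(\theta_t)\|^2\le 3\eta^4L^2\|F(\theta_t)\|^2$.

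Your route can be completed the same way, and it needs only the parallelogram slack. The $-\eta^2\|a_t\|^2$ term, the co-coercivity excess, and re-anchoring at $\bar F_t$ are all unnecessary. The residual becomes $\eta^2\big(\|u_tA+\tilde u_tB\|^2-\frac14\|A-B\|^2\big)$. Its $\langle A,B\rangle$ coefficient $2u_t\tilde u_t+\frac12$ is positive, so with $X=\|A\|$, $Y=\|B\|$ it is at most $\eta^2\big((u_tX+\tilde u_tY)^2-\frac14(X-Y)^2\big)$. This form has negative Hessian determinant, so its maximum over the box lies on the boundary. A check of the four edges shows the maximum sits at the corner $(a,b)$. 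There the value is $c^2\big(s^2-\frac{s}{2}+\frac14\big)$, with $c=2\eta L\|F(\theta_t)\|$ and $s=u_t^2+\tilde u_t^2\in[\frac12,1]$. This is at most $\frac34c^2=3\eta^2L^2\|F(\theta_t)\|^2$, with equality at $u_t\in\{0,1\}$, and that yields exactly $C_{RA+}$. Without such a corner-maximization argument, the proposal stops at $1-4\eta^2L^2$.
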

The proof is available in Appendix \ref{sec:esramp_cocoercive}. The result thus demonstrates the standard $\mathcal{O}(1/k)$ convergence for this setting. Furthermore, as we discuss further in Remark \ref{rem:var}, \eqref{eq:rampage+} enjoys a lower variance compared to \eqref{eq:rampage}, as it benefits from variance reduction properties of antithetic sampling. As we can see, compared to \eqref{eq:rampage}, by using antithetic sampling, \eqref{eq:rampage+} enjoys a purely deterministic convergence despite being a randomized algorithm. However, we could improve the requirement on $\eta$ at the cost of stating a result that holds on expectation.

Next, we turn our attention to \eqref{eq:rf} under co-hypomonotonicity and Lipschitzness.
\begin{theorem} \label{thm:es_ramp_general}
Let $F: \mathbb{R}^p \to \mathbb{R}^p$ be an $L$-Lipschitz continuous and $\rho$-co-hypomonotone operator with $\rho \ge 0$. Assume $\mathrm{zer}(F) \neq \emptyset$. Let $\{\theta_t\}_{t \ge 0}$ denote the sequence generated by \eqref{eq:rampage+}. If the step size $\eta$ is chosen such that
\begin{equation}
C_{RA+}^{\rho,\eta} := \frac{3}{4} - \frac{2\rho}{\eta} - 16L^2 \left(\rho + \frac{\eta}{2}\right)^2 - 4L^2\eta \left(\rho + \frac{\eta}{2}\right) > 0,
\end{equation}
then for any $\theta^* \in \mathrm{zer}(F)$,
\begin{equation}
\min_{0 \le l \le k} \|F(\theta_l)\|^2 \le \frac{1}{k+1} \sum_{l=0}^k \|F(\theta_l)\|^2 \le \frac{\|\theta_0 - \theta^*\|^2}{C_{RA+}^{\rho,\eta} (k+1)}.
\end{equation}
\end{theorem}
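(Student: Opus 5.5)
The plan is a one-step Lyapunov argument on $\|\theta_t-\theta^*\|^2$ that holds \emph{pathwise}, for every realization of $u_t$. This is consistent with the deterministic flavor of Theorem~\ref{thm:esramp_cocoercive}. Write $F_t=F(\theta_t)$ and split the update as $\bar F_t = F_t + e_t$, where
\[
e_t=\tfrac12\big(F(y_t)-F_t\big)+\tfrac12\big(F(\tilde y_t)-F_t\big).
\]
By Lipschitzness,
\[
\|e_t\|\le \tfrac12 L\cdot 2\eta u_t\|F_t\|+\tfrac12 L\cdot 2\eta(1-u_t)\|F_t\| = L\eta\|F_t\|,
\]
which is independent of $u_t$. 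This is exactly where antithetic pairing gives a deterministic bound.

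Expanding the update gives
\[
\|\theta_{t+1}-\theta^*\|^2=\|\theta_t-\theta^*\|^2-2\eta\langle \bar F_t,\theta_t-\theta^*\rangle+\eta^2\|\bar F_t\|^2.
\]
I would not pair $\bar F_t$ with $\theta_t-\theta^*$ directly. Instead I would use co-hypomonotonicity at the two extrapolated points $y_t,\tilde y_t$ against $\theta^*$:
\[
\langle F(y_t),y_t-\theta^*\rangle\ge-\rho\|F(y_t)\|^2,
\]
and similarly for $\tilde y_t$. Averaging these two inequalities and writing $\theta_t-\theta^* = (y_t-\theta^*)+2\eta u_tF_t$ (and the analogous identity for $\tilde y_t$), the cross term becomes
\[
-2\eta\langle\bar F_t,\theta_t-\theta^*\rangle \le 2\eta\rho\cdot\tfrac12\big(\|F(y_t)\|^2+\|F(\tilde y_t)\|^2\big)-2\eta^2\big(u_t\langle F(y_t),F_t\rangle+(1-u_t)\langle F(\tilde y_t),F_t\rangle\big).
\]
Next I substitute $F(y_t)=F_t+\delta_t$ and $F(\tilde y_t)=F_t+\tilde\delta_t$, with $\|\delta_t\|\le 2L\eta u_t\|F_t\|$ and $\|\tilde\delta_t\|\le 2L\eta(1-u_t)\|F_t\|$. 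The leading term is then $-2\eta^2\|F_t\|^2$. The $\eta^2\|\bar F_t\|^2$ term contributes $+\eta^2\|F_t\|^2$ plus cross terms in $e_t$. The $\rho$ terms give roughly $+2\eta\rho\|F_t\|^2$ plus error terms.

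Collecting everything should yield
\[
\|\theta_{t+1}-\theta^*\|^2\le\|\theta_t-\theta^*\|^2-\eta^2 C\|F_t\|^2.
\]
Telescoping from $t=0$ to $k$ and dividing by $k+1$ then gives the claim, with the theorem's normalization absorbed into the constant.

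The main obstacle is the bookkeeping of the error terms so that they land exactly on $C_{RA+}^{\rho,\eta}$. The factor $\frac34$ suggests one Young's inequality that spends $\frac14\|F_t\|^2$ against the mixed terms $\langle\delta_t,F_t\rangle$. The factors $16L^2(\rho+\eta/2)^2$ and $4L^2\eta(\rho+\eta/2)$ suggest grouping all $\delta$-dependent terms with combined coefficient $(\rho+\eta/2)$: the $\rho$ from co-hypomonotonicity and the $\eta/2$ from $\eta^2\|\bar F_t\|^2$. One then bounds $\|\delta_t\|^2+\|\tilde\delta_t\|^2\le 4L^2\eta^2\|F_t\|^2$ using $u_t^2+(1-u_t)^2\le1$. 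The $-2\rho/\eta$ term comes from the leading part $\rho\|F_t\|^2$ scaled by $2\eta$ and normalized by $\eta^2$.

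If the direct route leaves a term with the wrong sign, I would fall back to the standard EG co-hypomonotone analysis. That analysis adds $\|F_{t+1}\|$-type terms, or uses the potential $\|\theta_t-\theta^*\|^2$ together with $\langle F_t,\theta_t-\theta^*\rangle$. I would still exploit $\|e_t\|\le L\eta\|F_t\|$ so that no expectation over $u_t$ is ever needed.
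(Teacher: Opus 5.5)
Your plan is the paper's proof. It expands $\|\theta_{t+1}-\theta^*\|^2$, applies co-hypomonotonicity at $y_t$ and $\tilde y_t$, and writes $F(y_t)=F_t+\delta_t$, $F(\tilde y_t)=F_t+\tilde\delta_t$. It then collects the cross term $2\eta\langle F_t,(\rho+\eta c_t)\delta_t+(\rho-\eta c_t)\tilde\delta_t\rangle$ with $c_t=\tfrac12-u_t$, spends $\tfrac14\eta^2\|F_t\|^2$ in Young's inequality, and finishes with $\|\delta_t\|+\|\tilde\delta_t\|\le 2L\eta\|F_t\|$ and $u_t^2+(1-u_t)^2\le 1$, all pathwise in $u_t$.

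Your exact quadratic remainder $\eta^2\|e_t\|^2+\eta\rho(\|\delta_t\|^2+\|\tilde\delta_t\|^2)$ is what the paper obtains after its polarization/parallelogram bookkeeping; it then drops $-\tfrac{\eta^2}{4}\|\delta_t-\tilde\delta_t\|^2$. Bounding $\|e_t\|^2\le\tfrac12(\|\delta_t\|^2+\|\tilde\delta_t\|^2)$ therefore lands exactly on $C_{RA+}^{\rho,\eta}$. Your sharper $\|e_t\|\le L\eta\|F_t\|$ would even replace the $2L^2\eta^2$ inside the last term by $L^2\eta^2$.

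The one step you should not wave through is the final one. Telescoping $\|\theta_{t+1}-\theta^*\|^2\le\|\theta_t-\theta^*\|^2-\eta^2C_{RA+}^{\rho,\eta}\|F(\theta_t)\|^2$ gives $\frac{1}{k+1}\sum_{l=0}^k\|F(\theta_l)\|^2\le \|\theta_0-\theta^*\|^2/\big(\eta^2C_{RA+}^{\rho,\eta}(k+1)\big)$. The $\eta^2$ cannot be ``absorbed into the constant'', because $C_{RA+}^{\rho,\eta}$ is an explicitly defined, dimensionless quantity.

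The bound as printed is in fact false without that factor. The paper's proof makes the same silent jump; contrast Theorem~\ref{thm:esramp_cocoercive}, whose constant contains $\eta^2$. Take $F(\theta)=L\theta$, $\rho=0$, $\theta^*=0$, $k=0$: the printed claim requires $L^2\le 1/(\tfrac34-6L^2\eta^2)$. This fails for $L=10$, $\eta=0.01$, where the step-size condition holds with $C_{RA+}^{0,\eta}=0.69$.

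So state the conclusion with $\eta^2C_{RA+}^{\rho,\eta}$ in the denominator. With that correction your direct route goes through, and no fallback argument is needed.
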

The proof is available in Appendix \ref{sec:es_ramp_general}. As we can see by using antithetic sampling, \eqref{eq:rampage+} enjoys a purely deterministic convergence despite being a randomized algorithm.

When $\rho=0$, the operator is monotone. In this case, with some calculations we obtain $C_{RA+}^{0,\eta} = \frac{3}{4} - 6L^2\eta^2$ and the upper bound $\eta < \frac{\sqrt{2}}{4L}$. 

\begin{remark}\label{rem:var}
The benefit of \eqref{eq:rampage+} over the single-sample \eqref{eq:rampage} method is seen by examining both the probabilistic nature and the absolute magnitude of their respective descent multipliers, $C_{RA+}^{\rho,\eta}$ and $C_{RA}^{\rho,\eta}$ (see Theorem \ref{thm:es_rmp_convergence} in Appendix \ref{sec:rampage}). A key distinction lies in the realization dependence of the trajectory. The bounds derived for \eqref{eq:rampage} are valid in expectation. Conversely, by perfectly correlating the antithetic exploration samples, \eqref{eq:rampage+} removes the variance completely.

Beyond the transition from expected to deterministic convergence, \eqref{eq:rampage+} enjoys a faster descent. This is immediately evident in the baseline constants: \eqref{eq:rampage+} establishes a primary baseline descent of $3/4$, representing a $50\%$ improvement over the $1/2$ baseline extracted by the single-sample \eqref{eq:rampage} formulation. To concretely quantify this acceleration, we evaluate both multipliers in the pure monotone limit where $\rho = 0$. Under this regime, the \eqref{eq:rampage} multiplier reduces to $C_{RA}^{0,\eta} = \frac{1}{2} - 4L^2\eta^2$, while the \eqref{eq:rampage+} multiplier evaluates to $C_{RA+}^{0,\eta} = \frac{3}{4} - 6L^2\eta^2$. Both dictate the maximum step size threshold $\eta < \frac{1}{2\sqrt{2}L}$ to ensure absolute positivity. However, for any fixed step size selected within this valid domain, $C_{RA+}^{0,\eta} - C_{RA}^{0,\eta} = \frac{1}{4} - 2L^2\eta^2 > 0$. 
\end{remark}

Finally, we extend the analysis to \eqref{eq:rf} with monotone and $\alpha$-symmetric $(L_0, L_1)$-Lipschitz operators.
\begin{theorem} \label{thm:rampage_plus_alpha_monotone}
Let $F: \mathbb{R}^p \to \mathbb{R}^p$ be a monotone and $\alpha$-symmetric $(L_0, L_1)$-Lipschitz operator with $\alpha \in (0, 1)$. Assume $\mathrm{zer}(F) \neq \emptyset$. Let $\{\theta_t\}_{t \ge 0}$ denote the sequence generated by \eqref{eq:rampage+}. If the step size is set to $\eta_t = \frac{\nu}{K_0 + C_\alpha \|F(\theta_t)\|^\alpha}$ with $C_\alpha = K_1 + 2^{\frac{\alpha}{1-\alpha}} K_2$ and an appropriate absolute constant $\nu > 0$ such that
\begin{equation}
\nu \le \min \left\{ \frac{\sqrt{6} C_\alpha}{16(C_\alpha + K_1)}, \ C_\alpha \left( \frac{\sqrt{6}}{16(C_\alpha - K_1)} \right)^{1-\alpha} \right\},
\end{equation}
then for any $\theta^* \in \mathrm{zer}(F)$,
\begin{equation}
\min_{0 \le l \le k} \mathbb{E}\left[ \eta_l^2 \|F(\theta_l)\|^2 \right] \le \frac{1}{k+1} \sum_{l=0}^k \mathbb{E}\left[ \eta_l^2 \|F(\theta_l)\|^2 \right] \le \frac{4\|\theta_0 - \theta^*\|^2}{k+1}.
\end{equation}
Consequently, the subsequence of best iterates $l^\ast_k$ enjoys $\|F(\theta_{l^\ast_k})\| \xrightarrow{p} 0$.
\end{theorem}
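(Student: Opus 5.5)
We follow the standard Lyapunov argument for extragradient-type methods, with the distance $\|\theta_t-\theta^*\|^2$ as potential and a local Lipschitz constant supplied by Proposition \ref{prop:equiv_formulation}. Write $F_t=F(\theta_t)$. Expanding $\theta_{t+1}=\theta_t-\eta_t\bar F_t$ gives
\begin{equation*}
\|\theta_{t+1}-\theta^*\|^2=\|\theta_t-\theta^*\|^2-2\eta_t\langle \bar F_t,\theta_t-\theta^*\rangle+\eta_t^2\|\bar F_t\|^2 .
\end{equation*}
Since $\theta_t=y_t+2\eta_t u_tF_t=\tilde y_t+2\eta_t\tilde u_tF_t$, monotonicity at $y_t$ and $\tilde y_t$ with $F(\theta^*)=0$ yields $\langle F(y_t),\theta_t-\theta^*\rangle\ge 2\eta_t u_t\langle F(y_t),F_t\rangle$, and similarly for $\tilde y_t$. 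Averaging, the cross term is at most $-2\eta_t^2\langle u_tF(y_t)+\tilde u_tF(\tilde y_t),F_t\rangle$. Now write $F(y_t)=F_t+e_t$ and $F(\tilde y_t)=F_t+\tilde e_t$. Because $u_t+\tilde u_t=1$, the leading parts combine to $-2\eta_t^2\|F_t\|^2+\eta_t^2\|F_t\|^2$ after adding $\eta_t^2\|\bar F_t\|^2=\eta_t^2\|F_t+\tfrac12(e_t+\tilde e_t)\|^2$. The remaining terms are inner products of $F_t$ with $e_t,\tilde e_t$, plus $\|e_t\|^2,\|\tilde e_t\|^2$. Young's inequality reduces everything to
\begin{equation*}
\|\theta_{t+1}-\theta^*\|^2\le\|\theta_t-\theta^*\|^2-\eta_t^2\Big(c_1\|F_t\|^2-c_2\big(\|e_t\|^2+\|\tilde e_t\|^2\big)\Big)
\end{equation*}
for absolute constants $c_1,c_2$. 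This is the same computation that produced the $3/4$ baseline in Theorem \ref{thm:es_ramp_general}, with $\rho=0$.

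The key step is to bound $\|e_t\|$ and $\|\tilde e_t\|$ by a small multiple of $\|F_t\|$. Apply Proposition \ref{prop:equiv_formulation} with $\theta=\theta_t$ and $\theta'=y_t$, and use $\|\theta_t-y_t\|\le 2\eta_t\|F_t\|$:
\begin{equation*}
\|e_t\|\le\Big(K_0+K_1\|F_t\|^\alpha+K_2(2\eta_t\|F_t\|)^{\frac{\alpha}{1-\alpha}}\Big)2\eta_t\|F_t\| .
\end{equation*}
With $\eta_t=\nu/(K_0+C_\alpha\|F_t\|^\alpha)$, the quantity $\eta_t\|F_t\|$ is at most $\nu\|F_t\|^{1-\alpha}/C_\alpha$. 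So $(2\eta_t\|F_t\|)^{\alpha/(1-\alpha)}\le 2^{\alpha/(1-\alpha)}(\nu/C_\alpha)^{\alpha/(1-\alpha)}\|F_t\|^\alpha$. This is exactly why $C_\alpha$ contains the term $2^{\alpha/(1-\alpha)}K_2$.

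The goal is then $\eta_t(K_0+K_1\|F_t\|^\alpha+\dots)\le$ a small constant, say $\sqrt6/16$, so that $c_2\|e_t\|^2\le \tfrac{c_1}{2}\|F_t\|^2$. This is the step I expect to be the main obstacle. The ratio $(K_0+K_1 s+2^{\alpha/(1-\alpha)}K_2(\nu/C_\alpha)^{\alpha/(1-\alpha)}s)/(K_0+C_\alpha s)$ must be controlled uniformly in $s=\|F_t\|^\alpha$. I would split into the regime $\nu\le C_\alpha$, where the power $(\nu/C_\alpha)^{\alpha/(1-\alpha)}\le1$ and the ratio is at most $(C_\alpha+K_1)/C_\alpha$, and a complementary regime treated via the power $1-\alpha$. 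Together these produce the two terms in the $\min$ defining $\nu$. Matching the exact constants is bookkeeping, and I would adjust the Young splits so the threshold comes out as $\sqrt6/16$.

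Once $\|e_t\|,\|\tilde e_t\|$ are controlled, the recursion reads $\|\theta_{t+1}-\theta^*\|^2\le\|\theta_t-\theta^*\|^2-\tfrac14\eta_t^2\|F_t\|^2$. This holds pathwise for every realization of $u_t$; since $\eta_t$ is random through $\theta_t$, we take expectations. Telescoping from $0$ to $k$ gives $\sum_l\mathbb E[\eta_l^2\|F_l\|^2]\le4\|\theta_0-\theta^*\|^2$, and the min is bounded by the average.

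For the convergence-in-probability claim, note that the map $x\mapsto \nu x/(K_0+C_\alpha x^\alpha)$ is increasing with inverse continuous at $0$, because $\alpha<1$. Hence $\eta_l\|F_l\|\to0$ forces $\|F_l\|\to0$. Apply Markov's inequality to $\eta_{l^*_k}^2\|F_{l^*_k}\|^2$, whose expectation is $O(1/k)$, and conclude $\|F(\theta_{l^*_k})\|\xrightarrow{p}0$.
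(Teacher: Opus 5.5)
Your skeleton matches the paper's. After monotonicity at $y_t,\tilde y_t$ you get exactly $\|\theta_{t+1}-\theta^*\|^2\le\|\theta_t-\theta^*\|^2-\eta_t^2\|F_t\|^2+2\eta_t^2c_t\langle F_t,e_t-\tilde e_t\rangle+\tfrac{\eta_t^2}{4}\|e_t+\tilde e_t\|^2$ with $c_t=\tfrac12-u_t$. The paper's Young split turns this into your form with $c_1=\tfrac12$, $c_2=1$. The real difference is that you bound $e_t,\tilde e_t$ pathwise (via $u_t\le1$), while the paper takes $\mathbb{E}_t$ over $u_t$. The step you deferred as bookkeeping is exactly where this matters, and as written it does not go through.

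The hypothesis on $\nu$ does not give $Q_t:=\eta_t\big(K_0+K_1\|F_t\|^\alpha+K_2(2\eta_t\|F_t\|)^{\frac{\alpha}{1-\alpha}}\big)\le\sqrt6/16$. Each entry of the $\min$ controls one piece at level $\sqrt6/16$:
\begin{itemize}
\item $\eta_t(K_0+K_1\|F_t\|^\alpha)\le\nu(1+K_1/C_\alpha)\le\sqrt6/16$;
\item $\eta_tK_2(2\eta_t\|F_t\|)^{\frac{\alpha}{1-\alpha}}\le(C_\alpha-K_1)(\nu/C_\alpha)^{\frac{1}{1-\alpha}}\le\sqrt6/16$.
\end{itemize}
So the paper's budget is $Q_t\le\sqrt6/8=\sqrt{3/32}$. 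Your regime split does not bring this down to $\sqrt6/16$. In the complementary regime your ratio gives $Q_t\le\nu K_1/C_\alpha+(C_\alpha-K_1)(\nu/C_\alpha)^{\frac1{1-\alpha}}$, which the two constraints control only up to $\tfrac{\sqrt6}{32}+\tfrac{\sqrt6}{16}=\tfrac{3\sqrt6}{32}$. (In your first regime the ratio is actually at most $1$, since $K_1+(C_\alpha-K_1)(\nu/C_\alpha)^{\frac{\alpha}{1-\alpha}}\le C_\alpha$.)

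With only $Q_t\le\sqrt{3/32}$, the pathwise estimate $\|e_t\|^2+\|\tilde e_t\|^2\le4Q_t^2(u_t^2+\tilde u_t^2)\|F_t\|^2\le4Q_t^2\|F_t\|^2$ leaves a descent coefficient of $\tfrac12-\tfrac38=\tfrac18$. The computation of Theorem~\ref{thm:es_ramp_general} that you invoke gives $\tfrac34-6Q_t^2=\tfrac3{16}$. Either way you would prove the bound with $8$ (or $16/3$) in place of $4$. The paper reaches exactly $\tfrac14$ because averaging over $u_t$ supplies three things: $\mathbb{E}[u_t^2]=\tfrac13$, even smaller fractional moments for the $K_2$ term, and $\mathbb{E}_t\|e_t\|^2=\mathbb{E}_t\|\tilde e_t\|^2$. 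Together these cap the penalty at $\tfrac83Q_t^2\eta_t^2\|F_t\|^2\le\tfrac14\eta_t^2\|F_t\|^2$. The $\nu$ condition is calibrated to the factor $\mathbb{E}[u^2+(1-u)^2]=\tfrac23$, which your worst case $u_t\in\{0,1\}$ gives up.

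The pathwise route can be repaired in either of two ways, and it is then worth having.
\begin{itemize}
\item Finish your ratio argument sharply. The ratio of affine functions of $s$ is at most $\max\{1,[K_1+(C_\alpha-K_1)(\nu/C_\alpha)^{\frac{\alpha}{1-\alpha}}]/C_\alpha\}$. Using $K_1\le C_\alpha$, the hypothesis gives $Q_t\le\max\{\nu,\ \nu K_1/C_\alpha+(C_\alpha-K_1)(\nu/C_\alpha)^{\frac1{1-\alpha}}\}\le\tfrac{3\sqrt6}{32}<\tfrac14$. Then $\tfrac12-4Q_t^2\ge\tfrac{37}{128}>\tfrac14$.
\item Avoid squaring the deviations. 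Since $2|c_t|\le1$ and $\|e_t\|+\|\tilde e_t\|\le2Q_t(u_t+\tilde u_t)\|F_t\|=2Q_t\|F_t\|$, Cauchy--Schwarz bounds the cross term by $2Q_t\|F_t\|^2$ and $\tfrac14\|e_t+\tilde e_t\|^2$ by $Q_t^2\|F_t\|^2$. This gives descent $1-2Q_t-Q_t^2>0.29$ even at $Q_t=\sqrt{3/32}$.
\end{itemize}
Either repair yields $\|\theta_{t+1}-\theta^*\|^2\le\|\theta_t-\theta^*\|^2-\tfrac14\eta_t^2\|F_t\|^2$ for every realization of the $u_t$'s. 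Hence $\min_{l\le k}\eta_l^2\|F(\theta_l)\|^2\le4\|\theta_0-\theta^*\|^2/(k+1)$ holds deterministically, which implies the stated expectation bound. This is stronger than what the paper's averaging proof delivers, and it matches the deterministic guarantees the paper obtains for RAMPAGE+ elsewhere. The paper's averaging, in exchange, tolerates the cruder additive bookkeeping of $Q_t$.
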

The proof is available in Appendix \ref{sec:rampage_plus_alpha_monotone}. Note that when $L_1 = 0$ and $L_0 = L$, the operator is $L$-Lipschitz in the regular sense. By Proposition \ref{prop:equiv_formulation}, $K_1 = K_2 = 0$ and $K_0 =L$. Thus, $C_\alpha = 0$ and after some calculations we find that $\eta \le \frac{\sqrt{6}}{16 L}.$ Thus, compared to $\eta < \frac{\sqrt{2}}{4L}$ which we established for Lipschitz operators directly in Theorem \ref{thm:es_ramp_general}, the upperbound is slightly more restrictive. In the general case, we observe that the restriction on $\nu$ is relaxed compared to the result for \eqref{eq:rampage} (see Theorem \ref{thm:rampage_alpha_monotone} in Appendix \ref{sec:rampage}), as expected. 
\begin{remark}\label{rem:alpha_rampage}
 Note that for $\alpha \in (0,0.5]$, the function $M(x) = \left( \frac{\nu x}{K_0 + C_\alpha x^\alpha} \right)^2$ with $x = \|F(\theta_l)\| \geq 0$ is convex such that by Jensen's inequality, $M(\E[\|F(\theta_l)\|]) \leq \mathbb{E}[M(\|F(\theta_l)\|)]$. As $M$ is monotone and invertible, using the result of Theorem \ref{thm:rampage_plus_alpha_monotone} we obtain
\begin{equation}
 \min_{0 \le l \le k} \mathbb{E}[\|F(\theta_l)\|] \le M^{-1}\left(\frac{4\|\theta_0 - \theta^*\|^2}{k+1}\right).
\end{equation}
The inverse $ M^{-1}$, despite being unique, does not generally have a closed-form expression. For $\alpha = 0.5$, however, the closed-form expression amounts to $M^{-1}(y) = \left( \frac{C_{1/2} \sqrt{y} + \sqrt{C_{1/2}^2 y + 4\nu K_0 \sqrt{y}}}{2\nu} \right)^2$, and we obtain the bound by using $(p+\sqrt{p^2+q^2})^2=4p^2+2q^2-(p-\sqrt{p^2+q^2})^2\leq 4p^2+2q^2$
\begin{equation}
  \min_{0 \le l \le k} \mathbb{E}[\|F(\theta_l)\|] \le \frac{4 C_{1/2}^2\|\theta_0 - \theta^*\|^2}{\nu^{2}(k+1)} + \frac{2 K_0}{\nu} \sqrt{\frac{4\|\theta_0 - \theta^*\|^2}{k+1}}.
\end{equation}
\end{remark}
\subsection{Monotone Variational Inequality Problems and SS-RAMPAGE+}
Transitioning from unconstrained equilibrium problems to constrained \eqref{eq:vi} introduces an obstacle in the proof. The asymmetrical scaling in \eqref{eq:rampage+} leads to ineffective bounds when the trajectory interacts with the boundary $\partial \mathcal{X}$. Thus, to guarantee expected improvement, the randomized scaling must be symmetrically coupled to both the intermediate exploration projection and the final update projection. This symmetric application ensures that the polarization identities used in the proof generated by the intermediate state cancel the expansive distance metrics induced by the outer projection mapping.
Thus, we propose the following Symmetrically Scaled (SS) variants. At each iteration $t \ge 0$, given a base iterate $\theta_t \in \mathcal{X}$ and a step size $\eta > 0$, we draw a single uniform random variable $u_t \sim \mathrm{Unif}([0, 1])$ and execute
\begin{equation}\label{eq:ss-rampage}
\tag{SS-RAMPAGE}
\begin{aligned}
y_t = \Pi_{\mathcal{X}} \big( \theta_t - 2\eta u_t F(\theta_t) \big), \quad
\theta_{t+1} = \Pi_{\mathcal{X}} \big( \theta_t - 2\eta u_t F(y_t) \big).
\end{aligned}
\end{equation}
Because $\mathcal{X}$ is a closed and convex set, the projection operator ensures the trajectory maintains feasibility, guaranteeing $y_t, \theta_{t+1} \in \mathcal{X}$. 

To leverage antithetic sampling on the other hand, we draw $u_t \sim \mathrm{Unif}([0,1])$ and set $\tilde{u}_t=1-u_t$ to define the antithetic intermediate states, followed by the averaged primary update
\begin{equation}\label{eq:ss-rampage+}
\tag{SS-RAMPAGE+}
\begin{aligned}
y_t = \Pi_{\mathcal{X}}\big(\theta_t - 2\eta u_t F(\theta_t)\big), \;
\tilde{y}_t = \Pi_{\mathcal{X}}\big(\theta_t - 2\eta \tilde{u}_t F(\theta_t)\big), \;
\theta_{t+1} = \Pi_{\mathcal{X}}\big(\theta_t - \eta u_t F(y_t) - \eta \tilde{u}_t F(\tilde{y}_t)\big).
\end{aligned}
\end{equation}
Note that when $u_t = \tilde{u}_t = 1/2$, both methods readily reduce to the constrained version of \eqref{eq:eg}, but the symmetrically scaled methods differ from their unconstrained updates when $\mathcal{X}=\mathbb{R}^p$. 

We now state the convergence of \eqref{eq:ss-rampage+} while that of \eqref{eq:ss-rampage} is presented in Appendix \ref{sec:rampage}.
\begin{theorem} \label{thm:cp_ramp_monotone}
Let $F: \mathbb{R}^p \to \mathbb{R}^p$ be an $L$-Lipschitz continuous and monotone operator. Assume the Variational Inequality admits a solution $\theta^* \in \mathcal{X}$. Let $\{\theta_t\}_{t \ge 0}$ denote the sequence generated by \eqref{eq:ss-rampage+}. If the step size satisfies $0 < \eta < \frac{1}{2L}$, then
\begin{equation}
\min_{0 \le l \le k} \left( \frac{\|\theta_l - y_l\|^2 + \|\theta_l - \tilde{y}_l\|^2}{2} \right) \le \frac{1}{2(k+1)} \sum_{l=0}^k \left( \|\theta_l - y_l\|^2 + \|\theta_l - \tilde{y}_l\|^2 \right) \le \frac{\|\theta_0 - \theta^*\|^2}{C_{SS+} (k+1)},
\end{equation}
where $C_{SS+} = 1 - 4\eta^2 L^2 > 0$.
\end{theorem}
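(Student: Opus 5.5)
The plan is a standard Korpelevich-type one-step descent inequality for the distance to $\theta^*$, which we then telescope. It holds for every realization of $u_t$, so no expectation is needed. Fix $t$ and write $a = 2\eta u_t$ and $\tilde a = 2\eta\tilde u_t$, so that $a,\tilde a\in[0,2\eta]$. The key observation is that the outer argument of \eqref{eq:ss-rampage+} is an exact average of two extragradient-type half-steps:
\begin{equation*}
\theta_t - \eta u_t F(y_t) - \eta\tilde u_t F(\tilde y_t) = \tfrac12\big(\theta_t - aF(y_t)\big) + \tfrac12\big(\theta_t - \tilde aF(\tilde y_t)\big).
\end{equation*}
Each half-step uses the same scale as the corresponding inner projection. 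This symmetric scaling is what makes the cancellations below work.

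\textbf{Step 1 (outer projection).} By the obtuse-angle property of $\Pi_{\mathcal{X}}$ with test point $\theta^*\in\mathcal{X}$,
\begin{equation*}
\|\theta_{t+1}-\theta^*\|^2 \le \|\theta_t-\theta^*\|^2 - \|\theta_{t+1}-\theta_t\|^2 - \langle aF(y_t)+\tilde aF(\tilde y_t),\, \theta_{t+1}-\theta^*\rangle .
\end{equation*}
For each pair $(a,y_t)$ and $(\tilde a,\tilde y_t)$ we split $\theta_{t+1}-\theta^* = (y_t-\theta^*) + (\theta_{t+1}-y_t)$. Monotonicity together with \eqref{eq:vi} gives $\langle F(y_t),y_t-\theta^*\rangle \ge \langle F(\theta^*),y_t-\theta^*\rangle\ge 0$, so that piece can be dropped.

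\textbf{Step 2 (inner projection and Lipschitzness).} Write $aF(y_t) = aF(\theta_t) + a(F(y_t)-F(\theta_t))$. The projection characterization of $y_t$, tested at $\theta_{t+1}\in\mathcal{X}$, gives
\begin{equation*}
\langle aF(\theta_t),\theta_{t+1}-y_t\rangle \ge \langle \theta_t-y_t,\theta_{t+1}-y_t\rangle = \tfrac12\big(\|\theta_t-y_t\|^2+\|\theta_{t+1}-y_t\|^2-\|\theta_{t+1}-\theta_t\|^2\big).
\end{equation*}
For the error term we use $L$-Lipschitzness and the Young split
\begin{equation*}
aL\|y_t-\theta_t\|\|\theta_{t+1}-y_t\| \le \tfrac12\big(a^2L^2\|y_t-\theta_t\|^2+\|\theta_{t+1}-y_t\|^2\big).
\end{equation*}
This particular split is chosen so that the $\|\theta_{t+1}-y_t\|^2$ terms cancel exactly.

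Summing the two antithetic branches, the two $+\tfrac12\|\theta_{t+1}-\theta_t\|^2$ contributions cancel the $-\|\theta_{t+1}-\theta_t\|^2$ from Step 1. We are left with
\begin{equation*}
\|\theta_{t+1}-\theta^*\|^2 \le \|\theta_t-\theta^*\|^2 - \tfrac{1-a^2L^2}{2}\|\theta_t-y_t\|^2 - \tfrac{1-\tilde a^2L^2}{2}\|\theta_t-\tilde y_t\|^2 .
\end{equation*}
Since $a,\tilde a\le 2\eta$, both coefficients are at least $C_{SS+}/2$ with $C_{SS+}=1-4\eta^2L^2$, which is positive because $\eta<\frac{1}{2L}$.

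\textbf{Step 3 (telescoping).} Sum over $l=0,\dots,k$ and drop $\|\theta_{k+1}-\theta^*\|^2\ge 0$. This gives $\frac{C_{SS+}}{2}\sum_{l}\big(\|\theta_l-y_l\|^2+\|\theta_l-\tilde y_l\|^2\big)\le\|\theta_0-\theta^*\|^2$. Dividing by $k+1$ and bounding the minimum by the average yields the stated bound.

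The main obstacle is the bookkeeping in Step 2. We must verify that the two half-weighted branches reproduce exactly one copy of $\|\theta_{t+1}-\theta_t\|^2$, so that no uncontrolled term survives. We must also choose the Young split (as $a^2L^2$ versus $1$ rather than symmetric) so that $\|\theta_{t+1}-y_t\|^2$ cancels rather than merely being bounded. Both facts rely on the inner and outer scales in \eqref{eq:ss-rampage+} coinciding branchwise. This is precisely where the unscaled \eqref{eq:rampage+} with projections would break down.
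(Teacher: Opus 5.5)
Your proposal is correct and follows essentially the same route as the paper's proof. Both arguments use the outer projection inequality, drop $\langle F(y_t), y_t-\theta^*\rangle$ by monotonicity, test each inner projection at $\theta_{t+1}$, and use a Young split that cancels $\|\theta_{t+1}-y_t\|^2$, so that the two halves of $\|\theta_{t+1}-\theta_t\|^2$ neutralize the outer term and give the pathwise recursion with coefficients $\tfrac12(1-4\eta^2u_t^2L^2)$ and $\tfrac12(1-4\eta^2\tilde u_t^2L^2)$. The only difference is cosmetic: you apply Lipschitzness before Young's inequality rather than after, which yields the identical bound.
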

The proof is available in Appendix \ref{sec:cp_ramp_monotone}. We also stated a result in terms of the restricted gap.
\begin{corollary} \label{cor:best_iterate_gap}
Let the conditions of Theorem \ref{thm:cp_ramp_monotone} hold. Assume that the sequences $\{\theta_l\}_{l=0}^k$ and $\{F(\theta_l)\}_{l=0}^k$ are bounded such that $\sup_{l \ge 0, v \in \mathcal{B}} \|\theta_l - v\| \le D_{\mathcal{B}}$ and $\sup_{l \ge 0} \|F(\theta_l)\| \le G_{\mathcal{B}}$. Let $l^* = \mathrm{argmin}_{0 \le l \le k} \left( \|\theta_l - y_l\|^2 + \|\theta_l - \tilde{y}_l\|^2 \right)$ denote the index of the best iterate generated by \eqref{eq:ss-rampage+} up to iteration $k$. Then, by Lemma \ref{lem:gap}
\begin{equation}
\operatorname{Gap}_{\mathcal{B}}(\theta_{l^*}) \le \left( G_{\mathcal{B}} + \frac{D_{\mathcal{B}}}{\eta} \right) \frac{\sqrt{2}\|\theta_0 - \theta^*\|}{\sqrt{\big(1 - 4\eta^2 L^2\big)(k+1)}}.
\end{equation}
\end{corollary}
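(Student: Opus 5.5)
The plan is to combine Lemma \ref{lem:gap}, applied at $\theta_{l^*}$, with the residual bound of Theorem \ref{thm:cp_ramp_monotone}. The only real work is to bridge two different step lengths. Lemma \ref{lem:gap} involves the residual $r_l=\theta_l-\Pi_{\mathcal{X}}(\theta_l-\eta F(\theta_l))$ with step $\eta$. Theorem \ref{thm:cp_ramp_monotone} instead controls $\|\theta_l-y_l\|$ and $\|\theta_l-\tilde y_l\|$, whose projected steps are $2\eta u_l$ and $2\eta\tilde u_l$. Since $u_l+\tilde u_l=1$, at least one of these two step lengths is $\ge \eta$.

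The key tool is the classical monotonicity property of the projected residual (Gafni--Bertsekas). For fixed $\theta\in\mathcal{X}$ and direction $g=F(\theta)$, write $\theta(s)=\Pi_{\mathcal{X}}(\theta-sg)$. Then the map $s\mapsto\|\theta-\theta(s)\|$ is nondecreasing on $s\ge0$.

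I would prove this directly from the variational characterization of the projection. For $0<s_1<s_2$, use $\langle \theta-s_ig-\theta(s_i),\, x-\theta(s_i)\rangle\le 0$ for all $x\in\mathcal{X}$.
\begin{itemize}
\item Take $x=\theta$ in the inequality for $s_1$, and $x=\theta$ in the inequality for $s_2$.
\item Take $x=\theta(s_2)$ in the inequality for $s_1$, and $x=\theta(s_1)$ in the inequality for $s_2$.
\end{itemize}
Combining these with the weights $s_2$ and $s_1$, and applying Cauchy--Schwarz, yields $\|\theta-\theta(s_1)\|\le\|\theta-\theta(s_2)\|$. This is the step I expect to be the main obstacle. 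It is a standard fact, so citing it (e.g.\ Bertsekas' projection lemma) would also suffice.

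With this in hand, fix $l$ and let $s^\star=2\eta\max(u_l,\tilde u_l)\ge\eta$. Monotonicity gives
\begin{equation*}
\|r_l\|\le\max\big(\|\theta_l-y_l\|,\|\theta_l-\tilde y_l\|\big)\le\sqrt{\|\theta_l-y_l\|^2+\|\theta_l-\tilde y_l\|^2}.
\end{equation*}
This holds pathwise for every realization of $u_l$, which is consistent with the deterministic nature of Theorem \ref{thm:cp_ramp_monotone}.

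Next, by the definition of $l^*$ as the minimizer and by Theorem \ref{thm:cp_ramp_monotone},
\begin{equation*}
\|\theta_{l^*}-y_{l^*}\|^2+\|\theta_{l^*}-\tilde y_{l^*}\|^2\le\frac{1}{k+1}\sum_{l=0}^k\left(\|\theta_l-y_l\|^2+\|\theta_l-\tilde y_l\|^2\right)\le\frac{2\|\theta_0-\theta^*\|^2}{(1-4\eta^2L^2)(k+1)}.
\end{equation*}
Taking square roots gives $\|r_{l^*}\|\le \sqrt{2}\|\theta_0-\theta^*\|/\sqrt{(1-4\eta^2L^2)(k+1)}$.

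Finally, I would plug this into Lemma \ref{lem:gap} at $l=l^*$, bounding $\|F(\theta_{l^*})\|\le G_{\mathcal{B}}$ and $\sup_{v\in\mathcal{B}}\|\theta_{l^*}-v\|\le D_{\mathcal{B}}$. This yields exactly the stated bound on $\operatorname{Gap}_{\mathcal{B}}(\theta_{l^*})$.
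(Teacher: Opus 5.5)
Your proof is correct, but it bridges the step-size mismatch differently from the paper. The paper never passes to step $\eta$. It takes $\hat y_{l^*}\in\{y_{l^*},\tilde y_{l^*}\}$ to be the state generated by $\hat u_{l^*}=\max(u_{l^*},\tilde u_{l^*})\ge 1/2$. It then applies Lemma \ref{lem:gap} directly with step $2\eta\hat u_{l^*}$ and residual $\theta_{l^*}-\hat y_{l^*}$; the lemma's argument is valid for any positive step. This produces the factor $\|F(\theta_{l^*})\|+\frac{1}{2\eta\hat u_{l^*}}\sup_{v\in\mathcal B}\|\theta_{l^*}-v\|$, and the paper finishes by noting $\frac{1}{2\eta\hat u_{l^*}}\le\frac1\eta$.

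You instead keep Lemma \ref{lem:gap} exactly as stated, at step $\eta$. You then use the Gafni--Bertsekas monotonicity of $s\mapsto\|\theta-\Pi_{\mathcal X}(\theta-sF(\theta))\|$ to get $\|r_{l^*}\|\le\|\theta_{l^*}-\hat y_{l^*}\|$. Both routes rest on the same observation $2\eta\hat u_{l^*}\ge\eta$. Both also use the same estimate $\|\theta_{l^*}-\hat y_{l^*}\|^2\le\|\theta_{l^*}-y_{l^*}\|^2+\|\theta_{l^*}-\tilde y_{l^*}\|^2$ together with Theorem \ref{thm:cp_ramp_monotone}.

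\textbf{Trade-off.} The paper's version is shorter and needs no auxiliary projection lemma. Yours costs one extra standard lemma, but it yields a by-product: a deterministic $\mathcal O(1/\sqrt{k})$ bound on the natural residual $\|\theta_{l^*}-\Pi_{\mathcal X}(\theta_{l^*}-\eta F(\theta_{l^*}))\|$ at the nominal step $\eta$, which is a standard stationarity measure in its own right.

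\textbf{On the monotonicity lemma.} Only the two cross inequalities are needed; the ones with $x=\theta$ are not required. Set $a=\theta-\theta(s_1)$ and $b=\theta-\theta(s_2)$. Weighting the cross inequalities by $s_2$ and $s_1$ cancels the $g$-terms. Cauchy--Schwarz then gives $(s_2\|a\|-s_1\|b\|)(\|a\|-\|b\|)\le 0$, which forces $\|a\|\le\|b\|$ whenever $s_1<s_2$.
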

See Appendix \ref{sec:best_iterate_gap} for the proof. Once again, we observe that by using antithetic sampling, \eqref{eq:rampage+} enjoys a purely deterministic convergence despite being a randomized algorithm.
\subsection{Application to Min-Max Games}\label{sec:games}
We now apply the \eqref{eq:rampage} to min-max optimization, specifically the unconstrained smooth convex-concave game. We will consider both deterministic and stochastic game settings.
\subsubsection{Deterministic Games}
Let the joint state variable be $\theta = (x, z) \in \mathbb{R}^n \times \mathbb{R}^m$. We consider the classical minimax formulation
\begin{equation} \label{eq:min-max_problem}
\min_{x \in \mathbb{R}^n} \max_{z \in \mathbb{R}^m} f(x, z),
\end{equation}
where $f(x, z)$ is continuously differentiable, convex in $x$, and concave in $z$. This induces the vector field operator $F: \mathbb{R}^{n+m} \to \mathbb{R}^{n+m}$
\begin{equation} \label{eq:operator_def}
F(\theta) = \begin{bmatrix} \nabla_x f(x, z) \\ -\nabla_z f(x, z) \end{bmatrix}.
\end{equation}
By the convexity-concavity of $f$, $F$ is monotone. For any intermediate point $y_t = (x_t^m, z_t^m)$ and reference state $\theta = (x, z)$, we have
\begin{equation} \label{eq:convex_concave_bound}
f(x_t^m, z) - f(x, z_t^m) \le \langle \nabla_x f(x_t^m, z_t^m), x_t^m - x \rangle - \langle \nabla_z f(x_t^m, z_t^m), z_t^m - z \rangle = \langle F(y_t), y_t - \theta \rangle.
\end{equation}
Next, we analyze the proposed \eqref{eq:rampage+} method for the deterministic games.
\begin{theorem} \label{thm:ergodic_esramp}
Let $F$ be the $L$-Lipschitz continuous monotone operator associated with the smooth convex-concave function $f$. Let $\{\theta_t\}_{t=0}^k$ be the primary sequence, and $\{y_t\}_{t=0}^k$, $\{\tilde{y}_t\}_{t=0}^k$ be the antithetic intermediate sequences generated by \eqref{eq:rampage+}. Define the ergodic average over $k$ iterations as
\begin{equation} \label{eq:ramp_ergodic_def}
\hat{\theta}_k = (\hat{x}_k, \hat{z}_k) = \frac{1}{2(k+1)} \sum_{t=0}^k (y_t + \tilde{y}_t).
\end{equation}
If the step size satisfies $0 < \eta \le \frac{\sqrt{3}-1}{2L}$, then for any reference point $\theta = (x, z)$,
\begin{equation} \label{eq:ramp_ergodic_rate}
f(\hat{x}_k, z) - f(x, \hat{z}_k) \le \frac{\|\theta_0 - \theta\|^2}{2\eta (k+1)}.
\end{equation}
\end{theorem}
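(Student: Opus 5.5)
The plan is the standard extragradient ergodic argument, applied pathwise (for every realization of $u_t$), so that no expectation is needed. Fix an arbitrary reference point $\theta=(x,z)$ and write $F_t := F(\theta_t)$. Expanding the update $\theta_{t+1}=\theta_t-\eta\bar F_t$ gives
\begin{equation*}
\|\theta_{t+1}-\theta\|^2=\|\theta_t-\theta\|^2-2\eta\langle \bar F_t,\theta_t-\theta\rangle+\eta^2\|\bar F_t\|^2 .
\end{equation*}
I split the inner product through the intermediate points:
\begin{equation*}
\langle \bar F_t,\theta_t-\theta\rangle=\tfrac12\langle F(y_t),y_t-\theta\rangle+\tfrac12\langle F(\tilde y_t),\tilde y_t-\theta\rangle+\tfrac12\langle F(y_t),\theta_t-y_t\rangle+\tfrac12\langle F(\tilde y_t),\theta_t-\tilde y_t\rangle .
\end{equation*}
The first two terms are the ones that \eqref{eq:convex_concave_bound} converts into game gaps. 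The last two, together with $\eta^2\|\bar F_t\|^2$, form an error term that must be shown to be nonpositive.

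\textbf{Key step: bounding the error term.} Use $\theta_t-y_t=2\eta u_tF_t$ and $\theta_t-\tilde y_t=2\eta\tilde u_tF_t$, and write $F(y_t)=F_t+a_t$ and $F(\tilde y_t)=F_t+b_t$. By Lipschitzness, $\|a_t\|\le 2\eta u_tL\|F_t\|$ and $\|b_t\|\le 2\eta\tilde u_tL\|F_t\|$. Since $u_t+\tilde u_t=1$, the error term becomes
\begin{equation*}
-\eta^2\|F_t\|^2+\eta^2(1-2u_t)\langle a_t,F_t\rangle+\eta^2(1-2\tilde u_t)\langle b_t,F_t\rangle+\tfrac{\eta^2}{4}\|a_t+b_t\|^2 .
\end{equation*}
I bound the pieces as follows.
\begin{itemize}
\item Since $|1-2u_t|=|1-2\tilde u_t|\le 1$ and $u_t+\tilde u_t=1$, the two cross terms together are at most $2\eta^3L\|F_t\|^2$.
\item The last term satisfies $\tfrac14\|a_t+b_t\|^2\le \eta^2L^2(u_t+\tilde u_t)^2\|F_t\|^2=\eta^2L^2\|F_t\|^2$.
\end{itemize}
Altogether the error is at most $\eta^2\|F_t\|^2(-1+2\eta L+\eta^2L^2)$. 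This is nonpositive whenever $\eta L\le\sqrt2-1$, and that condition is implied by $\eta\le\frac{\sqrt3-1}{2L}$.

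This is the step I expect to be the main obstacle: the cross terms must be organized so that the antithetic symmetry $u_t+\tilde u_t=1$ kills all dependence on the random $u_t$. If the simple bound above turned out to be too loose, I would instead apply Young's inequality with a tuned weight to the cross terms. Reproducing the authors' exact constant $(\sqrt3-1)/2$ probably comes from such a Young-type split, but the cruder bound already suffices for the stated step size.

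\textbf{Conclusion.} With the error term nonpositive,
\begin{equation*}
\eta\big(\langle F(y_t),y_t-\theta\rangle+\langle F(\tilde y_t),\tilde y_t-\theta\rangle\big)\le\|\theta_t-\theta\|^2-\|\theta_{t+1}-\theta\|^2 .
\end{equation*}
By \eqref{eq:convex_concave_bound} applied at $y_t$ and at $\tilde y_t$, the left side dominates $\eta$ times the sum of the two game gaps $f(\cdot,z)-f(x,\cdot)$ evaluated at $y_t$ and $\tilde y_t$. Summing over $t=0,\dots,k$ telescopes the right side to at most $\|\theta_0-\theta\|^2$. Finally, divide by $2\eta(k+1)$ and use Jensen's inequality: $f(\cdot,z)$ is convex and $-f(x,\cdot)$ is convex, so their values at the average $\hat\theta_k$ of the $2(k+1)$ points $y_t,\tilde y_t$ are at most the averaged values. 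This gives \eqref{eq:ramp_ergodic_rate}. Because $u_t$ was never averaged out, the bound holds for every realization.
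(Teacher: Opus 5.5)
Your proof is correct and is essentially the paper's argument. The paper splits through the midpoint $\bar y_t=\frac{1}{2}(y_t+\tilde y_t)=\theta_t-\eta F(\theta_t)$ with a polarization identity, but its error term $\frac{\eta}{2}\langle F(y_t)-F(\tilde y_t),y_t-\tilde y_t\rangle+\eta^2\|\bar F_t-F(\theta_t)\|^2-\eta^2\|F(\theta_t)\|^2$ is algebraically identical to yours, and the pathwise telescoping and Jensen steps are the same.

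The only difference is in the bounding. The paper uses $\eta^2\|\bar F_t-F(\theta_t)\|^2\le\frac{\eta^2}{2}(\|a_t\|^2+\|b_t\|^2)\le 2\eta^4L^2\|F(\theta_t)\|^2$, so $(\sqrt{3}-1)/2$ is the root of $2x^2+2x=1$ rather than the result of a Young split. Your bound $\|a_t\|+\|b_t\|\le 2\eta L\|F(\theta_t)\|$, which uses $u_t+\tilde u_t=1$, is sharper and validly enlarges the admissible range to $\eta L\le\sqrt{2}-1$.
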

The proof is available in Appendix \ref{sec:ergodic_esramp}. As we can see by using antithetic sampling, \eqref{eq:rampage+} enjoys a purely deterministic convergence despite being a randomized algorithm. Furthermore, we crucially define the solution as the average of both $y_t$ and $\tilde{y}_t$ to establish the ergodic convergence.
\subsubsection{Stochastic Convex-Concave Games}
In data-driven environments, the exact vector field $F(\theta)$ is computationally inaccessible. Instead, the algorithm queries an estimator $\hat{F}(\theta, \xi)$ driven by a random variable $\xi$ drawn from an underlying data distribution $\mathcal{D}$. We impose standard unbiasedness and bounded variance constraints on this Stochastic First-Order Oracle (SFO) in our study of stochastic min-max games. 
\begin{assumption}\label{ass:sfo}
For any fixed state $\theta \in \mathbb{R}^p$, SFO returns an estimator $\hat{F}(\theta, \xi)$ such that
\begin{equation}
\mathbb{E}_{\xi \sim \mathcal{D}} \big[ \hat{F}(\theta, \xi) \big] = F(\theta), \qquad \mathbb{E}_{\xi \sim \mathcal{D}} \left[ \|\hat{F}(\theta, \xi) - F(\theta)\|^2 \right] \le \sigma^2, \quad \text{for some }\sigma \ge 0.
\end{equation}
\end{assumption}

We now extend our analysis to stochastic games under Assumption \ref{ass:sfo} and present the corresponding result for \eqref{eq:rampage+}.
\begin{theorem} \label{thm:sfo_ergodic_rampage_plus}
Let $F$ be the $L$-Lipschitz continuous monotone operator associated with the smooth convex-concave function $f$. Let $\{\theta_t\}_{t=0}^k$ and the intermediate pairs $\{y_t\}_{t=0}^k, \{\tilde{y}_t\}_{t=0}^k$ be the sequences generated by the SFO \eqref{eq:rampage+} method. Define the ergodic average over $k$ iterations as
\begin{equation} \label{eq:sfo_ramp_ergodic_def}
\hat{\theta}_k = (\hat{x}_k, \hat{z}_k) = \frac{1}{2(k+1)} \sum_{t=0}^k (y_t + \tilde{y}_t).
\end{equation}
If the constant step size satisfies $0 < \eta \le \frac{\sqrt{3}}{2L\sqrt{C_{R+}}}$, where $C_{R+} = \frac{22}{3}$, then for any reference point $\theta = (x, z)$,
\begin{equation}
\mathbb{E} \left[ f(\hat{x}_k, z) - f(x, \hat{z}_k) \right] \le \frac{\|\theta_0 - \theta\|^2}{2\eta (k+1)} + \frac{\eta \left( \frac{3}{2} + \eta^2 L^2 C_{R+} \right)}{2} \sigma^2.
\end{equation}
\end{theorem}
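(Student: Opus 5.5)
We first fix the stochastic version of \eqref{eq:rampage+}. At each iteration the oracle is queried at $\theta_t$, at $y_t$ and at $\tilde y_t$ with independent samples $\xi_t,\xi_t',\tilde\xi_t'$. The iteration is
\[
y_t=\theta_t-2\eta u_t\hat F(\theta_t,\xi_t),\qquad \tilde y_t=\theta_t-2\eta\tilde u_t\hat F(\theta_t,\xi_t),
\]
\[
\theta_{t+1}=\theta_t-\tfrac{\eta}{2}\big(\hat F(y_t,\xi_t')+\hat F(\tilde y_t,\tilde\xi_t')\big).
\]
Write $e_t=\hat F(\theta_t,\xi_t)-F(\theta_t)$, $e_t'=\hat F(y_t,\xi'_t)-F(y_t)$ and $\tilde e_t'$ analogously. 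The plan follows the deterministic proof of Theorem \ref{thm:ergodic_esramp}. For a fixed reference $\theta$, we aim for a one-step inequality of the form
\[
\eta\Big\langle \tfrac12\big(F(y_t)+F(\tilde y_t)\big),\ \tfrac12(y_t+\tilde y_t)-\theta\Big\rangle\le \tfrac12\|\theta_t-\theta\|^2-\tfrac12\|\theta_{t+1}-\theta\|^2+(\text{noise terms}).
\]
By \eqref{eq:convex_concave_bound}, convexity–concavity and Jensen, the left-hand side, summed over $t$ and divided by $\eta(k+1)$, upper bounds $f(\hat x_k,z)-f(x,\hat z_k)$. Telescoping then gives the $\|\theta_0-\theta\|^2/(2\eta(k+1))$ term.

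\textbf{Step 1: the three-point identity.} Expand $\|\theta_{t+1}-\theta\|^2=\|\theta_t-\theta\|^2-2\eta\langle\bar{\hat F}_t,\theta_t-\theta\rangle+\eta^2\|\bar{\hat F}_t\|^2$. Split $\theta_t-\theta=(\bar y_t-\theta)+(\theta_t-\bar y_t)$ with $\bar y_t=\tfrac12(y_t+\tilde y_t)$. Since $u_t+\tilde u_t=1$, we have $\theta_t-\bar y_t=\eta\hat F(\theta_t,\xi_t)$; this is exactly where the antithetic pairing pays off, because the midpoint of the pair is an EG step. The cross term $-2\eta^2\langle\bar{\hat F}_t,\hat F(\theta_t,\xi_t)\rangle$ combines with $\eta^2\|\bar{\hat F}_t\|^2$ via polarization into
\[
\eta^2\|\bar{\hat F}_t-\hat F(\theta_t,\xi_t)\|^2-\eta^2\|\hat F(\theta_t,\xi_t)\|^2 .
\]

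\textbf{Step 2: pairing the inner product correctly.} We need $\langle\bar F_t,\bar y_t-\theta\rangle$, but the convex–concave bound naturally controls $\tfrac12\langle F(y_t),y_t-\theta\rangle+\tfrac12\langle F(\tilde y_t),\tilde y_t-\theta\rangle$. The discrepancy is
\[
\tfrac14\langle F(y_t)-F(\tilde y_t),\,y_t-\tilde y_t\rangle ,
\]
which is nonnegative by monotonicity and therefore has the helpful sign. In fact we can use the pairwise bound directly, because $f$ evaluated at the average is controlled by the average of the bounds.

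\textbf{Step 3: the Lipschitz estimate.} Bound $\|\bar F_t-F(\theta_t)\|\le \tfrac L2(\|y_t-\theta_t\|+\|\tilde y_t-\theta_t\|)=\eta L\|\hat F(\theta_t,\xi_t)\|$. Then
\[
\|\bar{\hat F}_t-\hat F(\theta_t,\xi_t)\|^2\le 3\eta^2L^2\|\hat F(\theta_t)\|^2+3\|\bar e'_t\|^2+3\|e_t\|^2 ,
\]
using Young's inequality with three terms. The combination $3\eta^2L^2\|\hat F\|^2-\|\hat F\|^2$ is absorbed when $\eta^2L^2$ is small, which is the origin of the step-size condition $\eta\le\sqrt3/(2L\sqrt{C_{R+}})$.

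\textbf{Step 4: the noise terms, which are the main obstacle.} The term $-2\eta\langle\bar e'_t,\bar y_t-\theta\rangle$ has zero conditional mean, since $\bar y_t$ is measurable before $\xi'_t$ is drawn. There is a subtlety: $\theta$ may be chosen after the run, as in a supremum gap. For a fixed $\theta$, as stated, taking expectation suffices. What remains is the correlation between $e_t$ and $F(y_t)$: $y_t$ depends on $\xi_t$, so $\E\langle F(y_t)-F(\theta_t),e_t\rangle\ne0$. I would handle this term by Lipschitzness, via $|\langle F(y_t)-F(\theta_t),e_t\rangle|\le 2\eta u_tL\|\hat F(\theta_t)\|\|e_t\|$, and then Young's inequality. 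Doing so generates the $\eta^2L^2\sigma^2$ contributions and the extra $\hat F$-mass absorbed in Step 3. With $\E\|\bar e'_t\|^2\le\sigma^2/2$ and $\E\|e_t\|^2\le\sigma^2$, the constants should collect into $\tfrac{3}{2}\eta\sigma^2+\eta^3L^2C_{R+}\sigma^2$ per step, with $C_{R+}=22/3$. I expect most of the effort to go into tuning the Young weights so that exactly these constants emerge. Finally, sum over $t$, telescope, divide by $2\eta(k+1)$, and apply Jensen to obtain the stated bound.
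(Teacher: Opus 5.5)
\textbf{Step 2 reverses the sign of the monotone cross term, so the proposal has a genuine gap.}

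\textbf{Why the sign matters.} The same error sits in your plan's claim that the summed left-hand side upper-bounds the gap. Step 1 controls $2\eta\langle\bar F_t,\bar y_t-\theta\rangle$. But \eqref{eq:convex_concave_bound} plus Jensen needs an upper bound on the pairwise quantity, and
\[
\langle F(y_t),y_t-\theta\rangle+\langle F(\tilde y_t),\tilde y_t-\theta\rangle
=2\langle\bar F_t,\bar y_t-\theta\rangle+\tfrac12\langle F(y_t)-F(\tilde y_t),\,y_t-\tilde y_t\rangle .
\]
Monotonicity makes the last term nonnegative, so the quantity you need is \emph{larger} than the one you control. The discrepancy therefore has the harmful sign: it must be added to the right-hand side, not dropped.

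\textbf{It cannot be dropped in general.} Take $f(x,z)=g(x)$ with $g$ strictly convex and $\theta=\bar y_t$. Then $\langle\bar F_t,\bar y_t-\theta\rangle=0$, while the averaged gap equals $\tfrac12\big(g(x_{y,t})+g(x_{\tilde y,t})\big)-g\big(\tfrac12(x_{y,t}+x_{\tilde y,t})\big)>0$ whenever $x_{y,t}\neq x_{\tilde y,t}$.

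\textbf{What it costs if paid as in the deterministic proof.} This is exactly the term the paper keeps in \eqref{eq:ramp_gap_identity} and bounds by $\frac{\eta L}{2}\|y_t-\tilde y_t\|^2$. That is why Theorem \ref{thm:ergodic_esramp} carries the first-order condition $\eta\le\frac{\sqrt3-1}{2L}$. In the stochastic case $y_t-\tilde y_t=2\eta(1-2u_t)\hat F(\theta_t,\xi_t)$, so the same treatment costs $2\eta^3L(1-2u_t)^2\|\hat F(\theta_t,\xi_t)\|^2$. Its conditional expectation contains $\frac23\eta^3L\,\E\|e_t\|^2$. This puts an $\mathcal{O}(\eta^2L\sigma^2)$ term into the final bound, which is not of the stated form $\frac{\eta}{2}(\frac32+\eta^2L^2C_{R+})\sigma^2$. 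Steps 3--4 never contain this term, so no retuning of their Young weights can recover the theorem.

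\textbf{How the paper's stochastic proof avoids this.} Write $\hat F_t=\hat F(\theta_t,\xi_t)$.
\begin{itemize}
\item It first averages out the two update-oracle samples, at cost $\frac{\eta^2\sigma^2}{2}$.
\item It does not use the midpoint. It splits each half around its own extrapolation point: $\theta_t-\theta=(\theta_t-y_t)+(y_t-\theta)$ for the $F(y_t)$ half, and likewise with $\tilde y_t$. Then $-\eta\langle F(y_t),y_t-\theta\rangle-\eta\langle F(\tilde y_t),\tilde y_t-\theta\rangle$ appears exactly.
\item The remainder $-2\eta^2u_t\langle F(y_t),\hat F_t\rangle-2\eta^2\tilde u_t\langle F(\tilde y_t),\hat F_t\rangle+\eta^2\|\bar F_t\|^2$ is polarized and expanded around $F(\theta_t)$.
\item The discrepancy you discarded resurfaces there as $2\eta^2(\frac12-u_t)\langle F(\theta_t),A-B\rangle$, with $A=F(y_t)-F(\theta_t)$ and $B=F(\tilde y_t)-F(\theta_t)$.
\item Young's inequality against $\frac14\eta^2\|F(\theta_t)\|^2$ makes this second order in $\eta L$ without touching the leading noise constant, because $F(\theta_t)$ is deterministic given $\theta_t$. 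This produces the $\frac34$. After $\|A\|^2\le4\eta^2L^2u_t^2\|\hat F_t\|^2$ and averaging over $u_t$, it also produces $C_{R+}=\frac{22}{3}$.
\end{itemize}

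\textbf{Secondary issue in Step 3.} Your factor-$3$ Young split would give a leading per-step noise constant of $\frac72$, not $\frac32$. To get $\frac32$:
\begin{itemize}
\item Use that $\bar e'_t$ is conditionally mean-zero given $(\theta_t,u_t,\xi_t)$, so it contributes exactly $\E\|\bar e'_t\|^2\le\frac{\sigma^2}{2}$.
\item Split only $(\bar F_t-F(\theta_t))-e_t$, with factor $2$.
\item Offset the result against the $-\eta^2\E\|e_t\|^2$ hidden in $-\eta^2\E\|\hat F_t\|^2$.
\end{itemize}
The correlation term you isolate in Step 4 is already inside that split, so it is not a separate contribution.
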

The proof is available in Appendix \ref{sec:sfo_ergodic_rampage_plus}. Compared to \eqref{eq:rampage}'s dominant noise floor (see Theorem \ref{thm:sfo_ergodic_rampage} in Appendix \ref{sec:rampage}), i.e., $\eta \sigma^2$, the dominant noise floor of \eqref{eq:rampage+} is reduced to $0.75\eta \sigma^2$, further demonstrating the variance reduction properties of leveraging antithetic sampling.
\section{Conclusion}
In this work, we addressed the discretization bias inherent in EG and its variants when applied to non-linear vector fields. To overcome these integration shortcomings, we introduced RAMPAGE, a randomized unbiased gradient extrapolation scheme, and its variance-reduced counterpart, RAMPAGE+. By leveraging exact antithetic sampling, RAMPAGE+ functions as an unbiased geometric path-integrator that mitigates internal first-order variance, providing theoretical improvements over the base RAMPAGE architecture. We established optimal $\mathcal{O}(1/k)$ best-iterate convergence guarantees for numerous root-finding and constrained VI problems. Notably, our analysis demonstrated that, despite its randomized exploration mechanism, the antithetic coupling in RAMPAGE+ yields purely deterministic bounds in several of the studied settings.
\bibliographystyle{acm}
\bibliography{ref.bib}
\newpage
\appendix

\section*{Appendix Table of Contents}
\addcontentsline{toc}{section}{Appendix}
\markboth{Appendix}{Appendix}
\startcontents[appendix]
\printcontents[appendix]{l}{1}{\setcounter{tocdepth}{3}}
\newpage
\section{Related Work}\label{sec:prior}
\noindent\textbf{Extragradient and Its Variants:}
\eqref{eq:eg}, introduced independently by \cite{Korpelevic1976} and \cite{antipin1976}, has established itself as a foundational technique for addressing saddle-point problems, VIs, and their generalizations to nonlinear inclusions \cite{Bauschke2011,Facchinei2003}. By relying on a two-step procedure evaluating the operator at an extrapolated point, \eqref{eq:eg} mitigates the divergent behavior often exhibited by standard gradient methods in non-co-coercive monotone settings \cite{Facchinei2003,tseng1995linear}. 

Over the decades, \eqref{eq:eg} has spawned a vast literature focusing on reducing its per-iteration complexity and extending its convergence guarantees. Notable among these is Popov's past-extragradient (or optimistic gradient) method \cite{popov1980modification}, which recycles the operator evaluation from the previous step to achieve a single-call per-iteration cost. Forward-backward-forward splitting \cite{tseng2000modified} provides another single-resolvent alternative. Further modifications include projection and contraction methods \cite{he1997class,xiu2001convergence}, subgradient extragradient schemes \cite{censor2011subgradient,censor2012extensions}, and hybrid approximate variants \cite{solodov1999hybrid,solodov1999new}. 

Recently, the resurgence of min-max optimization within machine learning, driven by applications such as GANs \cite{goodfellow2014generative,arjovsky2017wasserstein}, adversarial training \cite{madry2018towards}, and distributionally robust optimization \cite{namkoong2016stochastic,levy2020large} to name a few, has renewed interest in \eqref{eq:eg}. This has led to the development of numerous variants, including adaptive \cite{antonakopoulos2020adaptive}, stochastic \cite{gorbunov2022stochastic,gidel2018variational,mishchenko2020revisiting,hsieh2019convergence}, and decentralized algorithms \cite{beznosikov2022decentralized,wei2021last}.

\noindent\textbf{Beyond Monotonicity and Global Lipschitzness:}
Traditional convergence analyses for \eqref{eq:eg} rely heavily on the assumption that the operator is globally $L$-Lipschitz continuous \cite{Facchinei2003,mokhtari2020unified}. However, modern applications frequently violate this condition. Recent efforts have sought to relax both the monotonicity and Lipschitz requirements. 

For non-monotone problems, \eqref{eq:eg} and its variants have been analyzed under the weak Minty condition \cite{diakonikolas2021efficient,pethick2022escaping,pethick2023solving} and co-hypomonotonicity \cite{combettes2004proximal,luo2022last,tran2023extragradient}, establishing local or sublinear convergence rates. Besides establishing convergence under standard settings, we further study the convergence of the proposed methods under co-hypomonotonicity.

To address the limitations of the global Lipschitz assumption, \cite{zhang2019gradient} introduced the $(L_0, L_1)$-smoothness condition for minimization problems, demonstrating its relevance to training modern neural networks like LSTMs \cite{hochreiter1997long} and Transformers \cite{ahn2023linear}. This was later formalized as the condition $\|\nabla f(x) - \nabla f(y)\| \le (L_0 + L_1\|\nabla f(x)\|)\|x-y\|$ \cite{chen2023generalized,vankov2024optimizing,gorbunov2024methods}. This was subsequently extended to VIs via the $\alpha$-symmetric $(L_0, L_1)$-Lipschitz assumption \cite{pmlr-v235-vankov24a,xian2024delving,choudhuryextragradient}. Our work builds upon this refined characterization, utilizing it to design adaptive step sizes. Other notable relaxations of the Lipschitz condition include relative smoothness \cite{bauschke2017descent}, glocal smoothness \cite{fox2025glocal}, H\"{o}lder smoothness \cite{gorbunov2021high}, and star-cocoercivity \cite{loizou2021stochastic,beznosikov2023stochastic}.

\noindent\textbf{Accelerated Extragradient Methods:}
As \eqref{eq:eg} is limited to $\mathcal{O}(1/\sqrt{k})$ \cite{golowich2020last}, achieving $\mathcal{O}(1/k)$ last-iterate convergence rates and faster rates for VIs and inclusions requires one of two paradigms: Nesterov's momentum \cite{Nesterov1983,Nesterov2005c,Beck2009} or Halpern's fixed-point iteration \cite{halpern1967fixed}. 
Nesterov's acceleration, while ubiquitous in convex optimization, requires careful adaptation for monotone inclusions, often relying on proximal-point frameworks \cite{guler1992new,kim2021accelerated,attouch2020convergence} or performance estimation problem (PEP) techniques \cite{drori2014performance,ryu2020operator,gupta2022branch}. Recent continuous-time analyses have also provided insights into Nesterov-type acceleration for min-max problems \cite{Su2014,bot2022fast,shi2021understanding}.

Halpern's iteration, originally designed for nonexpansive operators, was shown by \cite{lieder2021convergence} to achieve an $\mathcal{O}(1/k)$ rate. This was extended to root-finding and VIs by \cite{diakonikolas2020halpern}. The Extra-Anchored Gradient (EAG) method \cite{yoon2021accelerated} fused Halpern iteration with \eqref{eq:eg}, achieving $\mathcal{O}(1/k^2)$ last-iterate rates for smooth convex-concave games. This anchored framework has been extended to encompass co-hypomonotone settings \cite{lee2021fast}, past-extragradient methods \cite{tran2021halpern,cai2022baccelerated}, and general inclusions \cite{cai2022accelerated,tran2023extragradient}. The connections between Halpern and Nesterov acceleration have been further elucidated in recent works \cite{tran2022connection,partkryu2022}. Moving-anchor variants, which alleviate the drag of a fixed initial point, have also been proposed, achieving tighter rates and sequence convergence \cite{alcala2023moving,yuan2024symplectic,lu2024restarted}.

We refer the reader to \cite{tran2023sublinear,tran2024revisiting,tran2025accelerated} for further discussion and a recent survey of developments.

\noindent\textbf{Variance Reduction and Antithetic Sampling:}
In stochastic variants of \eqref{eq:eg} and related optimization algorithms \cite{JohnsonZhang2013a,DefazioBach2014,GowerSchmidt2020,chen2021communication,das2022faster}, variance reduction is critical for improving sample efficiency. Distinct from the widely adopted mechanism of control variates, antithetic variates \cite{HammersleyMorton1956a,KahnMarshall1953,RubinsteinKroese2017} offer a computationally inexpensive method for variance reduction by leveraging negative correlation \cite{HammersleyMauldon1956,Whitt1976}. While empirically effective in machine learning applications \cite{WuGoodman2019,RenZhao2019,DoucetWang2005}, theoretical quantifications of antithetic variance reduction often rely on relatively weak monotonicity assumptions \cite{RubinsteinKroese2017}, and it has remained under-explored in optimization. 

Recently \cite{HashemiLeeMakur2025,hashemi2025strong} have strengthened these guarantees by employing strongly isotonic assumptions and defining the antithetic index of a distribution, establishing connections to optimal transport \cite{AmbrosioBrueSemola2021,PanaretosZemel2019} and concentration inequalities \cite{BoucheronLugosiMassart2013}. \eqref{eq:rampage+} leverages antithetic sampling to remove the first-order variance along the exploration segment, isolating the stochastic penalty to the higher-order non-linearities, thereby permitting a significantly larger stability threshold than the single-sample mechanism employed by \eqref{eq:rampage}.
\section{Details of the Infinite-Width GAN Example in Section \ref{sec:motivation}}\label{sec:gan}
We use GANs in the NTK regime \cite{jacot2018neural}. We consider a Dirac-GAN architecture \cite{mescheder2018training} where the true data distribution is defined as a singular point mass at the angular origin $\theta^* = 0$ on the unit circle. The generator's parameter is constrained to the unit circle $\mathbb{S}^1$, parameterized by the angle $\theta \in [-\pi, \pi]$. Equivalently, we may write the valid inputs as $x_\theta=(\cos\theta,\sin\theta)\in\mathbb{S}^1$, with the target point $x_0=(1,0)$ corresponding to the angular coordinate $\theta^*=0$. For the discriminator $D$, we assume the infinite-width limit, leading to an isomorphism between the discriminator and a linear functional operating within an RKHS $\mathcal{H}$. This space is uniquely governed by the deterministic covariance kernel $k(\cdot, \cdot)$ induced by the selected activation function \cite{jacot2018neural}. To give a concrete example, when the discriminator utilizes ReLU as activation, i.e. $\phi(z) = \max(0, z)$, the kernel evaluates to
$k(\theta) = \frac{1}{\pi} \big( \sin |\theta| + (\pi - |\theta|) \cos |\theta| \big)$ \cite{arora2019fine,li2019enhanced,golikov2022neural}.

Here, by a slight abuse of notation, $k(\theta)$ denotes the rotationally invariant angular profile $k(0,\theta)=k(x_0,x_\theta)$ of the two-argument RKHS kernel.

To ensure a bounded supremum for the functional objective, we consider the following regularized min-max game
\begin{equation}
\min_{\theta \in [-\pi, \pi]} \max_{D \in \mathcal{H}} \left( D(0) - D(\theta) - \frac{1}{2}\|D\|_{\mathcal{H}}^2 \right).
\end{equation}
By the Riesz representation theorem and the reproducing property of the kernel, $D(x) = \langle D, k(\cdot, x) \rangle_{\mathcal{H}}$ \cite{wainwright2019high}. Substituting these inner product representations, for a fixed $\theta$, the inner objective can be written as
$$\left\langle D, k(\cdot,0)-k(\cdot,\theta)\right\rangle_{\mathcal H} -\frac{1}{2}\|D\|_{\mathcal H}^2.$$
Setting the derivative of the objective with respect to the discriminator functional to zero yields
\begin{equation}
D^*(\cdot) = k(\cdot, 0) - k(\cdot, \theta).
\end{equation}
Substituting the optimal discriminator back into the full regularized objective, the generator then seeks to minimize
\begin{equation}
V(\theta) = D^*(0) - D^*(\theta) - \frac{1}{2}\|D^*\|_{\mathcal{H}}^2
= \frac{1}{2}\|k(\cdot,0)-k(\cdot,\theta)\|_{\mathcal{H}}^2.
\end{equation}
Using the kernel reproducing identity $\langle k(\cdot,x),k(\cdot,y)\rangle_{\mathcal H}=k(x,y)$, this gives
\begin{equation}
V(\theta) = \frac{1}{2}\big(k(0,0)-2k(0,\theta)+k(\theta,\theta)\big)
= k(0,0)-k(0,\theta),
\end{equation}
where we used $k(0,0)=k(\theta,\theta)$.

To see why $k(0, 0) = k(\theta, \theta)$, recall by our NTK construction \cite{jacot2018neural},
\begin{equation}
k(x, y) = \mathbb{E}_{w \sim \mathcal{N}(0, I_d)} \left[ \phi(w^\top x) \phi(w^\top y) \right].
\end{equation}
where we slightly abuse the notation to represent the functional evaluated at $x$ and $y$ corresponding to two different specific angles. Since $w^\top x \sim \mathcal{N}(0, \|x\|^2)$ and $\|x\| = 1$ for all valid inputs, we have
$ k(x, x) = \mathbb{E}_{Z \sim \mathcal{N}(0, 1)} \left[ \phi(Z)^2 \right],$
which is constant and independent of $x$. In particular, $k(0, 0) = k(\theta, \theta)$ for two valid inputs $x,y$ such that $\|x\| = \|y\| =1$, with $x$ having angle $0$ and $y$ having angle $\theta$. Further, as we saw for our ReLU example above, $k(x, y)$ depends only on the inner product $\langle x, y \rangle$ (see e.g. \cite{arora2019fine,li2019enhanced,golikov2022neural}) such that the evaluation $k(0, \theta)$ reduces to the even periodic scalar function $k(|\theta|)$.

Differentiating $V(\theta) = k(0,0) - k(|\theta|)$ yields
\begin{equation}
F(\theta) = \nabla_\theta V(\theta) = -\nabla_\theta k(|\theta|).
\end{equation}
\section{Estimation Error of EG, RAMPAGE, and RAMPAGE+}\label{app:variance}
In this section, we compare \eqref{eq:eg}, \eqref{eq:rampage}, and \eqref{eq:rampage+} in terms of their estimation error. While \eqref{eq:eg} is biased and has zero variance, \eqref{eq:rampage} and \eqref{eq:rampage+} are unbiased but have some variance due to their randomized nature.

\begin{proposition} \label{prop:estimation_error}
Let $\mathcal{I}_{\gamma} = \int_0^1 F(\theta_t - 2\eta s F_t) ds$ define the exact continuous-time average. The leading-order components evaluate to
\begin{equation} \label{eq:error_comparison}
\begin{aligned}
&\text{EG:} \quad \text{Bias} = -\frac{1}{6}\eta^2 H_t[F_t, F_t], \quad \text{Var} = 0, \\
&\text{RAMPAGE:} \quad \text{Bias} = 0, \quad \text{Var} = \frac{1}{3}\eta^2 \|J_t F_t\|^2 - \frac{2}{3}\eta^3 \langle J_t F_t, H_t[F_t, F_t] \rangle + \frac{16}{45}\eta^4 \|H_t[F_t, F_t]\|^2,\\
&\text{RAMPAGE+:} \quad\text{Bias} = 0, \quad \text{Var} = \frac{1}{45}\eta^4 \|H_t[F_t, F_t]\|^2.
\end{aligned}
\end{equation}
\end{proposition}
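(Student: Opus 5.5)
The approach is a second-order Taylor expansion of the integrand along the exploration segment, followed by exact moment computations for $u\sim\mathrm{Unif}([0,1])$. Throughout, $F_t=F(\theta_t)$, $J_t=\nabla F(\theta_t)$ and $H_t=\nabla^2F(\theta_t)$. Define the scalar-parameterized field
\begin{equation*}
g(s) := F(\theta_t - 2\eta s F_t) = F_t - 2\eta s\, J_tF_t + 2\eta^2 s^2\, H_t[F_t,F_t] + \mathcal{O}(\eta^3),
\end{equation*}
using $\tfrac12 H_t[2\eta sF_t,2\eta sF_t]=2\eta^2s^2H_t[F_t,F_t]$. All three estimators are functions of $g$: EG is $g(1/2)$, RAMPAGE is $g(u)$, and RAMPAGE+ is $\tfrac12(g(u)+g(1-u))$. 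The target is $\mathcal{I}_\gamma=\int_0^1 g(s)\,ds$.

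\textbf{Bias.} Integrating the expansion gives $\mathcal{I}_\gamma = F_t - \eta J_tF_t + \tfrac{2}{3}\eta^2 H_t[F_t,F_t] + \mathcal{O}(\eta^3)$. Evaluating at $s=1/2$ gives $g(1/2)=F_t-\eta J_tF_t+\tfrac12\eta^2H_t[F_t,F_t]+\mathcal{O}(\eta^3)$. Hence
\begin{equation*}
\text{Bias}_{EG}=g(1/2)-\mathcal{I}_\gamma=\left(\tfrac12-\tfrac23\right)\eta^2H_t[F_t,F_t]=-\tfrac16\eta^2H_t[F_t,F_t].
\end{equation*}
The variance of EG is zero because it is deterministic. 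For RAMPAGE and RAMPAGE+ the bias vanishes exactly, with no expansion needed: $u$ and $1-u$ are both uniform, so $\mathbb{E}[g(u)]=\mathbb{E}[g(1-u)]=\mathcal{I}_\gamma$, as in \eqref{eq:intro_rampage_unbiased}.

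\textbf{Variance of RAMPAGE.} The centered fluctuation is
\begin{equation*}
g(u)-\mathcal{I}_\gamma = -2\eta(u-\tfrac12)J_tF_t + 2\eta^2(u^2-\tfrac13)H_t[F_t,F_t] + \text{h.o.t.}
\end{equation*}
Taking $\mathbb{E}\|\cdot\|^2$ requires only $\mathrm{Var}(u)=\tfrac1{12}$, $\mathrm{Cov}(u,u^2)=\tfrac14-\tfrac16=\tfrac1{12}$ and $\mathrm{Var}(u^2)=\tfrac15-\tfrac19=\tfrac4{45}$. The three terms are then:
\begin{itemize}
\item the squared linear term, $4\eta^2\cdot\tfrac1{12}=\tfrac13\eta^2\|J_tF_t\|^2$;
\item the cross term, $2\cdot(-2\eta)(2\eta^2)\cdot\tfrac1{12}=-\tfrac23\eta^3\langle J_tF_t,H_t[F_t,F_t]\rangle$;
\item the squared quadratic term, $4\eta^4\cdot\tfrac4{45}=\tfrac{16}{45}\eta^4\|H_t[F_t,F_t]\|^2$.
\end{itemize}

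\textbf{Variance of RAMPAGE+.} Averaging $g(u)$ and $g(1-u)$ kills the linear term exactly, since $\tfrac12(u+(1-u))=\tfrac12$ is deterministic. The quadratic coefficient becomes $\tfrac12(u^2+(1-u)^2)=(u-\tfrac12)^2+\tfrac14$. Its variance is
\begin{equation*}
\mathbb{E}(u-\tfrac12)^4-\left(\mathbb{E}(u-\tfrac12)^2\right)^2=\tfrac1{80}-\tfrac1{144}=\tfrac1{180},
\end{equation*}
which gives $4\eta^4\cdot\tfrac1{180}=\tfrac1{45}\eta^4\|H_t[F_t,F_t]\|^2$. Squaring the EG bias gives $\tfrac1{36}\eta^4\|H_t[F_t,F_t]\|^2$, which yields Proposition~\ref{prop:estimation_error1} as a corollary.

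\textbf{Main obstacle.} The delicate step is making ``leading order'' precise, i.e., controlling the remainder. For RAMPAGE+ this is clean. The cubic Taylor coefficient $\tfrac12(u^3+(1-u)^3)$ is also symmetric about $1/2$, so its only interaction with the $\eta^2$ term is an $\mathcal{O}(\eta^5)$ cross term. Hence $\tfrac1{45}\eta^4$ is genuinely the leading variance, provided $F\in C^3$ with bounded third derivative on the segment.

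For RAMPAGE, however, the $\mathcal{O}(\eta^3)$ third-derivative term is not cancelled. Its cross term with the $\mathcal{O}(\eta)$ linear term also contributes at order $\eta^4$. I would therefore state the RAMPAGE formula as the exact variance of the second-order Taylor surrogate, equivalently the expansion truncated at the Hessian term, which is the convention the statement implicitly uses, and note that the $\eta^2$ and $\eta^3$ terms are exact.
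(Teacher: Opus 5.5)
Your proposal is correct and follows essentially the same route as the paper's proof. Both expand $F(\theta_t-2\eta sF_t)$ to second order in $s$, compare with the midpoint $s=\tfrac12$ to get the EG bias, and compute uniform moments for the RAMPAGE and RAMPAGE+ variances; your centered moments ($\mathrm{Cov}(u,u^2)=\tfrac1{12}$, $\mathrm{Var}((u-\tfrac12)^2)=\tfrac1{180}$) are just a repackaging of the paper's integrals $-\tfrac16$, $\tfrac4{45}$ and $\tfrac1{45}$.

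Your caveat about RAMPAGE is also correct and sharper than the paper, which writes the remainder as $\mathcal{O}(\eta^5)$. The uncancelled cubic term $-\tfrac43\eta^3u^3\nabla^3F_t[F_t,F_t,F_t]$ pairs with the linear fluctuation and contributes an extra $\tfrac25\eta^4\langle J_tF_t,\nabla^3F_t[F_t,F_t,F_t]\rangle$. So $\tfrac{16}{45}$ is exact only for the second-order surrogate, whereas the RAMPAGE+ coefficient $\tfrac1{45}$ is the full leading term.
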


\begin{proof}
The Taylor expansion of the exact average $\mathcal{I}_{\gamma}$ yields
\begin{equation}
\mathcal{I}_{\gamma} = F_t - \eta J_t F_t + \frac{2}{3}\eta^2 H_t[F_t, F_t] + \mathcal{O}(\eta^3).
\end{equation}
For the deterministic Extragradient update $F_{EG} = F(\theta_t - \eta F_t)$, the Taylor expansion is
\begin{equation}
F_{EG} = F_t - \eta J_t F_t + \frac{1}{2}\eta^2 H_t[F_t, F_t] + \mathcal{O}(\eta^3).
\end{equation}
Because $F_{EG}$ is a deterministic point evaluation, its variance is zero. The bias evaluates exactly to
\begin{equation}
F_{EG} - \mathcal{I}_{\gamma} = \left(\frac{1}{2} - \frac{2}{3}\right)\eta^2 H_t[F_t, F_t] + \mathcal{O}(\eta^3) = -\frac{1}{6}\eta^2 H_t[F_t, F_t] + \mathcal{O}(\eta^3).
\end{equation}

For the stochastic \eqref{eq:rampage} estimator $F_{\text{RAMPAGE}} = F(\theta_t - 2\eta u_t F_t)$ parameterized by $u_t \sim \mathrm{Unif}([0,1])$, the expectation integrates exactly to $\mathcal{I}_{\gamma}$, rendering the bias zero. The instantaneous sampling error expands to
\begin{equation} \label{eq:rampage_full_error}
F_{\text{RAMPAGE}} - \mathcal{I}_{\gamma} = \eta (1 - 2u_t) J_t F_t + 2\eta^2 \left( u_t^2 - \frac{1}{3} \right) H_t[F_t, F_t] + \mathcal{O}(\eta^3).
\end{equation}
To find the variance,
\begin{equation}
\begin{aligned}
\operatorname{Var}(F_{\text{RAMPAGE}}) &= \eta^2 \|J_t F_t\|^2 \mathbb{E}\big[ (1 - 2u_t)^2 \big] \\
&\quad + 4\eta^3 \langle J_t F_t, H_t[F_t, F_t] \rangle \mathbb{E}\left[ (1 - 2u_t)\left(u_t^2 - \frac{1}{3}\right) \right] \\
&\quad + 4\eta^4 \|H_t[F_t, F_t]\|^2 \mathbb{E}\left[ \left(u_t^2 - \frac{1}{3}\right)^2 \right] + \mathcal{O}(\eta^5).
\end{aligned}
\end{equation}
We evaluate the continuous polynomial moments over the uniform distribution
\begin{equation}
\begin{aligned}
\mathbb{E}\big[ (1 - 2u_t)^2 \big] &= \int_0^1 (1 - 4s + 4s^2) ds = \frac{1}{3}, \\
\mathbb{E}\left[ (1 - 2u_t)\left(u_t^2 - \frac{1}{3}\right) \right] &= \int_0^1 \left( -2s^3 + s^2 + \frac{2}{3}s - \frac{1}{3} \right) ds = -\frac{1}{6}, \\
\mathbb{E}\left[ \left(u_t^2 - \frac{1}{3}\right)^2 \right] &= \int_0^1 \left( s^4 - \frac{2}{3}s^2 + \frac{1}{9} \right) ds = \frac{4}{45}.
\end{aligned}
\end{equation}
Substituting these integrations
\begin{equation}
\operatorname{Var}(F_{\text{RAMPAGE}}) = \frac{1}{3}\eta^2 \|J_t F_t\|^2 - \frac{2}{3}\eta^3 \langle J_t F_t, H_t[F_t, F_t] \rangle + \frac{16}{45}\eta^4 \|H_t[F_t, F_t]\|^2 + \mathcal{O}(\eta^5).
\end{equation}

For the antithetic \eqref{eq:rampage+} estimator $\bar{F}_t = \frac{1}{2} F(\theta_t - 2\eta u_t F_t) + \frac{1}{2} F(\theta_t - 2\eta \tilde{u}_t F_t)$, the combined update remains unbiased. Because $u_t + \tilde{u}_t = 1$, the first-order Jacobian terms perfectly cancel. The residual sampling error is driven entirely by the second-order curvature
\begin{equation}
\bar{F}_t - \mathcal{I}_{\gamma} = \eta^2 \left( 2u_t^2 - 2u_t + 1 - \frac{2}{3} \right) H_t[F_t, F_t] + \mathcal{O}(\eta^3) = \eta^2 \left( 2u_t^2 - 2u_t + \frac{1}{3} \right) H_t[F_t, F_t] + \mathcal{O}(\eta^3).
\end{equation}
Squaring this residual and taking the expectation evaluates the variance of~\eqref{eq:rampage+}
\begin{equation}
\operatorname{Var}(\bar{F}_t) = \eta^4 \|H_t[F_t, F_t]\|^2 \int_0^1 \left( 2s^2 - 2s + \frac{1}{3} \right)^2 ds + \mathcal{O}(\eta^5).
\end{equation}
 integrating the polynomial reduces to $1/45$.
\end{proof}
\section{Scaled Integration: Conservative vs. Non-Conservative}\label{app:scale}
To establish the geometric role of the exploration scale in numerical integration, we parameterize the exact line integral with a scaling factor $c >0$. The scaled continuous average over the exploration segment is defined as
\begin{equation} \label{eq:generalized_integral}
\mathcal{I}_{\gamma}^{(c)} = \int_0^1 F\big(\theta_t - c\eta s F(\theta_t)\big) ds.
\end{equation}
The choice of the scaling factor $c$ reveals a fundamental difference between conservative and non-conservative vector fields. We analyze two canonical examples to demonstrate that while $c=1$ is optimal for conservative minimization, $c > 1$ is mandatory to prevent divergence in non-conservative landscapes.

\noindent\textbf{Conservative Case:} First, consider a conservative vector field induced by a quadratic minimization problem $f(\theta) = \frac{1}{2} \theta^T H \theta$, where $H = H^T \succeq 0$ possesses a spectrum of real, non-negative eigenvalues $\lambda_i \in [0, L]$. The corresponding linear gradient flow is $F(\theta) = H\theta$. Integrating this field yields
\begin{equation}
\mathcal{I}_{\gamma}^{(c)} = \int_0^1 H(\theta_t - c\eta s H \theta_t) ds = H\theta_t - \frac{c\eta}{2}H^2\theta_t.
\end{equation}
The discrete update mapping utilizing this scaled integral takes the form
\begin{equation}
\theta_{t+1} = \theta_t - \eta \mathcal{I}_{\gamma}^{(c)} = \left( I - \eta H + \frac{c\eta^2}{2}H^2 \right) \theta_t.
\end{equation}
Because $H$ is symmetric and positive semi-definite, stability is entirely governed by the scalar spectral mapping of the individual eigenvalues $\lambda$
\begin{equation}
P_c(\lambda) = 1 - \eta \lambda + \frac{c\eta^2}{2}\lambda^2.
\end{equation}
For the discrete trajectory to converge, the spectral radius must not exceed unity, necessitating $-1\le P_c(\lambda) \le 1$. Evaluating this inequality yields
\begin{equation}
-1 \le 1 - \eta \lambda + \frac{c\eta^2}{2}\lambda^2 \le 1 \Longleftarrow \eta \le \frac{2}{c\lambda}, \quad \text{and} \quad c \ge 1/4.
\end{equation}
Since the continuous solution to the linear dynamical system $\dot{\theta} = -H\theta$ expands as $\theta(\eta) = \big(I - \eta H + \frac{\eta^2}{2}H^2 + \mathcal{O}(\eta^3)\big)\theta_t$, the unscaled integration with $c=1$ is theoretically optimal, guaranteeing stability for step sizes $\eta \le 2/L$. 

\noindent\textbf{Non-Conservative Case:} Conversely, we analyze a non-conservative case defined by an unconstrained bilinear min-max game. The field is purely skew-symmetric: $F(\theta) = S\theta$, with $S = -S^T$. Integrating the scaled field evaluates to $\mathcal{I}_{\gamma}^{(c)} = S\theta_t - \frac{c\eta}{2} S^2 \theta_t$. The corresponding discrete update matrix becomes $I - \eta S + \frac{c\eta^2}{2} S^2$. To evaluate absolute stability, we compute the physical energy expansion operator via the squared spectral norm
\begin{equation}
\begin{aligned}
\left\| I - \eta S + \frac{c\eta^2}{2} S^2 \right\|^2 &= \left( I + \eta S + \frac{c\eta^2}{2} S^2 \right)\left( I - \eta S + \frac{c\eta^2}{2} S^2 \right) \\
&= I + (c - 1)\eta^2 S^2 + \frac{c^2\eta^4}{4} S^4.
\end{aligned}
\end{equation}
Because the eigenvalues of a real skew-symmetric matrix are purely imaginary, $S^2$ is strictly negative semi-definite, while $S^4$ possesses strictly positive real eigenvalues. 

Evaluating the squared spectral norm of the discrete update matrix
\begin{equation}
P_c(\lambda) = 1 - (c - 1)\eta^2 \lambda^2 + \frac{c^2\eta^4}{4} \lambda^4.
\end{equation}
Absolute stability dictates that $P_c(\lambda) \le 1$ for all $|\lambda| \le L$. Evaluating this inequality leads to the admissible step size region
\begin{equation}
1 - (c - 1)\eta^2 \lambda^2 + \frac{c^2\eta^4}{4} \lambda^4 \le 1 \Longleftarrow \frac{c^2\eta^2}{4} \lambda^2 \le c - 1.
\end{equation}
This bound establishes the maximum permissible step size for any $c > 1$ to guarantee convergence
\begin{equation} \label{eq:scale_step_bound}
\eta \le \frac{2\sqrt{c - 1}}{c L}.
\end{equation}
To achieve maximum convergence, we optimize the scaling factor by maximizing the step size boundary $g(c) = \frac{2\sqrt{c - 1}}{c}$ over the domain $c > 1$. Evaluating the first derivative with respect to $c$ yields
\begin{equation}
\frac{d}{dc} g(c) = \frac{c(c-1)^{-1/2} - 2\sqrt{c-1}}{c^2} = \frac{2 - c}{c^2\sqrt{c-1}}.
\end{equation}
The critical point (which is the global maximum) occurs at $c = 2$.
\section{RAMPAGE and Its Analyses}\label{sec:rampage}
In this section, we analyze \eqref{eq:rampage} in numerous settings. Recall at each iteration $t \ge 0$, given a base iterate $\theta_t \in \mathbb{R}^p$ and a step size $\eta > 0$, we draw a single uniform random variable $u_t \sim \mathrm{Unif}([0, 1])$. The iterative scheme is formally defined by
\begin{equation} 
\begin{aligned}
y_t &= \theta_t - 2\eta u_t F(\theta_t), \qquad
\theta_{t+1} = \theta_t - \eta F(y_t).
\end{aligned}
\end{equation}
\subsection{Root-Finding Problems}
We first consider \eqref{eq:rf} with an operator $F: \mathbb{R}^p \to \mathbb{R}^p$ that is both $L$-Lipschitz continuous and $\mu$-co-coercive. 

\begin{theorem} \label{thm:esrmp_cocoercive}
Let $F: \mathbb{R}^p \to \mathbb{R}^p$ be an $L$-Lipschitz continuous and $\mu$-co-coercive operator with $\mu > 0$. Assume $\mathrm{zer}(F) \neq \emptyset$. Let $\{\theta_t\}_{t \ge 0}$ denote the sequence generated by the \eqref{eq:rampage}. If the constant step size satisfies $0 < \eta < \frac{1}{\sqrt{2}L} $ with $\eta \leq 2\mu$, then for any $\theta^* \in \mathrm{zer}(F)$ and with $C_{RA} = \eta^2 \left( 1 - 2L^2\eta^2 \right) > 0$
\begin{equation}
\min_{0 \le l \le k} \mathbb{E}\left[\|F(\theta_l)\|^2\right] \le \frac{1}{k+1} \sum_{l=0}^k \mathbb{E}\left[\|F(\theta_l)\|^2\right] \le \frac{\|\theta_0 - \theta^*\|^2}{C_{RA} (k+1)}.
\end{equation}
\end{theorem}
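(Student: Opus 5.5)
The plan is a one-step Lyapunov descent on $\|\theta_t-\theta^*\|^2$, with co-coercivity applied at the \emph{intermediate} point $y_t$ rather than at $\theta_t$. This choice avoids an uncontrolled cross term between the extrapolation error and $\theta_t-\theta^*$. Expanding the update $\theta_{t+1}=\theta_t-\eta F(y_t)$ gives
\begin{equation*}
\|\theta_{t+1}-\theta^*\|^2=\|\theta_t-\theta^*\|^2-2\eta\langle F(y_t),\theta_t-\theta^*\rangle+\eta^2\|F(y_t)\|^2 .
\end{equation*}
Since $\theta_t-\theta^* = (y_t-\theta^*)+2\eta u_t F(\theta_t)$, I split the inner product as $\langle F(y_t),y_t-\theta^*\rangle+2\eta u_t\langle F(y_t),F(\theta_t)\rangle$.

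Co-coercivity with $F(\theta^*)=0$ gives $\langle F(y_t),y_t-\theta^*\rangle\ge\mu\|F(y_t)\|^2$. Because $\eta\le 2\mu$, the term $-2\eta\mu\|F(y_t)\|^2+\eta^2\|F(y_t)\|^2$ is nonpositive, and only $-4\eta^2u_t\langle F(y_t),F(\theta_t)\rangle$ remains. I then apply the polarization identity
\begin{equation*}
2\langle a,b\rangle=\|a\|^2+\|b\|^2-\|a-b\|^2 .
\end{equation*}
I drop the favorable $-2\eta^2u_t\|F(y_t)\|^2$ term and bound the difference via $L$-Lipschitzness, $\|F(y_t)-F(\theta_t)\|\le 2\eta u_t L\|F(\theta_t)\|$. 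This yields the pathwise inequality
\begin{equation*}
\|\theta_{t+1}-\theta^*\|^2\le\|\theta_t-\theta^*\|^2-2\eta^2u_t\|F(\theta_t)\|^2+8\eta^4L^2u_t^3\|F(\theta_t)\|^2 .
\end{equation*}

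Next I take the conditional expectation over $u_t\sim\mathrm{Unif}([0,1])$, which is independent of $\theta_t$, using $\mathbb{E}[u_t]=1/2$ and $\mathbb{E}[u_t^3]=1/4$. The coefficient of $\|F(\theta_t)\|^2$ becomes $-\eta^2+2\eta^4L^2=-C_{RA}$. This coefficient is negative exactly when $\eta<\frac{1}{\sqrt2 L}$, which explains both step-size conditions in the statement. Taking total expectation, summing over $l=0,\dots,k$, telescoping, and discarding $\mathbb{E}\|\theta_{k+1}-\theta^*\|^2\ge0$ gives $C_{RA}\sum_{l=0}^k\mathbb{E}\|F(\theta_l)\|^2\le\|\theta_0-\theta^*\|^2$. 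Dividing by $C_{RA}(k+1)$ and using that the minimum is at most the average finishes the proof.

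The main obstacle is choosing where to apply co-coercivity. The natural route applies it at $\theta_t$ and writes $F(y_t)=F(\theta_t)+e_t$, but that leaves a cross term $\langle e_t,\theta_t-\theta^*\rangle$ that cannot be bounded without a bounded-domain assumption. Rewriting $\theta_t-\theta^*$ through $y_t$ removes this term. The price is that the favorable term is now randomly weighted by $u_t$, so the exact constant $C_{RA}$ depends on computing the moments $\mathbb{E}[u_t]$ and $\mathbb{E}[u_t^3]$. This is also why the result holds only in expectation, unlike the deterministic antithetic bound in Theorem \ref{thm:esramp_cocoercive}.
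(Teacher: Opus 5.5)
Your proof is correct and follows the paper's argument step for step. The steps match: splitting through $y_t$, co-coercivity at $y_t$, polarization of $-4\eta^2u_t\langle F(y_t),F(\theta_t)\rangle$, the Lipschitz bound $4\eta^2L^2u_t^2\|F(\theta_t)\|^2$, and the moments $\mathbb{E}[u_t]=1/2$, $\mathbb{E}[u_t^3]=1/4$. The only cosmetic difference is that you discard the $\|F(y_t)\|^2$ terms in two stages, whereas the paper merges them into the single coefficient $-\eta\big(2\mu-\eta(1-2u_t)\big)$ and invokes $1-2u_t\le 1$; the result is the same.
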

The proof is available in Appendix \ref{sec:esrmp_cocoercive}. The theorem thus demonstrates the standard $\mathcal{O}(1/k)$ convergence for this setting.

Next, we turn our attention to \eqref{eq:rf} under co-hypomonotonicity and Lipschitzness assumptions.
\begin{theorem} \label{thm:es_rmp_convergence}
Let $F: \mathbb{R}^p \to \mathbb{R}^p$ be an $L$-Lipschitz continuous and $\rho$-co-hypomonotone operator with $\rho \ge 0$. Assume $\mathrm{zer}(F) \neq \emptyset$. Let $\{\theta_t\}_{t \ge 0}$ denote the sequence generated by \eqref{eq:rampage}. If the step size $\eta$ satisfies
\begin{equation}
C_{RA}^{\rho,\eta} := \frac{1}{2} - \frac{2\rho}{\eta} - \frac{4}{3}L^2 \Big( \eta^2 + 2\eta\rho + 2(\eta + 2\rho)^2 \Big) > 0,
\end{equation}
then for any $\theta^* \in \mathrm{zer}(F)$,
\begin{equation}
\min_{0 \le l \le k} \mathbb{E}\left[\|F(\theta_l)\|^2\right] \le \frac{1}{k+1} \sum_{l=0}^k \mathbb{E}\left[\|F(\theta_l)\|^2\right] \le \frac{\|\theta_0 - \theta^*\|^2}{C_{RA}^{\rho,\eta} (k+1)}.
\end{equation}
\end{theorem}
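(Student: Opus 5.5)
We prove Theorem \ref{thm:es_rmp_convergence} with a one-step Lyapunov argument on $\|\theta_t-\theta^*\|^2$, conditioning on $\theta_t$ and averaging over $u_t$ only at the end. Write $F_t=F(\theta_t)$. The update $\theta_{t+1}=\theta_t-\eta F(y_t)$ gives
\begin{equation*}
\|\theta_{t+1}-\theta^*\|^2=\|\theta_t-\theta^*\|^2-2\eta\langle F(y_t),\theta_t-\theta^*\rangle+\eta^2\|F(y_t)\|^2 .
\end{equation*}
We split the inner product as $\langle F(y_t),y_t-\theta^*\rangle+\langle F(y_t),\theta_t-y_t\rangle$, with $\theta_t-y_t=2\eta u_tF_t$. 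Co-hypomonotonicity with $F(\theta^*)=0$ gives $\langle F(y_t),y_t-\theta^*\rangle\ge-\rho\|F(y_t)\|^2$. This yields
\begin{equation*}
\|\theta_{t+1}-\theta^*\|^2\le\|\theta_t-\theta^*\|^2+(\eta^2+2\eta\rho)\|F(y_t)\|^2-4\eta^2u_t\langle F(y_t),F_t\rangle .
\end{equation*}

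Next we write $F(y_t)=F_t+e_t$ with $\|e_t\|\le L\|y_t-\theta_t\|=2\eta Lu_t\|F_t\|$. The two terms become:
\begin{itemize}
\item $\langle F(y_t),F_t\rangle=\|F_t\|^2+\langle e_t,F_t\rangle$, which we lower-bound by Young's inequality, $\langle e_t,F_t\rangle\ge-\frac{1}{4}\|F_t\|^2-\|e_t\|^2$ or a similar split.
\item $\|F(y_t)\|^2\le 2\|F_t\|^2+2\|e_t\|^2$.
\end{itemize}
The good term $-4\eta^2u_t\|F_t\|^2$ then contributes $-2\eta^2\|F_t\|^2$ after taking expectation, since $\mathbb E[u_t]=\tfrac12$. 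The factor $u_t$ in front of $\langle e_t,F_t\rangle$ gives $\mathbb E[u_t\cdot u_t^2]$-type moments of $\|e_t\|^2$. The leftover term $2\eta\rho\|F(y_t)\|^2$, bounded via $2\|F_t\|^2$, produces the $-2\rho/\eta$ after normalizing by $\eta^2$.

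The $L^2$ pieces carry moments $\mathbb E[u_t^2]=\tfrac13$, which is the source of the $\tfrac43 L^2$ prefactor: $4\eta^2L^2\cdot\tfrac13$ times combinations of $\eta^2$, $2\eta\rho$ and $(\eta+2\rho)^2$. The $\eta^2$ term on its own cannot be absorbed, so we keep the term $\eta^2\|F(y_t)\|^2$ and cancel it against part of the cross term $-4\eta^2u_t\langle F(y_t),F_t\rangle$. Concretely, we use the polarization
\begin{equation*}
-2\langle a,b\rangle=\|a-b\|^2-\|a\|^2-\|b\|^2
\end{equation*}
on the pair $\eta F(y_t)$ and $2\eta u_tF_t$. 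This is what yields the baseline constant $\tfrac12$.

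The main obstacle is choosing the splitting and Young weights so the constants land exactly on
\begin{equation*}
\tfrac12-\tfrac{2\rho}{\eta}-\tfrac43L^2\big(\eta^2+2\eta\rho+2(\eta+2\rho)^2\big).
\end{equation*}
We will first try the grouping $(\eta^2+2\eta\rho)\|F(y_t)\|^2=(\eta^2+2\eta\rho)\|F_t+e_t\|^2$ combined with the polarization above. We then tune the weights until only the $\|e_t\|^2$ terms remain with coefficient $\propto(\eta+2\rho)^2$ plus $\eta^2+2\eta\rho$.

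Once the one-step bound
\begin{equation*}
\mathbb E_{u_t}\|\theta_{t+1}-\theta^*\|^2\le\|\theta_t-\theta^*\|^2-\eta^2C_{RA}^{\rho,\eta}\|F_t\|^2
\end{equation*}
holds, we take total expectation via the tower property, telescope over $t=0,\dots,k$, drop $\mathbb E\|\theta_{k+1}-\theta^*\|^2\ge0$, and divide by $k+1$. The minimum is bounded by the average. The normalization by $\eta^2$ matches the stated rate up to how $C_{RA}^{\rho,\eta}$ absorbs it.
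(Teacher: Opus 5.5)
Your outline follows the paper's route: expand $\|\theta_{t+1}-\theta^*\|^2$, split through $y_t$, apply co-hypomonotonicity at $y_t$, rewrite in terms of $A=F(y_t)-F_t$, then use one Young step, Lipschitz continuity and the uniform moments. The grouping you name at the end, $(\eta^2+2\eta\rho)\|F_t+e_t\|^2$, is the right one. The gap is that the computation which actually produces $C^{\rho,\eta}_{RA}$ is left as ``tune the weights''.

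The one concrete bound you give for the $\rho$-term is also wrong. Bounding $2\eta\rho\|F(y_t)\|^2\le 2\eta\rho(2\|F_t\|^2+2\|e_t\|^2)$ contributes $-4\rho/\eta$, not $-2\rho/\eta$. The resulting condition is then not implied by $C^{\rho,\eta}_{RA}>0$ (take $L\eta$ small and $\rho/\eta\in[\frac18,\frac14)$), so the theorem as stated would not follow.

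The missing step is to expand $\|F(y_t)\|^2=\|F_t\|^2+2\langle F_t,A\rangle+\|A\|^2$ exactly in both the $\eta^2$ part and the $2\eta\rho$ part. This gives
\begin{equation*}
(\eta^2+2\eta\rho)\|F(y_t)\|^2-4\eta^2u_t\langle F(y_t),F_t\rangle=\big(\eta^2(1-4u_t)+2\eta\rho\big)\|F_t\|^2+2\alpha_t\langle F_t,A\rangle+(\eta^2+2\eta\rho)\|A\|^2,
\end{equation*}
with $\alpha_t=\eta^2(1-2u_t)+2\eta\rho$. The paper reaches the same identity through polarization of $\langle F(y_t),F_t\rangle$. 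Then:
\begin{itemize}
\item Use a single Young step, $2\alpha_t\langle F_t,A\rangle\le\frac{\eta^2}{2}\|F_t\|^2+\frac{2\alpha_t^2}{\eta^2}\|A\|^2$.
\item Bound $|\alpha_t|\le\eta(\eta+2\rho)$ realization-wise. This caps the $\|A\|^2$ coefficient by $\beta=\eta^2+2\eta\rho+2(\eta+2\rho)^2$.
\item Insert $\|A\|^2\le 4L^2\eta^2u_t^2\|F_t\|^2$ and use $\mathbb{E}[1-4u_t]=-1$ and $\mathbb{E}[u_t^2]=\frac13$.
\end{itemize}
This yields exactly $\mathbb{E}_t\|\theta_{t+1}-\theta^*\|^2\le\|\theta_t-\theta^*\|^2-\eta^2\big(\frac12-\frac{2\rho}{\eta}-\frac43\beta L^2\big)\|F_t\|^2$.

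Your closing remark that $C^{\rho,\eta}_{RA}$ ``absorbs'' the $\eta^2$ is also incorrect. As defined, $C^{\rho,\eta}_{RA}$ contains no factor $\eta^2$, so telescoping your one-step bound gives $\|\theta_0-\theta^*\|^2/\big(\eta^2C^{\rho,\eta}_{RA}(k+1)\big)$. The paper's proof drops the $\eta^2$ at exactly this point, and the displayed bound is false for small $\eta$. Take $\rho=0$, $F=L\,\mathrm{Id}$, $L=10$, $\eta=10^{-2}$ and $k=0$. Then $C^{0,\eta}_{RA}=\frac12-4L^2\eta^2=0.46$, while $\|F(\theta_0)\|^2=100\|\theta_0\|^2>\|\theta_0\|^2/0.46$. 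You should state and prove the bound with $\eta^2C^{\rho,\eta}_{RA}$ in the denominator.
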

The proof is available in Appendix \ref{sec:es_rmp_convergence}. When $\rho=0$, the operator is monotone. In this case, with some calculations, we obtain $
C_{RA}^{0,\eta} = \frac{1}{2} - 4L^2\eta^2$ and the upper bound $\eta < \frac{\sqrt{2}}{4L}$. Thus, the bound is slightly more restrictive compared to our previous result, which relied on co-coercivity.

Finally, we extend the analysis to \eqref{eq:rf} with monotone and $\alpha$-symmetric $(L_0, L_1)$-Lipschitz operators.
\begin{theorem} \label{thm:rampage_alpha_monotone}
Let $F: \mathbb{R}^p \to \mathbb{R}^p$ be a monotone and $\alpha$-symmetric $(L_0, L_1)$-Lipschitz operator with $\alpha \in (0, 1)$. Assume $\mathrm{zer}(F) \neq \emptyset$. Let $\{\theta_t\}_{t \ge 0}$ denote the sequence generated by \eqref{eq:rampage}. If the step size is set to $\eta_t = \frac{\nu}{K_0 + C_\alpha \|F(\theta_t)\|^\alpha}$ with $C_\alpha = K_1 + 2^{\frac{\alpha}{1-\alpha}} K_2$ and an appropriate absolute constant $\nu > 0$ such that
\begin{equation}
\nu \le \min \left\{ \frac{C_\alpha}{8(C_\alpha + K_1)}, \ C_\alpha \left( \frac{1}{8(C_\alpha - K_1)} \right)^{1-\alpha} \right\},
\end{equation}
then for any $\theta^* \in \mathrm{zer}(F)$,
\begin{equation}
\min_{0 \le l \le k} \mathbb{E}\left[ \eta_l^2 \|F(\theta_l)\|^2 \right] \le \frac{1}{k+1} \sum_{l=0}^k \mathbb{E}\left[ \eta_l^2 \|F(\theta_l)\|^2 \right] \le \frac{4\|\theta_0 - \theta^*\|^2}{k+1}.
\end{equation}
Consequently, the subsequence of best iterates $l^\ast_k$ enjoys $\|F(\theta_{l^\ast_k})\| \xrightarrow{p} 0$.
\end{theorem}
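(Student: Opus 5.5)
We will run the standard extragradient energy argument on $\|\theta_t-\theta^*\|^2$, conditioned on $\theta_t$. The step size $\eta_t$ is $\theta_t$-measurable, so the only randomness inside an iteration comes from $u_t$. Write $F_t=F(\theta_t)$, $g_t=F(y_t)$ and $y_t=\theta_t-2\eta_t u_t F_t$. Expanding the update gives
\begin{equation*}
\|\theta_{t+1}-\theta^*\|^2=\|\theta_t-\theta^*\|^2-2\eta_t\langle g_t,\theta_t-\theta^*\rangle+\eta_t^2\|g_t\|^2 .
\end{equation*}
We then split $\langle g_t,\theta_t-\theta^*\rangle=\langle g_t,y_t-\theta^*\rangle+2\eta_t u_t\langle g_t,F_t\rangle$. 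Monotonicity together with $F(\theta^*)=0$ makes the first term nonnegative, so it can be dropped. For the second term we use the polarization identity $2\langle g_t,F_t\rangle=\|g_t\|^2+\|F_t\|^2-\|g_t-F_t\|^2$. This yields
\begin{equation*}
\|\theta_{t+1}-\theta^*\|^2\le\|\theta_t-\theta^*\|^2-2\eta_t^2u_t\|F_t\|^2+\eta_t^2(1-2u_t)\|g_t\|^2+2\eta_t^2u_t\|g_t-F_t\|^2 .
\end{equation*}
The term with coefficient $(1-2u_t)$ is the awkward one, because its sign changes. We handle it by writing $\|g_t\|^2\le(1+\beta)\|F_t\|^2+(1+\beta^{-1})\|g_t-F_t\|^2$ when $1-2u_t\ge0$, and $\|g_t\|^2\ge(1-\beta)\|F_t\|^2-(\beta^{-1}-1)\|g_t-F_t\|^2$ otherwise. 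An alternative is to simply bound $|1-2u_t|\le1$. Once we take $\mathbb{E}_{u_t}[\cdot\mid\theta_t]$ and use $\mathbb{E}[u_t]=1/2$, the net coefficient of $\eta_t^2\|F_t\|^2$ comes out as $-c$ for some constant $c$ of order $1/2$. The remaining error is a multiple of $\eta_t^2\|g_t-F_t\|^2$.

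\textbf{Main step: controlling $\|g_t-F_t\|$ under the generalized Lipschitz condition.} Apply Proposition~\ref{prop:equiv_formulation} with $\theta=\theta_t$ and $\theta'=y_t$, so that $\|\theta-\theta'\|=2\eta_tu_t\|F_t\|\le2\eta_t\|F_t\|$. This gives
\begin{equation*}
\|g_t-F_t\|\le\bigl(K_0+K_1\|F_t\|^\alpha+K_2(2\eta_t\|F_t\|)^{\frac{\alpha}{1-\alpha}}\bigr)\,2\eta_tu_t\|F_t\| .
\end{equation*}
The goal is to show that $\eta_t(K_0+K_1\|F_t\|^\alpha+K_2(2\eta_t\|F_t\|)^{\alpha/(1-\alpha)})\le\kappa$ for a small absolute constant $\kappa$, which gives $\|g_t-F_t\|^2\le4\kappa^2u_t^2\|F_t\|^2$. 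With $\eta_t=\nu/(K_0+C_\alpha\|F_t\|^\alpha)$, the $K_0$ and $K_1$ parts are at most $\nu\max\{1,K_1/C_\alpha\}$. For the $K_2$ part we split into two cases. If $\|F_t\|^\alpha\le K_0/C_\alpha$, we bound $\eta_t\|F_t\|$ directly. If $\|F_t\|^\alpha>K_0/C_\alpha$, we have $\eta_t\|F_t\|\le\nu\|F_t\|^{1-\alpha}/C_\alpha$, so $(\eta_t\|F_t\|)^{\alpha/(1-\alpha)}\le(\nu/C_\alpha)^{\alpha/(1-\alpha)}\|F_t\|^\alpha$. This is exactly where the factor $2^{\alpha/(1-\alpha)}K_2$ inside $C_\alpha$ and the second term of the minimum in the condition on $\nu$ come from. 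The two branches of that minimum, $\frac{C_\alpha}{8(C_\alpha+K_1)}$ and $C_\alpha\bigl(\frac{1}{8(C_\alpha-K_1)}\bigr)^{1-\alpha}$, should make $\kappa$ small enough, roughly $\kappa\le1/8$ in the relevant combination. Then the error term, which after taking expectations involves $\mathbb{E}[u_t^3]=1/4$ or $\mathbb{E}[u_t^2]=1/3$, is absorbed and leaves a net descent of at least $\tfrac14\eta_t^2\|F_t\|^2$. I expect this case analysis, and matching the constants to the stated threshold on $\nu$, to be the main obstacle.

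\textbf{Conclusion.} The previous step gives
\begin{equation*}
\mathbb{E}\bigl[\|\theta_{t+1}-\theta^*\|^2\mid\theta_t\bigr]\le\|\theta_t-\theta^*\|^2-\tfrac14\eta_t^2\|F_t\|^2 .
\end{equation*}
Taking total expectation and telescoping over $t=0,\dots,k$ gives $\sum_l\mathbb{E}[\eta_l^2\|F(\theta_l)\|^2]\le4\|\theta_0-\theta^*\|^2$, and the min-versus-average inequality is immediate. For convergence in probability, note that $x\mapsto\eta(x)^2x^2=\nu^2x^2/(K_0+C_\alpha x^\alpha)^2$ is strictly increasing with $\alpha<1$ and vanishes only at $0$. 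Applying Markov's inequality to $\eta_{l^*_k}^2\|F(\theta_{l^*_k})\|^2$ then shows that $\mathbb{P}(\|F(\theta_{l^*_k})\|>\epsilon)\to0$ for every $\epsilon>0$.
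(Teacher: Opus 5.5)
Your proposal is correct and follows essentially the same route as the paper: energy expansion, dropping $\langle F(y_t),y_t-\theta^*\rangle$ by monotonicity, polarization, bounding $\|F(y_t)-F(\theta_t)\|$ via Proposition~\ref{prop:equiv_formulation} with the step-size normalization, averaging over $u_t$ to get a net $\tfrac14\eta_t^2\|F(\theta_t)\|^2$ descent, telescoping, and Markov through the increasing map $x\mapsto \nu^2x^2/(K_0+C_\alpha x^\alpha)^2$. Three minor points: your case split is unnecessary because $\eta_t\le \nu/(C_\alpha\|F(\theta_t)\|^\alpha)$ holds unconditionally; the stated condition on $\nu$ makes each of the two pieces at most $1/8$, so it gives $\kappa\le 1/4$ rather than $1/8$, which still suffices; and your ``simply bound $|1-2u_t|\le 1$'' alternative works only if, as in the paper, you first expand $\|F(y_t)\|^2$ around $F(\theta_t)$, keep the mean-zero term $(1-2u_t)\|F(\theta_t)\|^2$ exact, and use $|1-2u_t|\le1$ only in the Young bound on the cross term (replacing $(1-2u_t)\|F(y_t)\|^2$ by $\|F(y_t)\|^2$ wipes out the expected descent).
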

The proof is available in Appendix \ref{sec:rampage_alpha_monotone}. Note that when $L_1 = 0$ and $L_0 = L$, the operator is $L$-Lipschitz in the usual sense. In this limiting case, $K_1 = K_2 = 0$ and $K_0 =L$, and the step size reduces to $\eta = \nu/L$. Repeating the preceding argument directly gives the sufficient condition $\eta_t\le \frac{1}{8L}$. Thus, compared to $\eta < \frac{\sqrt{2}}{4L}$ which we established for Lipschitz operators directly in Theorem \ref{thm:es_rmp_convergence}, the upper bound is slightly more restrictive.
\subsection{Monotone Variational Inequalities and SS-RAMPAGE}
Recall we proposed the following Symmetrically Scaled (SS) variant. At each iteration $t \ge 0$, given a base iterate $\theta_t \in \mathcal{X}$ and a step size $\eta > 0$, we draw a single uniform random variable $u_t \sim \mathrm{Unif}([0, 1])$ and execute
\begin{equation}
\begin{aligned}
y_t = \Pi_{\mathcal{X}} \big( \theta_t - 2\eta u_t F(\theta_t) \big), \quad
\theta_{t+1} = \Pi_{\mathcal{X}} \big( \theta_t - 2\eta u_t F(y_t) \big).
\end{aligned}
\end{equation}

\begin{theorem} \label{thm:cp_rmp_monotone}
Let $F: \mathbb{R}^p \to \mathbb{R}^p$ be an $L$-Lipschitz continuous and monotone operator. Assume the Variational Inequality admits a solution $\theta^* \in \mathcal{X}$. Let $\{\theta_t\}_{t \ge 0}$ denote the sequence generated by \eqref{eq:ss-rampage}. If the step size satisfies $0 < \eta < \frac{1}{2L}$, then with $C_{SS} = 1 - 4\eta^2 L^2 > 0$
\begin{equation}
\min_{0 \le l \le k} \mathbb{E}\left[\|\theta_l - y_l\|^2\right] \le \frac{1}{k+1} \sum_{l=0}^k \mathbb{E}\left[\|\theta_l - y_l\|^2\right] \le \frac{\|\theta_0 - \theta^*\|^2}{C_{SS} (k+1)}.
\end{equation}
\end{theorem}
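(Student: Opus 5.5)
We follow the classical extragradient analysis for projected VIs, run for a fixed realization $u_t$ with effective step size $\gamma_t := 2\eta u_t \in [0,2\eta]$. Conditioned on $u_t$, \eqref{eq:ss-rampage} is exactly projected EG with step $\gamma_t$ used in both the extrapolation and the update; this is why the scaling is symmetric. We derive a one-step inequality of the form
\begin{equation*}
\|\theta_{t+1}-\theta^*\|^2 \le \|\theta_t-\theta^*\|^2 - (1-\gamma_t^2L^2)\|\theta_t-y_t\|^2,
\end{equation*}
and then handle the randomness by lower bounding $1-\gamma_t^2L^2 \ge 1-4\eta^2L^2 = C_{SS}$, using $u_t\le 1$. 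Only after that do we take expectations.

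\textbf{One-step inequality.} First apply the projection characterization to $\theta_{t+1}=\Pi_{\mathcal{X}}(\theta_t-\gamma_t F(y_t))$ with test point $\theta^*$:
\begin{equation*}
\|\theta_{t+1}-\theta^*\|^2 \le \|\theta_t-\theta^*\|^2-\|\theta_t-\theta_{t+1}\|^2 - 2\gamma_t\langle F(y_t),\theta_{t+1}-\theta^*\rangle .
\end{equation*}
Split the last inner product as $\langle F(y_t),\theta_{t+1}-y_t\rangle+\langle F(y_t),y_t-\theta^*\rangle$.

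For the second piece, monotonicity gives $\langle F(y_t),y_t-\theta^*\rangle\ge\langle F(\theta^*),y_t-\theta^*\rangle$. This is $\ge 0$ because $\theta^*$ solves \eqref{eq:vi} and $y_t\in\mathcal{X}$. This piece can therefore be dropped.

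For the first piece, use the projection inequality for $y_t=\Pi_{\mathcal{X}}(\theta_t-\gamma_tF(\theta_t))$ with test point $\theta_{t+1}\in\mathcal{X}$, namely $\langle \theta_t-\gamma_tF(\theta_t)-y_t,\theta_{t+1}-y_t\rangle\le 0$. Writing $-\|\theta_t-\theta_{t+1}\|^2=-\|\theta_t-y_t\|^2-\|y_t-\theta_{t+1}\|^2-2\langle\theta_t-y_t,y_t-\theta_{t+1}\rangle$ and combining with this inequality yields
\begin{equation*}
\|\theta_{t+1}-\theta^*\|^2\le\|\theta_t-\theta^*\|^2-\|\theta_t-y_t\|^2-\|y_t-\theta_{t+1}\|^2+2\gamma_t\langle F(\theta_t)-F(y_t),\theta_{t+1}-y_t\rangle.
\end{equation*}
By Cauchy--Schwarz, $L$-Lipschitzness and Young's inequality, the cross term is at most $\gamma_t^2L^2\|\theta_t-y_t\|^2+\|\theta_{t+1}-y_t\|^2$. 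The $\|\theta_{t+1}-y_t\|^2$ terms then cancel, which gives the one-step inequality above.

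\textbf{Telescoping.} Since $\gamma_t\le 2\eta$ almost surely, we have $\|\theta_{t+1}-\theta^*\|^2\le\|\theta_t-\theta^*\|^2-C_{SS}\|\theta_t-y_t\|^2$ pathwise. Taking total expectation and summing over $t=0,\dots,k$ gives $C_{SS}\sum_{l=0}^k\mathbb{E}\|\theta_l-y_l\|^2\le\|\theta_0-\theta^*\|^2$, because $\theta_0$ is deterministic and the final distance is nonnegative. Dividing by $k+1$ and bounding the minimum by the average completes the claim.

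\textbf{Main obstacle.} The delicate step is the cross-term bound: it must close with exactly the same $\gamma_t$ in both projections. With asymmetric scaling (step $\eta$ in the update, $2\eta u_t$ in the extrapolation) the Lipschitz term would be $\eta L\|\theta_t-y_t\|$ while the projection identity supplies a factor tied to $\gamma_t$. Then the $\|\theta_{t+1}-y_t\|^2$ term would not cancel for small $u_t$, which is the difficulty the symmetric variant is designed to avoid.
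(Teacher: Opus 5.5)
Your proposal is correct and follows essentially the same route as the paper: the projection inequality for $\theta_{t+1}$ tested at $\theta^*$, dropping $\langle F(y_t), y_t-\theta^*\rangle$ by monotonicity and the VI condition, the projection inequality for $y_t$ tested at $\theta_{t+1}$, and Young plus Lipschitz to cancel $\|\theta_{t+1}-y_t\|^2$. This gives the pathwise bound with coefficient $1-4\eta^2u_t^2L^2\ge C_{SS}$, after which you take expectations and telescope; the only cosmetic difference is that you expand $\|\theta_t-\theta_{t+1}\|^2$ directly rather than citing the three-point polarization identity, which is the same algebra.
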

The proof is available in Appendix \ref{sec:cp_rmp_monotone}. We further state the following result on the expected restricted gap, whose proof is available in Appendix \ref{sec:cp_rmp_gap}.
\begin{corollary} \label{cor:cp_rmp_gap}
Let the conditions of Theorem \ref{thm:cp_rmp_monotone} hold. Assume that the sequence $\{\theta_l\}_{l=0}^k$ and the operator evaluations $\{F(\theta_l)\}_{l=0}^k$ are bounded almost surely such that $\sup_{l \ge 0, v \in \mathcal{B}} \|\theta_l - v\| \le D_{\mathcal{B}}$ and $\sup_{l \ge 0} \|F(\theta_l)\| \le G_{\mathcal{B}}$. Let $l^* = \mathrm{argmin}_{0 \le l \le k} \mathbb{E}[\|\theta_l - y_l\|^2]$ denote the index of the expected best iterate generated by \eqref{eq:ss-rampage} up to iteration $k$. Then, evaluating Lemma \ref{lem:gap} with the exact projection step size $2\eta u_{l^*}$, where $u_l\sim \mathrm{Unif}([\delta, 1])$ for some $\delta\in(0,1]$, yields
\begin{equation}
\mathbb{E}\left[\operatorname{Gap}_{\mathcal{B}}(\theta_{l^*})\right] \le \mathbb{E}\left[ \left( G_{\mathcal{B}} + \frac{D_{\mathcal{B}}}{2\eta u_{l^*}} \right) \frac{\|\theta_0 - \theta^*\|}{\sqrt{\big(1 - 4\eta^2 L^2\big)(k+1)}} \right].
\end{equation}
\end{corollary}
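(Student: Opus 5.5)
The plan is to apply Lemma \ref{lem:gap} at the random iterate $\theta_{l^*}$, with the projection step size in the lemma taken to be $\eta' = 2\eta u_{l^*}$, the same step used in the \eqref{eq:ss-rampage} extrapolation. With this choice the point $z_{l^*} = \Pi_{\mathcal{X}}(\theta_{l^*} - 2\eta u_{l^*} F(\theta_{l^*}))$ is exactly $y_{l^*}$, so the residual is $r_{l^*} = \theta_{l^*} - y_{l^*}$. The lemma is a deterministic inequality, valid for every realization. Combined with the almost-sure bounds $D_{\mathcal{B}}$ and $G_{\mathcal{B}}$, it gives pathwise
\begin{equation*}
\operatorname{Gap}_{\mathcal{B}}(\theta_{l^*}) \le \|\theta_{l^*} - y_{l^*}\|\left( G_{\mathcal{B}} + \frac{D_{\mathcal{B}}}{2\eta u_{l^*}}\right).
\end{equation*}
The restriction $u_l \sim \mathrm{Unif}([\delta,1])$ with $\delta>0$ is needed only so that $1/u_{l^*} \le 1/\delta$, which keeps the multiplier finite and integrable. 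I would check that the proof of Theorem \ref{thm:cp_rmp_monotone} goes through unchanged on $[\delta,1]$. That proof is pathwise, apart from $u_t\le 1$, which gives the factor $4\eta^2L^2$, so this should hold.

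Next I take expectations. The target has the form $\mathbb{E}[(G_{\mathcal{B}} + D_{\mathcal{B}}/(2\eta u_{l^*}))\cdot c]$, where $c = \|\theta_0-\theta^*\|/\sqrt{(1-4\eta^2L^2)(k+1)}$. It therefore suffices to bound $\|\theta_{l^*}-y_{l^*}\|$ by $c$ inside the expectation. I first use Theorem \ref{thm:cp_rmp_monotone}: since $l^*$ minimizes $\mathbb{E}\|\theta_l-y_l\|^2$ over a deterministic index set, we have $\mathbb{E}\|\theta_{l^*}-y_{l^*}\|^2 \le c^2$, and Jensen then gives $\mathbb{E}\|\theta_{l^*}-y_{l^*}\| \le c$. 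Note that $l^*$ is a deterministic index, defined through expectations, so no selection bias arises.

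The main obstacle is decoupling the product $\|\theta_{l^*}-y_{l^*}\|\cdot u_{l^*}^{-1}$. The residual depends on $u_{l^*}$, so the expectation does not factor. Cauchy--Schwarz would give $\sqrt{\mathbb{E}\|r\|^2}\sqrt{\mathbb{E}[u^{-2}]}$ instead, and that is a slightly different form from the stated one. To reach the stated form, I would use a conditional argument. Condition on $\mathcal{F}_{l^*}$, the history up to $\theta_{l^*}$, so that $\theta_{l^*}$ and $F(\theta_{l^*})$ are fixed. Then $\|\theta_{l^*}-y_{l^*}\| \le 2\eta u_{l^*}\|F(\theta_{l^*})\|$ by nonexpansiveness of the projection, with $\theta_{l^*}\in\mathcal{X}$. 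This makes the $D_{\mathcal{B}}$ term $\le D_{\mathcal{B}}\|F(\theta_{l^*})\|$, which cancels the $u^{-1}$. I would then aim to assemble the bound by keeping the Theorem-based estimate $c$ as the factor multiplying the random quantity $(G_{\mathcal{B}} + D_{\mathcal{B}}/(2\eta u_{l^*}))$. This amounts to interpreting the stated inequality as the expectation of the pathwise bound with the residual replaced by its root-mean-square bound $c$. If the rigorous decoupling fails, I will state the result via Cauchy--Schwarz instead, using $\mathbb{E}[u^{-2}] = 1/\delta$ for $u\sim\mathrm{Unif}([\delta,1])$.
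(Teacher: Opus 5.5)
Your route is the paper's route. You apply Lemma \ref{lem:gap} with step $2\eta u_{l^*}$ so that $z_{l^*}=y_{l^*}$. This gives the pathwise bound $\operatorname{Gap}_{\mathcal{B}}(\theta_{l^*})\le\|r_{l^*}\|\,g(u_{l^*})$ with $g(u)=G_{\mathcal{B}}+D_{\mathcal{B}}/(2\eta u)$. You note that Theorem \ref{thm:cp_rmp_monotone} only uses $u_t\le 1$, and you finish with Jensen and the best-iterate bound $\mathbb{E}\|r_{l^*}\|^2\le c^2$ (your $c$).

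However, the step you flag as the main obstacle stays open, so as written the proposal does not prove the stated inequality. Your three attempts at it all fall short:
\begin{enumerate}
\item The conditional estimate $\|r_{l^*}\|\le 2\eta u_{l^*}\|F(\theta_{l^*})\|$ cancels $u_{l^*}^{-1}$ only by turning the $D_{\mathcal{B}}$-term into $D_{\mathcal{B}}\|F(\theta_{l^*})\|\le D_{\mathcal{B}}G_{\mathcal{B}}$. That term has no $1/\sqrt{k+1}$ decay.
\item Interpreting the inequality as the pathwise bound with $\|r_{l^*}\|$ replaced by $c$ is not an argument.
\item The Cauchy--Schwarz fallback proves a different, weaker statement, with $\sqrt{\mathbb{E}[g(u_{l^*})^2]}$ in place of $\mathbb{E}[g(u_{l^*})]$.
\end{enumerate}
The paper's proof makes the same move without comment. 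Its phrase ``taking the expectation and applying Jensen's inequality \ldots\ isolates the residual norm'' tacitly replaces $\mathbb{E}[\|r_{l^*}\|\,g(u_{l^*})]$ by $\mathbb{E}\|r_{l^*}\|\cdot\mathbb{E}[g(u_{l^*})]$. Jensen alone does not justify that.

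The missing idea is a monotonicity and association argument. Because $l^*$ is deterministic, $u_{l^*}$ is independent of $\mathcal{F}_{l^*}$. Given $\mathcal{F}_{l^*}$, the residual $f(u)=\|\theta_{l^*}-\Pi_{\mathcal{X}}(\theta_{l^*}-2\eta uF(\theta_{l^*}))\|$ is nondecreasing in $u$. To see this, fix $0<s<s'$ and put $a=\Pi_{\mathcal{X}}(\theta-sF(\theta))-\theta$ and $b=\Pi_{\mathcal{X}}(\theta-s'F(\theta))-\theta$.
\begin{enumerate}
\item Firm nonexpansiveness gives $(s'-s)\langle b-a,-F(\theta)\rangle\ge\|b-a\|^2\ge 0$.
\item The projection inequality at $\theta-sF(\theta)$, tested against $\theta+b\in\mathcal{X}$, gives $\langle a,b-a\rangle\ge s\langle -F(\theta),b-a\rangle\ge 0$.
\item Hence $\|a\|^2\le\langle a,b\rangle\le\|a\|\|b\|$, so $\|a\|\le\|b\|$.
\end{enumerate}
Since $g$ is decreasing, Chebyshev's association inequality for oppositely monotone functions of one variable applies; both functions are bounded because $u\ge\delta>0$. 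It gives $\mathbb{E}[f(u_{l^*})g(u_{l^*})\mid\mathcal{F}_{l^*}]\le\mathbb{E}[\|r_{l^*}\|\mid\mathcal{F}_{l^*}]\,\mathbb{E}[g(u_{l^*})]$. Taking total expectation, then applying Jensen and the theorem, yields $\mathbb{E}[\operatorname{Gap}_{\mathcal{B}}(\theta_{l^*})]\le c\,\mathbb{E}[g(u_{l^*})]$. Because $c$ is deterministic, this is exactly the stated right-hand side.
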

The result thus demonstrates the standard $\mathcal{O}(1/\sqrt{k})$ convergence for this setting.

\subsection{Application to Min-Max Games}
We now apply the \eqref{eq:rampage} to min-max optimization, specifically the unconstrained smooth convex-concave game. We will consider both deterministic and stochastic game settings.
\subsubsection{Deterministic Games}
The theorem below demonstrates the standard $\mathcal{O}(1/k)$ convergence for this setting.
\begin{theorem} \label{thm:ergodic_esrmp}
Let $F$ be the $L$-Lipschitz continuous monotone operator associated with the smooth convex-concave function $f$. Let $\{\theta_t\}_{t=0}^k$ and $\{y_t\}_{t=0}^k$ be the sequences generated by \eqref{eq:rampage}. Define the ergodic average over $k$ iterations as $\bar{y}_k = (\bar{x}_k, \bar{z}_k) = \frac{1}{k+1} \sum_{t=0}^k y_t$. If the step size satisfies $0 < \eta \le \frac{1}{2L}$, then for any reference point $\theta = (x, z)$,
\begin{equation}
\mathbb{E} \left[ f(\bar{x}_k, z) - f(x, \bar{z}_k) \right] \le \frac{\|\theta_0 - \theta\|^2}{2\eta (k+1)}.
\end{equation}
\end{theorem}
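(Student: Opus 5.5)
We follow the standard extragradient ergodic argument, conditioning on $u_t$ and taking expectations at the very end. Fix a reference point $\theta=(x,z)$. By \eqref{eq:convex_concave_bound} applied at $y_t$,
$$f(x_t^m,z)-f(x,z_t^m)\le\langle F(y_t),y_t-\theta\rangle .$$
Since $f(\cdot,z)-f(x,\cdot)$ is convex in $(x_t^m,z_t^m)$, Jensen's inequality lets us bound $f(\bar x_k,z)-f(x,\bar z_k)$ by the average of these terms. So it suffices to show, almost surely in $u_t$ up to the last step,
$$2\eta\,\langle F(y_t),y_t-\theta\rangle\le \|\theta_t-\theta\|^2-\mathbb{E}_{u_t}\|\theta_{t+1}-\theta\|^2$$
after conditioning on $\theta_t$. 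This is needed in conditional expectation only. Telescoping and the tower rule then give the bound $\|\theta_0-\theta\|^2/(2\eta(k+1))$.

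\textbf{Per-step identity.} Write $\theta_{t+1}=\theta_t-\eta F(y_t)$. Then
$$\|\theta_{t+1}-\theta\|^2=\|\theta_t-\theta\|^2-2\eta\langle F(y_t),\theta_t-\theta\rangle+\eta^2\|F(y_t)\|^2 .$$
Split $\theta_t-\theta=(\theta_t-y_t)+(y_t-\theta)$ and use $\theta_t-y_t=2\eta u_tF(\theta_t)$. This yields
$$2\eta\langle F(y_t),y_t-\theta\rangle=\|\theta_t-\theta\|^2-\|\theta_{t+1}-\theta\|^2+\eta^2\|F(y_t)\|^2-4\eta^2u_t\langle F(y_t),F(\theta_t)\rangle .$$
It therefore remains to show that the error term
$$E_t:=\mathbb{E}_{u_t}\big[\|F(y_t)\|^2-4u_t\langle F(y_t),F_t\rangle\big]$$
is nonpositive, where $F_t=F(\theta_t)$.

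\textbf{Controlling the error term.} Write $F(y_t)=F_t+\Delta_t$ with $\|\Delta_t\|\le L\|y_t-\theta_t\|=2\eta L u_t\|F_t\|$. Expanding,
$$\|F(y_t)\|^2-4u_t\langle F(y_t),F_t\rangle=(1-4u_t)\|F_t\|^2+2(1-2u_t)\langle\Delta_t,F_t\rangle+\|\Delta_t\|^2 .$$
In expectation the first piece gives $\mathbb{E}[1-4u_t]=-1$, i.e.\ a contribution of $-\|F_t\|^2$. Bound the cross term by $2|1-2u_t|\cdot 2\eta Lu_t\|F_t\|^2$, with $\mathbb{E}[|1-2u_t|u_t]=1/4$, so it contributes at most $\eta L\|F_t\|^2$. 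Bound the last term by $4\eta^2L^2u_t^2\|F_t\|^2$, with $\mathbb{E}[u_t^2]=1/3$, so it contributes at most $\tfrac43\eta^2L^2\|F_t\|^2$. Altogether
$$E_t\le\Big(-1+\eta L+\tfrac43\eta^2L^2\Big)\|F_t\|^2 ,$$
which is $\le(-1+\tfrac12+\tfrac13)\|F_t\|^2<0$ for $\eta\le 1/(2L)$. Hence $E_t\le 0$ and the claimed inequality follows.

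\textbf{Main obstacle.} The delicate step is handling the cross term through its expectation over $u_t$ rather than pointwise. Pointwise, for $u_t$ near $0$, the coefficient $1-4u_t$ is positive and the per-step bound fails; only the averaged coefficient $-1$ makes it work. This is also why the conclusion holds only in expectation. It requires that $y_t$ depend only on $\theta_t$ and $u_t$, so that $\mathbb{E}_{u_t}$ can be taken conditionally on $\mathcal{F}_t$. It also requires linearity of expectation with the Jensen step: we first average the per-iterate gap bounds and then take expectations, which is legitimate because the reference point $\theta$ is fixed (not a supremum).
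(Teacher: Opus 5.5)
Your proof is correct and follows the paper's argument essentially step for step. Both use the same distance expansion, the split through $y_t$ with $\theta_t-y_t=2\eta u_tF(\theta_t)$, nonpositivity of the conditional expectation of the $\mathcal{O}(\eta^2)$ error term, telescoping, and Jensen on $f(\cdot,z)-f(x,\cdot)$.

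The only difference is how that error term is bounded, and it is cosmetic. The paper completes the square as $\|F(y_t)-2u_tF(\theta_t)\|^2-4u_t^2\|F(\theta_t)\|^2$ and applies $\|a+b\|^2\le 2\|a\|^2+2\|b\|^2$, obtaining $\tfrac23(4\eta^2L^2-1)\eta^2\|F(\theta_t)\|^2$. Your Cauchy--Schwarz bound with $\mathbb{E}[|1-2u_t|u_t]=1/4$ instead gives $(-1+\eta L+\tfrac43\eta^2L^2)\eta^2\|F(\theta_t)\|^2$, which is also nonpositive for $\eta\le 1/(2L)$, and in fact slightly beyond it.
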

The proof is available in Appendix \ref{sec:ergodic_esrmp}. Note that the result shows ergodic convergence to an approximate Nash equilibrium.
\subsubsection{Stochastic Convex-Concave Games}
We now extend our analysis to stochastic games under Assumption \ref{ass:sfo} and state a result on the convergence properties of \eqref{eq:rampage}. The proof is available in Appendix \ref{sec:sfo_ergodic_rampage}. 
\begin{theorem} \label{thm:sfo_ergodic_rampage}
Let $F$ be the $L$-Lipschitz continuous monotone operator associated with the smooth convex-concave function $f$. Let $\{\theta_t\}_{t=0}^k$ and $\{y_t\}_{t=0}^k$ be the sequences generated by the SFO \eqref{eq:rampage} method. Define the ergodic average over $k$ iterations as $\bar{y}_k = (\bar{x}_k, \bar{z}_k) = \frac{1}{k+1} \sum_{t=0}^k y_t$. If the constant step size satisfies $0 < \eta \le \frac{1}{L \sqrt{2 C_R}}$, where $C_R = 4.4$, then for any reference point $\theta = (x, z)$,
\begin{equation}
\mathbb{E} \left[ f(\bar{x}_k, z) - f(x, \bar{z}_k) \right] \le \frac{\|\theta_0 - \theta\|^2}{2\eta (k+1)} + \frac{\eta \big(2 + \eta^2 L^2 C_R\big)}{2} \sigma^2.
\end{equation}
\end{theorem}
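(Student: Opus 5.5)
The plan follows the usual ergodic extragradient argument, run under stochastic gradients. In SFO \eqref{eq:rampage}, each iteration draws $\xi_t,\xi'_t$ independently of $u_t$ and sets
\begin{equation*}
g_t=\hat F(\theta_t,\xi_t),\qquad y_t=\theta_t-2\eta u_t g_t,\qquad \hat g_t=\hat F(y_t,\xi'_t),\qquad \theta_{t+1}=\theta_t-\eta \hat g_t .
\end{equation*}
Write $\epsilon_t=g_t-F(\theta_t)$ and $\epsilon'_t=\hat g_t-F(y_t)$. Let $\mathcal{F}_t$ be the $\sigma$-algebra generated by $\theta_t,u_t,\xi_t$. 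Then $\mathbb{E}[\epsilon'_t\mid\mathcal{F}_t]=0$ and $\mathbb{E}\|\epsilon'_t\|^2\le\sigma^2$.

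\textbf{Step 1 (gap to telescoping).} By \eqref{eq:convex_concave_bound} and Jensen's inequality (convex in $x$, concave in $z$),
\begin{equation*}
f(\bar x_k,z)-f(x,\bar z_k)\le \frac{1}{k+1}\sum_{t=0}^k\langle F(y_t),y_t-\theta\rangle .
\end{equation*}
Split $\langle F(y_t),y_t-\theta\rangle=\langle \hat g_t,y_t-\theta\rangle-\langle\epsilon'_t,y_t-\theta\rangle$. Since $y_t$ is $\mathcal{F}_t$-measurable and $\theta$ is fixed, the second term has zero expectation. For the first term, split further as $\langle\hat g_t,\theta_t-\theta\rangle+\langle\hat g_t,y_t-\theta_t\rangle$. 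The three-point identity with $\theta_{t+1}=\theta_t-\eta\hat g_t$ gives
\begin{equation*}
\langle \hat g_t,\theta_t-\theta\rangle=\frac{1}{2\eta}\big(\|\theta_t-\theta\|^2-\|\theta_{t+1}-\theta\|^2\big)+\frac{\eta}{2}\|\hat g_t\|^2,
\end{equation*}
and $\langle \hat g_t,y_t-\theta_t\rangle=-2\eta u_t\langle\hat g_t,g_t\rangle$. So each step leaves a residual $\eta\big(\tfrac12\|\hat g_t\|^2-2u_t\langle \hat g_t,g_t\rangle\big)$, which must be bounded in expectation by $\tfrac{\eta(2+\eta^2L^2C_R)}{2}\sigma^2$ plus nonpositive terms.

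\textbf{Step 2 (controlling the residual).} Expand $\hat g_t=F(y_t)+\epsilon'_t$. The cross terms involving $\epsilon'_t$ vanish in conditional expectation, and this produces the $\tfrac{\eta}{2}\sigma^2$ contribution. What remains is $\tfrac12\|F(y_t)\|^2-2u_t\langle F(y_t),g_t\rangle$. Write $F(y_t)=g_t+\Delta_t$ with $\Delta_t=F(y_t)-F(\theta_t)-\epsilon_t$. Lipschitzness gives $\|F(y_t)-F(\theta_t)\|\le 2\eta u_tL\|g_t\|$. Then $\tfrac12\|g_t+\Delta_t\|^2-2u_t\langle g_t+\Delta_t,g_t\rangle$ becomes a quadratic form in $\|g_t\|$, $\|\epsilon_t\|$, and $u_t$. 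I would average over $u_t$ first, using $\mathbb{E}u_t=\tfrac12$, $\mathbb{E}u_t^2=\tfrac13$, $\mathbb{E}u_t^3=\tfrac14$, $\mathbb{E}u_t^4=\tfrac15$. This is where $C_R=4.4=22/5$ should come from, as a sum of moment-weighted Lipschitz contributions. The $\|g_t\|^2$ coefficient is then roughly $\tfrac12-1+\eta^2L^2C_R$, and it is made nonpositive by $\eta\le 1/(L\sqrt{2C_R})$. The noise $\epsilon_t$ enters through $\Delta_t$ (the term $-\epsilon_t$) and through $g_t$ inside the Lipschitz displacement. Young's inequality and $\mathbb{E}\|\epsilon_t\|^2\le\sigma^2$ give the extra $\sigma^2$ and $\eta^2L^2C_R\sigma^2/2$ pieces. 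Together with Step 1, the total noise floor is $\tfrac{\eta}{2}(2+\eta^2L^2C_R)\sigma^2$.

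\textbf{Step 3 (telescoping).} Sum over $t=0,\dots,k$, drop $-\|\theta_{k+1}-\theta\|^2$, take total expectation, and divide by $k+1$ to obtain the stated bound.

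The main obstacle is Step 2. There, the cross term $-2u_t\langle\Delta_t,g_t\rangle$ must be controlled without wasting the negative $-\|g_t\|^2$ budget, and $u_t$ is correlated with $\|y_t-\theta_t\|$. I would first try keeping everything conditional on $(\theta_t,\xi_t)$ and integrating the explicit polynomial in $u_t$ exactly. Only after that would I apply Young's inequality with weights tuned so that the $\|g_t\|^2$ coefficient closes at the step-size threshold $1/(L\sqrt{2C_R})$.
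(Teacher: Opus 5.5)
Your Steps 1 and 3 are sound and follow the paper's skeleton: distance expansion, decoupling through $y_t$, the convex--concave inequality, Jensen, and telescoping. Your handling of $\epsilon'_t$ also correctly produces the $\tfrac{\eta}{2}\sigma^2$ piece.

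The gap is Step 2. It carries all of the quantitative content of the theorem ($C_R=4.4$, the step-size threshold, the noise floor), and you only sketch it, reading the constants off the statement rather than deriving them. Your ``$\tfrac12-1+\eta^2L^2C_R$'' is the paper's coefficient in the doubled normalization. Moreover, it multiplies $\|F(\theta_t)\|^2$, not $\|g_t\|^2$. In your normalization, what you must show is $\mathbb{E}[\tfrac12\|F(y_t)\|^2-2u_t\langle F(y_t),g_t\rangle]\le\tfrac12(1+\eta^2L^2C_R)\sigma^2$, up to a nonpositive multiple of $\|F(\theta_t)\|^2$.

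The grouping you propose does not reach this if carried out in the natural way. Expanding around the noisy $g_t$ means Young's inequality on $(1-2u_t)\langle g_t,\Delta_t\rangle$ is paid for with $\|g_t\|^2$, which carries $\sigma^2$. In addition, $\|\Delta_t\|^2=\|A_t-\epsilon_t\|^2$, with $A_t=F(y_t)-F(\theta_t)$, brings in $\|\epsilon_t\|^2$ with weight $\tfrac12+(1-2u_t)^2$. For instance, $ab\le\tfrac14a^2+b^2$ followed by $\|\Delta_t\|^2\le 2\|A_t\|^2+2\|\epsilon_t\|^2$ gives a noise coefficient $-\tfrac14+\tfrac53=\tfrac{17}{12}$ instead of $\tfrac12$. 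It also gives a Lipschitz moment $\mathbb{E}[4u^2(1+2(1-2u)^2)]=2.4$ instead of $2.2$. So that bound is weaker than the stated one even as $\eta\to0$.

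Your proposed order (integrate over $u_t$ first, Young afterwards) is also not available. $A_t$ depends nonlinearly on $u_t$, so every inner product involving $A_t$ must first be turned into $\|A_t\|^2$ and bounded pointwise by $4\eta^2L^2u_t^2\|g_t\|^2$.

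The missing idea is to expand around the deterministic $F_t=F(\theta_t)$ instead of $g_t$. With $F(y_t)=F_t+A_t$ and $g_t=F_t+\epsilon_t$, the residual is linear in $g_t$ once $y_t$ is fixed, and it equals
\[
\bigl(\tfrac12-2u_t\bigr)\|F_t\|^2-2u_t\langle F_t,\epsilon_t\rangle+(1-2u_t)\langle F_t,A_t\rangle-2u_t\langle \epsilon_t,A_t\rangle+\tfrac12\|A_t\|^2 .
\]
Now apply Young twice. First, $(1-2u_t)\langle F_t,A_t\rangle\le\tfrac14\|F_t\|^2+(1-2u_t)^2\|A_t\|^2$, whose partner carries no noise. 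Second, $-2u_t\langle\epsilon_t,A_t\rangle\le u_t\|\epsilon_t\|^2+u_t\|A_t\|^2$, whose noise averages to $\tfrac12\sigma^2$. The term $-2u_t\langle F_t,\epsilon_t\rangle$ has zero mean because $u_t$ and $\xi_t$ are independent.

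The total weight on $\|A_t\|^2$ is then $\tfrac32-3u_t+4u_t^2=\tfrac12P(u_t)$, with $P(u)=3-6u+8u^2$. The Lipschitz bound, together with $\mathbb{E}\|g_t\|^2=\|F_t\|^2+\sigma_t^2$, gives $\mathbb{E}[2u^2P(u)]=2.2=C_R/2$. This yields the coefficient $-(\tfrac14-\tfrac{C_R}{2}\eta^2L^2)$ on $\|F_t\|^2$, which is nonpositive exactly when $\eta\le 1/(L\sqrt{2C_R})$, and the noise $\tfrac12(1+\eta^2L^2C_R)\sigma^2$ you need.

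This is the paper's argument written in your normalization: polarization, expansion of $\|F(y_t)\|^2$ around $F_t$, Young against $\|F_t\|^2$, and the split $\|A_t+N_t\|^2\le 2\|A_t\|^2+2\|N_t\|^2$ applied only inside the $2u_t$-weighted term.
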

\section{Proofs of RAMPAGE+'s Results Stated in \ref{sec:rampage+}}
\subsection{Proof of Theorem \ref{thm:esramp_cocoercive}}\label{sec:esramp_cocoercive}
\begin{proof}[Proof of Theorem \ref{thm:esramp_cocoercive}]
We track $\|\theta_{t+1} - \theta^*\|^2$. Substituting the primary \eqref{eq:rampage+} update rule yields
\begin{equation} \label{eq:esramp_base_expansion}
\|\theta_{t+1} - \theta^*\|^2 = \|\theta_t - \theta^*\|^2 - 2\eta \langle \bar{F}_t, \theta_t - \theta^* \rangle + \eta^2 \|\bar{F}_t\|^2.
\end{equation}
We decouple the inner product via $y_t$ and $\tilde{y}_t$
\begin{equation}
\begin{aligned}
-2\eta \langle \bar{F}_t, \theta_t - \theta^* \rangle &= -\eta \langle F(y_t), \theta_t - \theta^* \rangle - \eta \langle F(\tilde{y}_t), \theta_t - \theta^* \rangle \\
&= -\eta \langle F(y_t), y_t - \theta^* \rangle - \eta \langle F(\tilde{y}_t), \tilde{y}_t - \theta^* \rangle \\
&\quad - 2\eta^2 u_t \langle F(y_t), F(\theta_t) \rangle - 2\eta^2 \tilde{u}_t \langle F(\tilde{y}_t), F(\theta_t) \rangle.
\end{aligned}
\end{equation}
Applying the polarization identity, $-2\langle a, b \rangle = \|a-b\|^2 - \|a\|^2 - \|b\|^2$, results in
\begin{equation}
\begin{aligned}
- 2\eta^2 u_t \langle F(y_t), F(\theta_t) \rangle &= -\eta^2 u_t \|F(y_t)\|^2 - \eta^2 u_t \|F(\theta_t)\|^2 + \eta^2 u_t \|F(y_t) - F(\theta_t)\|^2, \\
- 2\eta^2 \tilde{u}_t \langle F(\tilde{y}_t), F(\theta_t) \rangle &= -\eta^2 \tilde{u}_t \|F(\tilde{y}_t)\|^2 - \eta^2 \tilde{u}_t \|F(\theta_t)\|^2 + \eta^2 \tilde{u}_t \|F(\tilde{y}_t) - F(\theta_t)\|^2.
\end{aligned}
\end{equation}
Since $u_t + \tilde{u}_t = 1$, summing these components yields $-\eta^2 \|F(\theta_t)\|^2$. Applying the parallelogram law $\|a-b\|^2+\|a+b\|^2 = 2 \|a\|^2 + 2 \|b\|^2$ to $\eta^2 \|\bar{F}_t\|^2$ yields
\begin{equation}
\eta^2 \left\| \frac{1}{2} F(y_t) + \frac{1}{2} F(\tilde{y}_t) \right\|^2 = \frac{\eta^2}{2} \|F(y_t)\|^2 + \frac{\eta^2}{2} \|F(\tilde{y}_t)\|^2 - \frac{\eta^2}{4} \|F(y_t) - F(\tilde{y}_t)\|^2.
\end{equation}
Substituting these decompositions into \eqref{eq:esramp_base_expansion} results in
\begin{equation} \label{eq:esramp_grouped}
\begin{aligned}
\|\theta_{t+1} - \theta^*\|^2 &= \|\theta_t - \theta^*\|^2 - \eta \langle F(y_t), y_t - \theta^* \rangle - \eta \langle F(\tilde{y}_t), \tilde{y}_t - \theta^* \rangle \\
&\quad + \eta^2 \left( \frac{1}{2} - u_t \right) \|F(y_t)\|^2 + \eta^2 \left( \frac{1}{2} - \tilde{u}_t \right) \|F(\tilde{y}_t)\|^2 \\
&\quad - \eta^2 \|F(\theta_t)\|^2 + V_t,
\end{aligned}
\end{equation}
where $V_t$ isolates the residual
\begin{equation}
V_t = \eta^2 u_t \|F(y_t) - F(\theta_t)\|^2 + \eta^2 \tilde{u}_t \|F(\tilde{y}_t) - F(\theta_t)\|^2 - \frac{\eta^2}{4} \|F(y_t) - F(\tilde{y}_t)\|^2.
\end{equation}
By $\mu$-co-coercivity, $-\eta \langle F(y), y - \theta^* \rangle \le -\eta\mu \|F(y)\|^2$. Thus, the coefficient for $\|F(y_t)\|^2$ is $\frac{\eta^2}{2}(1 - 2u_t) - \eta\mu$. Since $u_t \in [0, 1]$, $(1 - 2u_t) \le 1$. Constraining $\eta \le 2\mu$ ensures this coefficient and its counterpart for $\tilde{y}_t$ remain non-positive almost surely. Discarding these terms yields
\begin{equation} \label{eq:esramp_variance_isolation}
\|\theta_{t+1} - \theta^*\|^2 \le \|\theta_t - \theta^*\|^2 - \eta^2 \|F(\theta_t)\|^2 + V_t.
\end{equation}
Define $A = F(y_t) - F(\theta_t)$ and $B = F(\tilde{y}_t) - F(\theta_t)$. Using $\|A - B\|^2 \ge (\|A\| - \|B\|)^2$ results in
\begin{equation}
V_t \le \eta^2 u_t \|A\|^2 + \eta^2 \tilde{u}_t \|B\|^2 - \frac{\eta^2}{4} \big( \|A\| - \|B\| \big)^2.
\end{equation}
By $L$-Lipschitz continuity of $F$, $\|A\| \le 2\eta L u_t \|F(\theta_t)\| = a$ and $\|B\| \le 2\eta L \tilde{u}_t \|F(\theta_t)\| = b$.  Maximizing the quadratic form $f(X,Y) = u_t X^2 + \tilde{u}_t Y^2 - \frac{1}{4}(X-Y)^2$ over $X \in [0, a]$ and $Y \in [0, b]$ occurs at $(a, b)$ since the Hessian determinant is $4u_t\tilde{u}_t - 1 \le 0$. Thus,
\begin{equation}
V_t \le \eta^2 \left( u_t a^2 + \tilde{u}_t b^2 - \frac{1}{4}(a-b)^2 \right).
\end{equation}
Substituting $a$ and $b$ yields
\begin{equation}
V_t \le 4\eta^4 L^2 \|F(\theta_t)\|^2 \left( u_t^3 + \tilde{u}_t^3 - \frac{1}{4}(u_t - \tilde{u}_t)^2 \right).
\end{equation}
Using $\tilde{u}_t = 1 - u_t$, we have $u_t^3 + \tilde{u}_t^3 = 1 - 3u_t + 3u_t^2$ and $-\frac{1}{4}(u_t - \tilde{u}_t)^2 = -u_t^2 + u_t - \frac{1}{4}$. Summing these results in
\begin{equation}
V_t \le 4\eta^4 L^2 \|F(\theta_t)\|^2 \left( \frac{3}{4} - 2u_t + 2u_t^2 \right).
\end{equation}
Since $3/4 - 2u_t + 2u_t^2 \leq 3/4$ almost surely
\begin{equation} \label{eq:esramp_final_variance}
V_t \le 3 \eta^4 L^2 \|F(\theta_t)\|^2.
\end{equation}
Substituting \eqref{eq:esramp_final_variance} into \eqref{eq:esramp_variance_isolation} yields
\begin{equation}
\|\theta_{t+1} - \theta^*\|^2\le \|\theta_t - \theta^*\|^2 - \eta^2 \left( 1 - 3 \eta^2 L^2 \right) \|F(\theta_t)\|^2.
\end{equation}
The condition $\eta \le \frac{1}{\sqrt{3}L}$ ensures $C_{RA+} > 0$. Summing from $t=0$ to $k$, and evaluating the minimum completes the proof.
\end{proof}
\subsection{Proof of Theorem \ref{thm:es_ramp_general}}\label{sec:es_ramp_general}
\begin{proof}[Proof of Theorem \ref{thm:es_ramp_general}]
We track $\|\theta_{t+1} - \theta^*\|^2$. By the primary update rule $\theta_{t+1} = \theta_t - \eta \bar{F}_t$,
\begin{equation} \label{eq:esramp_base}
\|\theta_{t+1} - \theta^*\|^2 = \|\theta_t - \theta^*\|^2 - 2\eta \langle \bar{F}_t, \theta_t - \theta^* \rangle + \eta^2 \|\bar{F}_t\|^2.
\end{equation}
We decouple the inner product via $y_t$ and $\tilde{y}_t$
\begin{equation}
\begin{aligned}
-2\eta \langle \bar{F}_t, \theta_t - \theta^* \rangle &= -\eta \langle F(y_t), y_t - \theta^* \rangle - \eta \langle F(\tilde{y}_t), \tilde{y}_t - \theta^* \rangle \\
&\quad - \eta \langle F(y_t), \theta_t - y_t \rangle - \eta \langle F(\tilde{y}_t), \theta_t - \tilde{y}_t \rangle.
\end{aligned}
\end{equation}
Substituting $\theta_t - y_t = 2\eta u_t F(\theta_t)$ and $\theta_t - \tilde{y}_t = 2\eta \tilde{u}_t F(\theta_t)$, and applying the polarization identity $-2\langle a, b \rangle = \|a-b\|^2 - \|a\|^2 - \|b\|^2$ to the cross-terms yields
\begin{equation}
\begin{aligned}
-2\eta^2 u_t \langle F(y_t), F(\theta_t) \rangle &= -\eta^2 u_t \|F(y_t)\|^2 - \eta^2 u_t \|F(\theta_t)\|^2 + \eta^2 u_t \|F(y_t) - F(\theta_t)\|^2, \\
-2\eta^2 \tilde{u}_t \langle F(\tilde{y}_t), F(\theta_t) \rangle &= -\eta^2 \tilde{u}_t \|F(\tilde{y}_t)\|^2 - \eta^2 \tilde{u}_t \|F(\theta_t)\|^2 + \eta^2 \tilde{u}_t \|F(\tilde{y}_t) - F(\theta_t)\|^2.
\end{aligned}
\end{equation}
Since $u_t + \tilde{u}_t = 1$, summing these yields $-\eta^2 \|F(\theta_t)\|^2$. Applying the parallelogram law $\|a-b\|^2+\|a+b\|^2 = 2 \|a\|^2 + 2 \|b\|^2$ to $\eta^2 \|\bar{F}_t\|^2$ results in
\begin{equation}
\eta^2 \|\bar{F}_t\|^2 = \frac{\eta^2}{2}\|F(y_t)\|^2 + \frac{\eta^2}{2}\|F(\tilde{y}_t)\|^2 - \frac{\eta^2}{4}\|F(y_t) - F(\tilde{y}_t)\|^2.
\end{equation}
Define $A = F(y_t) - F(\theta_t)$ and $B = F(\tilde{y}_t) - F(\theta_t)$. Substituting these into \eqref{eq:esramp_base} results in
\begin{equation} \label{eq:esramp_grouped3}
\begin{aligned}
\|\theta_{t+1} - \theta^*\|^2 &= \|\theta_t - \theta^*\|^2 - \eta^2 \|F(\theta_t)\|^2 - \eta \langle F(y_t), y_t - \theta^* \rangle - \eta \langle F(\tilde{y}_t), \tilde{y}_t - \theta^* \rangle \\
&\quad + \eta^2 \left(\frac{1}{2} - u_t\right)\|F(y_t)\|^2 + \eta^2 \left(\frac{1}{2} - \tilde{u}_t\right)\|F(\tilde{y}_t)\|^2 \\
&\quad + \eta^2 u_t \|A\|^2 + \eta^2 \tilde{u}_t \|B\|^2 - \frac{\eta^2}{4} \|A - B\|^2.
\end{aligned}
\end{equation}
By $\rho$-co-hypomonotonicity of $F$, $-\eta \langle F(y), y - \theta^* \rangle \le \eta\rho \|F(y)\|^2$. Substituting this into \eqref{eq:esramp_grouped3} and defining $c_t = \frac{1}{2} - u_t$, such that $\frac{1}{2} - \tilde{u}_t = -c_t$, the coefficients for $\|F(y_t)\|^2$ and $\|F(\tilde{y}_t)\|^2$ become $\eta\rho + \eta^2 c_t$ and $\eta\rho - \eta^2 c_t$, respectively.

Expanding 
\begin{equation}
 \|F(y_t)\|^2 = \|F(\theta_t)\|^2 + 2\langle F(\theta_t), A \rangle + \|A\|^2
\end{equation}
and
\begin{equation}
 \|F(\tilde{y}_t)\|^2 = \|F(\theta_t)\|^2 + 2\langle F(\theta_t), B \rangle + \|B\|^2,
\end{equation}
the coefficient of $\|F(\theta_t)\|^2$ becomes 
\begin{equation}
 (\eta\rho + \eta^2 c_t) + (\eta\rho - \eta^2 c_t) = 2\eta\rho,
\end{equation}
The total coefficient for $\|F(\theta_t)\|^2$ is $-( \eta^2 - 2\eta\rho )$.
The cross-term is $2\eta \langle F(\theta_t), (\rho + \eta c_t)A + (\rho - \eta c_t)B \rangle$. Define $Y_t = \rho(A+B) + \eta c_t (A-B)$. Applying Young's inequality,
\begin{equation}
 2\eta \langle F(\theta_t), Y_t \rangle \le \frac{1}{4} \eta^2 \|F(\theta_t)\|^2 + 4 \|Y_t\|^2. 
\end{equation}
The remaining coefficient for $\|F(\theta_t)\|^2$ reduces to $-\frac{3}{4}\eta^2 + 2\eta\rho$. Since $|c_t| \le 1/2$, 
\begin{equation}
 \|Y_t\| \le (\rho + \frac{\eta}{2})(\|A\| + \|B\|).
\end{equation}

By $L$-Lipschitz continuity of $F$, $\|A\| \le 2L\eta u_t \|F(\theta_t)\|$ and $\|B\| \le 2L\eta \tilde{u}_t \|F(\theta_t)\|$. Since $u_t + \tilde{u}_t = 1$, $\|A\| + \|B\| \le 2L\eta \|F(\theta_t)\|$. Thus, $4\|Y_t\|^2 \le 16L^2\eta^2(\rho + \frac{\eta}{2})^2 \|F(\theta_t)\|^2$.

The quadratic terms sum to 
\begin{equation}
 V_Q = (\eta\rho + \eta^2 c_t)\|A\|^2 + (\eta\rho - \eta^2 c_t)\|B\|^2 + \eta^2 u_t \|A\|^2 + \eta^2 \tilde{u}_t \|B\|^2.
\end{equation}
Substituting $c_t = 1/2 - u_t$, the coefficients for both $\|A\|^2$ and $\|B\|^2$ evaluate to $\eta\rho + \frac{\eta^2}{2}$. Thus, $V_Q = \eta(\rho + \frac{\eta}{2})(\|A\|^2 + \|B\|^2)$. Using the Lipschitz bounds, 
\begin{equation}
 \|A\|^2 + \|B\|^2 \le 4L^2\eta^2 (u_t^2 + \tilde{u}_t^2) \|F(\theta_t)\|^2.
\end{equation}
Since $u_t^2 + \tilde{u}_t^2 \le 1$, we have $V_Q \le 4L^2\eta^3(\rho + \frac{\eta}{2}) \|F(\theta_t)\|^2$.

Substituting these bounds yields
\begin{equation}
\|\theta_{t+1} - \theta^*\|^2 \le \|\theta_t - \theta^*\|^2 - \eta^2 \left( \frac{3}{4} - \frac{2\rho}{\eta} - 16L^2\left(\rho + \frac{\eta}{2}\right)^2 - 4L^2\eta\left(\rho + \frac{\eta}{2}\right) \right) \|F(\theta_t)\|^2.
\end{equation}
The condition on $\eta$ ensures $C_{RA+}^{\rho,\eta} > 0$. Summing from $t=0$ to $k$, dividing by $(k+1)$, and evaluating the minimum completes the proof.
\end{proof}
\subsection{Proof of Theorem \ref{thm:rampage_plus_alpha_monotone}}\label{sec:rampage_plus_alpha_monotone}
\begin{proof}[Proof of Theorem \ref{thm:rampage_plus_alpha_monotone}]
We track $\|\theta_{t+1} - \theta^*\|^2$. By the primary update rule,
\begin{equation}
\|\theta_{t+1} - \theta^*\|^2 = \|\theta_t - \theta^*\|^2 - 2\eta_t \langle \bar{F}_t, \theta_t - \theta^* \rangle + \eta_t^2 \|\bar{F}_t\|^2.
\end{equation}
We decouple the inner product via the antithetic intermediate states $y_t$ and $\tilde{y}_t$
\begin{equation}
-2\eta_t \langle \bar{F}_t, \theta_t - \theta^* \rangle = -\eta_t \langle F(y_t), y_t - \theta^* \rangle - \eta_t \langle F(\tilde{y}_t), \tilde{y}_t - \theta^* \rangle - \eta_t \langle F(y_t), \theta_t - y_t \rangle - \eta_t \langle F(\tilde{y}_t), \theta_t - \tilde{y}_t \rangle.
\end{equation}
By the monotonicity of $F$, the first two terms are non-positive and safely discarded. Substituting the exact displacements $\theta_t - y_t = 2\eta_t u_t F(\theta_t)$ and $\theta_t - \tilde{y}_t = 2\eta_t \tilde{u}_t F(\theta_t)$ yields
\begin{equation}
-\eta_t \langle F(y_t), \theta_t - y_t \rangle - \eta_t \langle F(\tilde{y}_t), \theta_t - \tilde{y}_t \rangle = -2\eta_t^2 u_t \langle F(y_t), F(\theta_t) \rangle - 2\eta_t^2 \tilde{u}_t \langle F(\tilde{y}_t), F(\theta_t) \rangle.
\end{equation}
Applying the polarization identity, $-2\langle a, b \rangle = \|a-b\|^2 - \|a\|^2 - \|b\|^2$, yields
\begin{equation}
\begin{aligned}
-2\eta_t^2 u_t \langle F(y_t), F(\theta_t) \rangle &= \eta_t^2 u_t \|F(y_t) - F(\theta_t)\|^2 - \eta_t^2 u_t \|F(y_t)\|^2 - \eta_t^2 u_t \|F(\theta_t)\|^2, \\
-2\eta_t^2 \tilde{u}_t \langle F(\tilde{y}_t), F(\theta_t) \rangle &= \eta_t^2 \tilde{u}_t \|F(\tilde{y}_t) - F(\theta_t)\|^2 - \eta_t^2 \tilde{u}_t \|F(\tilde{y}_t)\|^2 - \eta_t^2 \tilde{u}_t \|F(\theta_t)\|^2.
\end{aligned}
\end{equation}
Because $u_t + \tilde{u}_t = 1$, we have $-\eta_t^2 \|F(\theta_t)\|^2$. Expanding the squared magnitude of the averaged field yields $\eta_t^2 \|\bar{F}_t\|^2 = \frac{\eta_t^2}{2} \|F(y_t)\|^2 + \frac{\eta_t^2}{2} \|F(\tilde{y}_t)\|^2 - \frac{\eta_t^2}{4} \|F(y_t) - F(\tilde{y}_t)\|^2$.

Defining the deviations $A_t = F(y_t) - F(\theta_t)$ and $B_t = F(\tilde{y}_t) - F(\theta_t)$ and the symmetric scalar $c_t = \frac{1}{2} - u_t$, the aggregated distance recursion becomes
\begin{equation}
\begin{aligned}
\|\theta_{t+1} - \theta^*\|^2 &\le \|\theta_t - \theta^*\|^2 - \eta_t^2 \|F(\theta_t)\|^2 + \eta_t^2 c_t \|F(y_t)\|^2 - \eta_t^2 c_t \|F(\tilde{y}_t)\|^2 \\
&\quad + \eta_t^2 u_t \|A_t\|^2 + \eta_t^2 \tilde{u}_t \|B_t\|^2 - \frac{\eta_t^2}{4} \|A_t - B_t\|^2.
\end{aligned}
\end{equation}
Expanding the intermediate norms via $\|F(y_t)\|^2 = \|F(\theta_t)\|^2 + 2\langle F(\theta_t), A_t \rangle + \|A_t\|^2$ cancels the residual $\|F(\theta_t)\|^2$ coefficients because $c_t - c_t = 0$. Substituting $c_t + u_t = 1/2$ simplifies the variance to
\begin{equation} \label{eq:rampage_plus_variance_isolated}
\|\theta_{t+1} - \theta^*\|^2 \le \|\theta_t - \theta^*\|^2 - \eta_t^2 \|F(\theta_t)\|^2 + 2\eta_t^2 c_t \langle F(\theta_t), A_t - B_t \rangle + \frac{\eta_t^2}{2}\|A_t\|^2 + \frac{\eta_t^2}{2}\|B_t\|^2 - \frac{\eta_t^2}{4} \|A_t - B_t\|^2.
\end{equation}
Applying Young's inequality reduces the cross-term: $2 c_t \langle F(\theta_t), A_t - B_t \rangle \le \frac{1}{2}\|F(\theta_t)\|^2 + 2 c_t^2 \|A_t - B_t\|^2$. Because $c_t \in [-1/2, 1/2]$, we bound $c_t^2 \le 1/4$. Using $\|A_t - B_t\|^2 \le 2\|A_t\|^2 + 2\|B_t\|^2$, we merge the terms
\begin{equation}
\eta_t^2 \left( 2c_t^2 - \frac{1}{4} \right) \|A_t - B_t\|^2 \le \frac{\eta_t^2}{4} \big( 2\|A_t\|^2 + 2\|B_t\|^2 \big) = \frac{\eta_t^2}{2}\|A_t\|^2 + \frac{\eta_t^2}{2}\|B_t\|^2.
\end{equation}
Substituting this into \eqref{eq:rampage_plus_variance_isolated} guarantees the strict deterministic baseline bound
\begin{equation}
\|\theta_{t+1} - \theta^*\|^2 \le \|\theta_t - \theta^*\|^2 - \frac{1}{2}\eta_t^2 \|F(\theta_t)\|^2 + \eta_t^2 \|A_t\|^2 + \eta_t^2 \|B_t\|^2.
\end{equation}

Taking the conditional expectation $\mathbb{E}_t[\cdot] = \mathbb{E}[\cdot \mid \mathcal{F}_t]$, the symmetric sampling ensures $\mathbb{E}_t[\|A_t\|^2] = \mathbb{E}_t[\|B_t\|^2]$. By the exact $\alpha$-symmetric property mapped from $\theta_t$
\begin{equation}
\|A_t\|^2 \le 4\eta_t^2 u_t^2 \left( K_0 + K_1 \|F(\theta_t)\|^\alpha + K_2 (2\eta_t u_t \|F(\theta_t)\|)^{\frac{\alpha}{1-\alpha}} \right)^2 \|F(\theta_t)\|^2.
\end{equation}
Integrating the continuous fractional moments over $u_t \sim \mathrm{Unif}([0,1])$ yields $\int_0^1 s^2 ds = \frac{1}{3}$, $\int_0^1 s^{2 + \frac{\alpha}{1-\alpha}} ds = \frac{1-\alpha}{3-2\alpha}$, and $\int_0^1 s^{2 + \frac{2\alpha}{1-\alpha}} ds = \frac{1-\alpha}{3-\alpha}$. Since $\alpha \in (0,1)$, these precisely decay from the leading coefficient $1/3$. Bounding the polynomial
\begin{equation}
\mathbb{E}_t\big[ 2\eta_t^2 \|A_t\|^2 \big] \le \frac{8}{3}\eta_t^4 \|F(\theta_t)\|^2 \left( A_t^\circ + B_t^\circ \right)^2,
\end{equation}
where $A_t^\circ = K_0 + K_1\|F(\theta_t)\|^\alpha$ and $B_t^\circ = K_2 (2\eta_t\|F(\theta_t)\|)^{\frac{\alpha}{1-\alpha}}$. To enforce monotonic descent, we require the variance multiplier $\frac{8}{3}\eta_t^2(A_t^\circ + B_t^\circ)^2 \le \frac{1}{4}$. Decoupling the condition $Q_t = \eta_t(A_t^\circ + B_t^\circ) \le \sqrt{\frac{3}{32}}$, we substitute $\eta_t = \frac{\nu}{K_0 + C_\alpha \|F(\theta_t)\|^\alpha}$
\begin{equation}
\eta_t K_0 \le \nu, \quad \eta_t K_1 \|F(\theta_t)\|^\alpha \le \nu \frac{K_1}{C_\alpha}, \quad \eta_t B_t^\circ \le K_2 2^{\frac{\alpha}{1-\alpha}} \left(\frac{\nu}{C_\alpha}\right)^{\frac{1}{1-\alpha}}.
\end{equation}
By constraining $\nu > 0$ such that $Q_t \le \nu \left( 1 + \frac{K_1}{C_\alpha} \right) + K_2 2^{\frac{\alpha}{1-\alpha}} \left(\frac{\nu}{C_\alpha}\right)^{\frac{1}{1-\alpha}} \le \sqrt{\frac{3}{32}}$, which amounts to
\begin{equation}
\nu \le \min \left\{ \frac{\sqrt{6} C_\alpha}{16(C_\alpha + K_1)}, \ C_\alpha \left( \frac{\sqrt{6}}{16(C_\alpha - K_1)} \right)^{1-\alpha} \right\},
\end{equation}
the variance penalty collapses completely, generating the single-step dissipative sequence
\begin{equation}
\mathbb{E}_t\left[\|\theta_{t+1} - \theta^*\|^2\right] \le \|\theta_t - \theta^*\|^2 - \frac{1}{4}\eta_t^2 \|F(\theta_t)\|^2.
\end{equation}
Taking the total expectation and summing telescopically from $t=0$ to $k$ yields
\begin{equation}
\frac{1}{4} \sum_{t=0}^k \mathbb{E}\left[\eta_t^2 \|F(\theta_t)\|^2\right] \le \|\theta_0 - \theta^*\|^2 - \mathbb{E}\left[\|\theta_{k+1} - \theta^*\|^2\right] \le \|\theta_0 - \theta^*\|^2.
\end{equation}
Dividing by $k+1$ leads to the $\mathcal{O}(1/k)$ rate. Finally, because the mapping $M(x) = \left( \frac{\nu x}{K_0 + C_\alpha x^\alpha} \right)^2$ is a continuous, strictly increasing function over $[0, \infty)$, by Markov inequality and the continuous mapping theorem \cite{mann1943stochastic}, the subsequence of best iterates $l^\ast_k$ enjoys $\|F(\theta_{l^\ast_k})\| \xrightarrow{p} 0$.
\end{proof}
\subsection{Proof of Theorem \ref{thm:cp_ramp_monotone}}\label{sec:cp_ramp_monotone}
\begin{proof}[Proof of Theorem \ref{thm:cp_ramp_monotone}]
We track $\|\theta_{t+1} - \theta^*\|^2$. By the non-expansiveness of the projection $\theta_{t+1} = \Pi_{\mathcal{X}}(\theta_t - \eta u_t F(y_t) - \eta \tilde{u}_t F(\tilde{y}_t))$ with respect to $\theta^* \in \mathcal{X}$,
\begin{equation} \label{eq:cp_ramp_distance_expansion}
\|\theta_{t+1} - \theta^*\|^2 \le \|\theta_t - \theta^*\|^2 - \|\theta_{t+1} - \theta_t\|^2 - 2\eta \langle u_t F(y_t) + \tilde{u}_t F(\tilde{y}_t), \theta_{t+1} - \theta^* \rangle.
\end{equation}
We decouple the inner product into its constituent antithetic components
\begin{equation}
-2\eta \langle u_t F(y_t) + \tilde{u}_t F(\tilde{y}_t), \theta_{t+1} - \theta^* \rangle = -2\eta u_t \langle F(y_t), \theta_{t+1} - \theta^* \rangle - 2\eta \tilde{u}_t \langle F(\tilde{y}_t), \theta_{t+1} - \theta^* \rangle.
\end{equation}
We evaluate the primary trajectory component via the intermediate state $y_t$
\begin{equation} \label{eq:cp_ramp_t1_split}
-2\eta u_t \langle F(y_t), \theta_{t+1} - \theta^* \rangle = -2\eta u_t \langle F(y_t), y_t - \theta^* \rangle - 2\eta u_t \langle F(y_t), \theta_{t+1} - y_t \rangle.
\end{equation}
By the monotonicity of $F$ and the equilibrium condition of $\theta^*$, the first term is strictly non-positive. We expand the remaining differential cross-term by incorporating the base vector field
\begin{equation}
-2\eta u_t \langle F(y_t), \theta_{t+1} - y_t \rangle = -2\eta u_t \langle F(\theta_t), \theta_{t+1} - y_t \rangle - 2\eta u_t \langle F(y_t) - F(\theta_t), \theta_{t+1} - y_t \rangle.
\end{equation}
Testing the projection $y_t = \Pi_{\mathcal{X}}(\theta_t - 2\eta u_t F(\theta_t))$ against the updated feasible state $\theta_{t+1} \in \mathcal{X}$ yields
\begin{equation}
\langle \theta_t - 2\eta u_t F(\theta_t) - y_t, \theta_{t+1} - y_t \rangle \le 0.
\end{equation}
Rearranging this inequality
\begin{equation}
-2\eta u_t \langle F(\theta_t), \theta_{t+1} - y_t \rangle \le \langle y_t - \theta_t, \theta_{t+1} - y_t \rangle.
\end{equation}
Applying the polarization identity $2\langle a-b, b-c \rangle = \|a-c\|^2 - \|a-b\|^2 - \|b-c\|^2$ results in
\begin{equation}
\langle y_t - \theta_t, \theta_{t+1} - y_t \rangle = \frac{1}{2}\|\theta_{t+1} - \theta_t\|^2 - \frac{1}{2}\|\theta_{t+1} - y_t\|^2 - \frac{1}{2}\|\theta_t - y_t\|^2.
\end{equation}
Applying Young's inequality to the differential cross-term yields
\begin{equation}
-2\eta u_t \langle F(y_t) - F(\theta_t), \theta_{t+1} - y_t \rangle \le \frac{1}{2}\|\theta_{t+1} - y_t\|^2 + 2\eta^2 u_t^2 \|F(y_t) - F(\theta_t)\|^2.
\end{equation}
Summing these bounds perfectly cancels the penalty $\|\theta_{t+1} - y_t\|^2$ to yield
\begin{equation}
-2\eta u_t \langle F(y_t), \theta_{t+1} - \theta^* \rangle \le \frac{1}{2}\|\theta_{t+1} - \theta_t\|^2 - \frac{1}{2}\|\theta_t - y_t\|^2 + 2\eta^2 u_t^2 \|F(y_t) - F(\theta_t)\|^2.
\end{equation}
Symmetrical geometric analysis of the antithetic trajectory $\tilde{y}_t$ establishes the bound
\begin{equation}
-2\eta \tilde{u}_t \langle F(\tilde{y}_t), \theta_{t+1} - \theta^* \rangle \le \frac{1}{2}\|\theta_{t+1} - \theta_t\|^2 - \frac{1}{2}\|\theta_t - \tilde{y}_t\|^2 + 2\eta^2 \tilde{u}_t^2 \|F(\tilde{y}_t) - F(\theta_t)\|^2.
\end{equation}
Substituting the aggregated components back into the primary distance recursion completely neutralizes the initial divergence metric $\|\theta_{t+1} - \theta_t\|^2$ extracted from the outer projection
\begin{equation} \label{eq:cp_ramp_aggregated}
\begin{aligned}
\|\theta_{t+1} - \theta^*\|^2 &\le \|\theta_t - \theta^*\|^2 - \frac{1}{2}\|\theta_t - y_t\|^2 - \frac{1}{2}\|\theta_t - \tilde{y}_t\|^2 \\
&\quad + 2\eta^2 u_t^2 \|F(y_t) - F(\theta_t)\|^2 + 2\eta^2 \tilde{u}_t^2 \|F(\tilde{y}_t) - F(\theta_t)\|^2.
\end{aligned}
\end{equation}
Applying the $L$-Lipschitz continuity of the vector field
\begin{equation}
\|\theta_{t+1} - \theta^*\|^2 \le \|\theta_t - \theta^*\|^2 - \frac{1}{2} \big( 1 - 4\eta^2 u_t^2 L^2 \big) \|\theta_t - y_t\|^2 - \frac{1}{2} \big( 1 - 4\eta^2 \tilde{u}_t^2 L^2 \big) \|\theta_t - \tilde{y}_t\|^2.
\end{equation}
Since the uniform variables are strictly bounded such that $u_t \in [0, 1]$ and $\tilde{u}_t \in [0, 1]$, the respective descent multipliers satisfy $1 - 4\eta^2 u_t^2 L^2 \ge 1 - 4\eta^2 L^2$ and $1 - 4\eta^2 \tilde{u}_t^2 L^2 \ge 1 - 4\eta^2 L^2$. Enforcing the step size constraint $\eta \le \frac{1}{2L}$ guarantees that $C_{SS+} \ge 0$
\begin{equation}
\|\theta_{t+1} - \theta^*\|^2 \le \|\theta_t - \theta^*\|^2 - \frac{C_{SS+}}{2} \left( \|\theta_t - y_t\|^2 + \|\theta_t - \tilde{y}_t\|^2 \right).
\end{equation}
Taking the total expectation and telescoping the sequence from $t = 0$ to $k$ yields
\begin{equation}
\frac{C_{SS+}}{2} \sum_{t=0}^k \left( \|\theta_t - y_t\|^2 + \|\theta_t - \tilde{y}_t\|^2 \right) \le \|\theta_0 - \theta^*\|^2 - \|\theta_{k+1} - \theta^*\|^2 \le \|\theta_0 - \theta^*\|^2.
\end{equation}
Dividing by $(k+1)$ and noting the arithmetic mean is bounded below by the sequence minimum completes the proof.
\end{proof}
\subsection{Proof of Corollary \ref{cor:best_iterate_gap}}\label{sec:best_iterate_gap}
\begin{proof}[Proof of Corollary \ref{cor:best_iterate_gap}]
Let $\hat{y}_{l^*} \in \{y_{l^*}, \tilde{y}_{l^*}\}$ denote the intermediate exploration state generated by the maximum scaling parameter $\hat{u}_{l^*} = \max(u_{l^*}, \tilde{u}_{l^*})$. The corresponding projection is $\hat{y}_{l^*} = \Pi_{\mathcal{X}}(\theta_{l^*} - 2\eta \hat{u}_{l^*} F(\theta_{l^*}))$. Evaluating Lemma \ref{lem:gap} at $r_{l^*} = \theta_{l^*} - \hat{y}_{l^*}$ and $2\eta \hat{u}_{l^*}$ yields
\begin{equation}
\operatorname{Gap}_{\mathcal{B}}(\theta_{l^*}) \le \|\theta_{l^*} - \hat{y}_{l^*}\| \left( \|F(\theta_{l^*})\| + \frac{1}{2\eta \hat{u}_{l^*}} \sup_{v \in \mathcal{B}} \|\theta_{l^*} - v\| \right).
\end{equation}
Because the antithetic variables perfectly couple to satisfy $u_{l^*} + \tilde{u}_{l^*} = 1$, the maximum scaling parameter strictly satisfies $\hat{u}_{l^*} \ge 1/2$. Consequently, $\frac{1}{2\eta \hat{u}_{l^*}} \le \frac{1}{\eta}$. Substituting this and the absolute bounds $G_{\mathcal{B}}$ and $D_{\mathcal{B}}$ results in
\begin{equation}
\operatorname{Gap}_{\mathcal{B}}(\theta_{l^*}) \le \|\theta_{l^*} - \hat{y}_{l^*}\| \left( G_{\mathcal{B}} + \frac{D_{\mathcal{B}}}{\eta} \right).
\end{equation}
Since max is less than sum
\begin{equation}
\|\theta_{l^*} - \hat{y}_{l^*}\|^2 \le \|\theta_{l^*} - y_{l^*}\|^2 + \|\theta_{l^*} - \tilde{y}_{l^*}\|^2.
\end{equation}
Substituting the deterministic best-iterate bound from Theorem \ref{thm:cp_ramp_monotone} yields
\begin{equation}
\|\theta_{l^*} - y_{l^*}\|^2 + \|\theta_{l^*} - \tilde{y}_{l^*}\|^2 \le \frac{2\|\theta_0 - \theta^*\|^2}{\big(1 - 4\eta^2 L^2\big)(k+1)}.
\end{equation}
Taking the square root and substituting the derived bound into the gap inequality completes the proof.
\end{proof}
\subsection{Proof of Theorem \ref{thm:ergodic_esramp}}\label{sec:ergodic_esramp}
\begin{proof}[Proof of Theorem \ref{thm:ergodic_esramp}]
We track $\|\theta_{t+1} - \theta\|^2$. Substituting the primary update rule $\theta_{t+1} = \theta_t - \eta \bar{F}_t$ yields
\begin{equation} \label{eq:ramp_distance_expansion}
\|\theta_{t+1} - \theta\|^2 = \|\theta_t - \theta\|^2 - 2\eta \langle \bar{F}_t, \theta_t - \theta \rangle + \eta^2 \|\bar{F}_t\|^2.
\end{equation}
Define $\bar{y}_t = \frac{1}{2}(y_t + \tilde{y}_t)$. Substituting the algorithmic definitions and using $u_t + \tilde{u}_t = 1$, we have $\bar{y}_t = \theta_t - \eta F(\theta_t)$. We decouple the inner product via $\bar{y}_t$
\begin{equation} \label{eq:ramp_inner_product_split}
-2\eta \langle \bar{F}_t, \theta_t - \theta \rangle = -2\eta \langle \bar{F}_t, \bar{y}_t - \theta \rangle - 2\eta \langle \bar{F}_t, \theta_t - \bar{y}_t \rangle.
\end{equation}
Expanding $\bar{F}_t$ in the first term of \eqref{eq:ramp_inner_product_split} and applying the identity $\langle A+B, C+D \rangle = 2\langle A,C \rangle + 2\langle B,D \rangle - \langle A-B, C-D \rangle$ results in
\begin{equation} \label{eq:ramp_gap_identity}
-2\eta \langle \bar{F}_t, \bar{y}_t - \theta \rangle = -\eta \langle F(y_t), y_t - \theta \rangle - \eta \langle F(\tilde{y}_t), \tilde{y}_t - \theta \rangle + \frac{\eta}{2} \langle F(y_t) - F(\tilde{y}_t), y_t - \tilde{y}_t \rangle.
\end{equation}
Applying the convex-concave bound to $y_t = (x_{y,t}, z_{y,t})$ and $\tilde{y}_t = (x_{\tilde{y},t}, z_{\tilde{y},t})$ yields
\begin{equation} \label{eq:ramp_functional_substitution}
-\eta \langle F(y_t), y_t - \theta \rangle - \eta \langle F(\tilde{y}_t), \tilde{y}_t - \theta \rangle \le -\eta \big[ f(x_{y,t}, z) - f(x, z_{y,t}) \big] - \eta \big[ f(x_{\tilde{y},t}, z) - f(x, z_{\tilde{y},t}) \big].
\end{equation}
For the remaining terms, note that $\theta_t - \bar{y}_t = \eta F(\theta_t)$. Thus
\begin{equation}
-2\eta \langle \bar{F}_t, \theta_t - \bar{y}_t \rangle + \eta^2 \|\bar{F}_t\|^2 = -2\eta^2 \langle \bar{F}_t, F(\theta_t) \rangle + \eta^2 \|\bar{F}_t\|^2 = \eta^2 \|\bar{F}_t - F(\theta_t)\|^2 - \eta^2 \|F(\theta_t)\|^2.
\end{equation}
Bounding the discrepancy using the triangle inequality and $L$-Lipschitz continuity results in
\begin{equation}
\begin{aligned}
\eta^2 \|\bar{F}_t - F(\theta_t)\|^2 &\le \frac{\eta^2}{2}\|F(y_t) - F(\theta_t)\|^2 + \frac{\eta^2}{2}\|F(\tilde{y}_t) - F(\theta_t)\|^2 \\
&\le \frac{\eta^2 L^2}{2}\|y_t - \theta_t\|^2 + \frac{\eta^2 L^2}{2}\|\tilde{y}_t - \theta_t\|^2 \\
&= 2\eta^4 L^2 (u_t^2 + \tilde{u}_t^2) \|F(\theta_t)\|^2.
\end{aligned}
\end{equation}
Bounding the cross-term from \eqref{eq:ramp_gap_identity} via the Cauchy-Schwarz inequality and $L$-Lipschitz continuity yields
\begin{equation}
\frac{\eta}{2} \langle F(y_t) - F(\tilde{y}_t), y_t - \tilde{y}_t \rangle \le \frac{\eta L}{2} \|y_t - \tilde{y}_t\|^2 = \frac{\eta L}{2} (2\eta |u_t - \tilde{u}_t|)^2 \|F(\theta_t)\|^2 = 2\eta^3 L (2u_t - 1)^2 \|F(\theta_t)\|^2.
\end{equation}
Merging all terms modifying $\eta^2 \|F(\theta_t)\|^2$, the resulting coefficient is
\begin{equation}
C_{R} = -1 + 2\eta^2 L^2 \big(u_t^2 + (1-u_t)^2\big) + 2\eta L (2u_t - 1)^2.
\end{equation}
Since $u_t \in [0,1]$, we have $u_t^2 + (1-u_t)^2 \le 1$ and $(2u_t - 1)^2 \le 1$. Thus, $C_{R} \le -1 + 2\eta^2 L^2 + 2\eta L$. By the condition $\eta \le \frac{\sqrt{3}-1}{2L}$, $C_{R+} \le 0$. Dropping this non-positive term yields the descent inequality
\begin{equation} \label{eq:ramp_final_descent}
\eta \big[ f(x_{y,t}, z) - f(x, z_{y,t}) + f(x_{\tilde{y},t}, z) - f(x, z_{\tilde{y},t}) \big] \le \|\theta_t - \theta\|^2 - \|\theta_{t+1} - \theta\|^2.
\end{equation}
Summing from $t=0$ to $k$ telescopes
\begin{equation} \label{eq:ramp_telescopic_sum}
\sum_{t=0}^k \big[ f(x_{y,t}, z) - f(x, z_{y,t}) + f(x_{\tilde{y},t}, z) - f(x, z_{\tilde{y},t}) \big] \le \frac{\|\theta_0 - \theta\|^2 - \|\theta_{k+1} - \theta\|^2}{\eta} \le \frac{\|\theta_0 - \theta\|^2}{\eta}.
\end{equation}
Dividing by $2(k+1)$ and applying Jensen's inequality to the jointly convex function $f(\cdot, z) - f(x, \cdot)$ evaluated at $\hat{\theta}_k$ yields
\begin{equation} \label{eq:ramp_jensen_application}
f(\hat{x}_k, z) - f(x, \hat{z}_k) \le \frac{1}{2(k+1)} \sum_{t=0}^k \big[ f(x_{y,t}, z) - f(x, z_{y,t}) + f(x_{\tilde{y},t}, z) - f(x, z_{\tilde{y},t}) \big].
\end{equation}
Substituting \eqref{eq:ramp_telescopic_sum} into \eqref{eq:ramp_jensen_application} completes the proof.
\end{proof}
\subsection{Proof of Theorem \ref{thm:sfo_ergodic_rampage_plus}}\label{sec:sfo_ergodic_rampage_plus}
\begin{proof}[Proof of Theorem \ref{thm:sfo_ergodic_rampage_plus}]
We track the expected distance $\mathbb{E}[\|\theta_{t+1} - \theta\|^2]$. Substituting the primary stochastic update yields
\begin{equation}
\|\theta_{t+1} - \theta\|^2 = \|\theta_t - \theta\|^2 - 2\eta \langle \bar{G}_t, \theta_t - \theta \rangle + \eta^2 \|\bar{G}_t\|^2.
\end{equation}
Let $\mathcal{F}_{t, 1}$ denote the filtration conditioning on $\theta_t, u_t$, and $\xi_t$. Taking the conditional expectation with respect to the independent update samples $\zeta_t$ and $\tilde{\zeta}_t$, unbiasedness results in $\mathbb{E}[\bar{G}_t \mid \mathcal{F}_{t, 1}] = \frac{1}{2}(F(y_t) + F(\tilde{y}_t)) := \bar{F}_t$. The independence of $\zeta_t$ and $\tilde{\zeta}_t$ halves the variance of the averaged estimator
\begin{equation}
\mathbb{E}_{\zeta, \tilde{\zeta}}\left[\|\bar{G}_t\|^2 \mid \mathcal{F}_{t, 1}\right] \le \|\bar{F}_t\|^2 + \frac{\sigma^2}{2}.
\end{equation}
This yields the conditionally expected bound
\begin{equation} \label{eq:sfo_ramp_base_distance}
\mathbb{E}_{\zeta, \tilde{\zeta}}\left[\|\theta_{t+1} - \theta\|^2\right] \le \|\theta_t - \theta\|^2 - 2\eta \langle \bar{F}_t, \theta_t - \theta \rangle + \eta^2 \|\bar{F}_t\|^2 + \frac{\eta^2 \sigma^2}{2}.
\end{equation}
We decouple the inner product. Using $\theta_t - \theta = (\theta_t - y_t) + (y_t - \theta)$ and symmetrically for $\tilde{y}_t$, we obtain
\begin{equation}
\begin{aligned}
-2\eta \langle \bar{F}_t, \theta_t - \theta \rangle &= -\eta \langle F(y_t), y_t - \theta \rangle - \eta \langle F(\tilde{y}_t), \tilde{y}_t - \theta \rangle \\
&\quad - \eta \langle F(y_t), \theta_t - y_t \rangle - \eta \langle F(\tilde{y}_t), \theta_t - \tilde{y}_t \rangle.
\end{aligned}
\end{equation}
Let $\hat{F}_t = \hat{F}(\theta_t, \xi_t)$. Substituting $\theta_t - y_t = 2\eta u_t \hat{F}_t$ and $\theta_t - \tilde{y}_t = 2\eta \tilde{u}_t \hat{F}_t$ results in
\begin{equation}
-2\eta^2 u_t \langle F(y_t), \hat{F}_t \rangle - 2\eta^2 \tilde{u}_t \langle F(\tilde{y}_t), \hat{F}_t \rangle.
\end{equation}
Applying the polarization identity $-2\langle a, b \rangle = \|a-b\|^2 - \|a\|^2 - \|b\|^2$ and expanding $\eta^2\|\bar{F}_t\|^2 = \frac{\eta^2}{2}\|F(y_t)\|^2 + \frac{\eta^2}{2}\|F(\tilde{y}_t)\|^2 - \frac{\eta^2}{4}\|F(y_t) - F(\tilde{y}_t)\|^2$, the terms $-\eta^2 u_t \|\hat{F}_t\|^2$ and $-\eta^2 \tilde{u}_t \|\hat{F}_t\|^2$ sum to $-\eta^2 \|\hat{F}_t\|^2$. This leads to
\begin{equation} \label{eq:sfo_ramp_variance_block}
\begin{aligned}
R_t &= -\eta^2 \|\hat{F}_t\|^2 + \eta^2\left(\frac{1}{2} - u_t\right)\|F(y_t)\|^2 + \eta^2\left(\frac{1}{2} - \tilde{u}_t\right)\|F(\tilde{y}_t)\|^2 - \frac{\eta^2}{4}\|F(y_t) - F(\tilde{y}_t)\|^2 \\
&\quad + \eta^2 u_t \|F(y_t) - \hat{F}_t\|^2 + \eta^2 \tilde{u}_t \|F(\tilde{y}_t) - \hat{F}_t\|^2.
\end{aligned}
\end{equation}
Let $F_t = F(\theta_t)$, $A = F(y_t) - F_t$, $B = F(\tilde{y}_t) - F_t$, and $N_t = F_t - \hat{F}_t$. Expanding the negative squared norm results in
\begin{equation}
-\eta^2 \|\hat{F}_t\|^2 = -\eta^2 \|F_t - N_t\|^2 = -\eta^2 \|F_t\|^2 + 2\eta^2 \langle F_t, N_t \rangle - \eta^2 \|N_t\|^2.
\end{equation}
Bounding the discrepancy norms results in $\eta^2 u_t \|A + N_t\|^2 + \eta^2 \tilde{u}_t \|B + N_t\|^2 \le 2\eta^2 (u_t \|A\|^2 + \tilde{u}_t \|B\|^2) + 2\eta^2 \|N_t\|^2$. Taking the expectation over $\xi_t$, unbiasedness zeroes $2\eta^2 \langle F_t, N_t \rangle$. Let $\sigma_t^2 = \mathbb{E}[\|N_t\|^2] \le \sigma^2$. The noise terms combine to $\eta^2 \sigma_t^2$.

Expanding the intermediate norms and applying Young's inequality, we bound the variance using $\|A\|^2 \le 4\eta^2 L^2 u_t^2 \|\hat{F}_t\|^2$. Taking the expectation over $u_t$ results in
\begin{equation}
\mathbb{E}_{u_t} \left[ 4\eta^4 L^2 \big[ (2+u_t)u_t^2 + (3-u_t)(1-u_t)^2 \big] \|\hat{F}_t\|^2 \right] \le \eta^4 L^2 C_{R+} \big(\|F_t\|^2 + \sigma_t^2\big),
\end{equation}
where $C_{R+} = \frac{22}{3}$. Thus, the expected variance block reduces to
\begin{equation}
\mathbb{E}[R_t] \le -\eta^2 \left( \frac{3}{4} - \eta^2 L^2 C_{R+} \right) \|F_t\|^2 + \eta^2 \sigma_t^2 + \eta^4 L^2 C_{R+} \sigma_t^2.
\end{equation}
For $\eta \le \frac{\sqrt{3}}{2L\sqrt{C_{R+}}}$, the coefficient of $\|F_t\|^2$ is non-positive. Truncating this term and including the $\frac{\eta^2 \sigma^2}{2}$ variance from the primary update yields
\begin{equation}
\begin{aligned}
\eta \mathbb{E} \big[ f(x_{y,t}, z) - f(x, z_{y,t}) + f(x_{\tilde{y},t}, z) - f(x, z_{\tilde{y},t}) \big] \le \\
\mathbb{E} \left[ \|\theta_t - \theta\|^2 \right] - \mathbb{E} \left[ \|\theta_{t+1} - \theta\|^2 \right] + \eta^2 \left( \frac{3}{2} + \eta^2 L^2 C_{R+} \right) \sigma^2.
\end{aligned}
\end{equation}
Summing from $t=0$ to $k$ telescopes
\begin{equation}
\sum_{t=0}^k \mathbb{E} \big[ f(x_{y,t}, z) - f(x, z_{y,t}) + f(x_{\tilde{y},t}, z) - f(x, z_{\tilde{y},t}) \big] \le \frac{\|\theta_0 - \theta\|^2}{\eta} + (k+1)\eta \left( \frac{3}{2} + \eta^2 L^2 C_{R+} \right) \sigma^2.
\end{equation}
Dividing by $2(k+1)$ and applying Jensen's inequality to the jointly convex function evaluated at $\hat{\theta}_k$ yields
\begin{equation}
\mathbb{E} \left[ f(\hat{x}_k, z) - f(x, \hat{z}_k) \right] \le \frac{1}{2(k+1)} \sum_{t=0}^k \mathbb{E} \big[ f(x_{y,t}, z) - f(x, z_{y,t}) + f(x_{\tilde{y},t}, z) - f(x, z_{\tilde{y},t}) \big].
\end{equation}
Combining these inequalities completes the proof.
\end{proof}
\section{Proofs of RAMPAGE's Results Stated in Appendix \ref{sec:rampage}}
\subsection{Proof of Theorem \ref{thm:esrmp_cocoercive}}\label{sec:esrmp_cocoercive}
\begin{proof}[Proof of Theorem \ref{thm:esrmp_cocoercive}]
We track $\|\theta_{t+1} - \theta^*\|^2$. By substituting the primary update rule from \eqref{eq:rampage},
\begin{equation} \label{eq:esrmp_distance_expansion}
\|\theta_{t+1} - \theta^*\|^2 = \|\theta_t - \theta^*\|^2 - 2\eta \langle F(y_t), \theta_t - \theta^* \rangle + \eta^2 \|F(y_t)\|^2.
\end{equation}
To exploit the monotonicity characteristics of the field, we decouple the inner product
\begin{equation} \label{eq:esrmp_inner_product_split}
-2\eta \langle F(y_t), \theta_t - \theta^* \rangle = -2\eta \langle F(y_t), y_t - \theta^* \rangle - 2\eta \langle F(y_t), \theta_t - y_t \rangle.
\end{equation}
By $\mu$-co-coercivity of $F$ we have $\langle F(y_t) - F(\theta^*), y_t - \theta^* \rangle \ge \mu \|F(y_t) - F(\theta^*)\|^2$. Since $\theta^* \in \mathrm{zer}(F)$, this simplifies to $\langle F(y_t), y_t - \theta^* \rangle \ge \mu \|F(y_t)\|^2$. Applying this bound to the first term of \eqref{eq:esrmp_inner_product_split} yields
\begin{equation}
-2\eta \langle F(y_t), y_t - \theta^* \rangle \le -2\eta\mu \|F(y_t)\|^2.
\end{equation}
Next, we evaluate $\theta_t - y_t$. Substituting $y_t = \theta_t - 2\eta u_t F(\theta_t)$, $\theta_t - y_t = 2\eta u_t F(\theta_t)$ such that
\begin{equation}
-2\eta \langle F(y_t), \theta_t - y_t \rangle = -4\eta^2 u_t \langle F(y_t), F(\theta_t) \rangle.
\end{equation}
Using the polarization identity, $-2\langle a, b \rangle = \|a-b\|^2 - \|a\|^2 - \|b\|^2$
\begin{equation}
-4\eta^2 u_t \langle F(y_t), F(\theta_t) \rangle = 2\eta^2 u_t \|F(y_t) - F(\theta_t)\|^2 - 2\eta^2 u_t \|F(y_t)\|^2 - 2\eta^2 u_t \|F(\theta_t)\|^2.
\end{equation}
Using this in \eqref{eq:esrmp_distance_expansion} and simplifying we get
\begin{equation}
\begin{aligned}
\|\theta_{t+1} - \theta^*\|^2 &\le \|\theta_t - \theta^*\|^2 - \eta \big( 2\mu - \eta(1 - 2u_t) \big) \|F(y_t)\|^2 \\
&\quad - 2\eta^2 u_t \|F(\theta_t)\|^2 + 2\eta^2 u_t \|F(y_t) - F(\theta_t)\|^2.
\end{aligned}
\end{equation}
The coefficient $\|F(y_t)\|^2$ must remain non-negative across all possible random realizations. Since $u_t \in [0, 1]$, $(1 - 2u_t) \leq 1$. By enforcing the step size constraint $\eta \le 2\mu$, we guarantee that $2\mu - \eta(1 - 2u_t) \ge 0$ almost surely. Thus
\begin{equation} \label{eq:esrmp_discarded_fy}
\|\theta_{t+1} - \theta^*\|^2 \le \|\theta_t - \theta^*\|^2 - 2\eta^2 u_t \|F(\theta_t)\|^2 + 2\eta^2 u_t \|F(y_t) - F(\theta_t)\|^2.
\end{equation}
Using the $L$-Lipschitz continuity of the operator and the update rule
\begin{equation}
\|F(y_t) - F(\theta_t)\|^2 \le L^2 \|y_t - \theta_t\|^2 = 4\eta^2 L^2 u_t^2 \|F(\theta_t)\|^2.
\end{equation}
Substituting this into \eqref{eq:esrmp_discarded_fy} yields
\begin{equation} \label{eq:esrmp_pre_expectation}
\|\theta_{t+1} - \theta^*\|^2 \le \|\theta_t - \theta^*\|^2 - 2\eta^2 \big( u_t - 4\eta^2 L^2 u_t^3 \big) \|F(\theta_t)\|^2.
\end{equation}
Define the conditional expectation operator $\mathbb{E}_t[\cdot] = \mathbb{E}[\cdot \mid \mathcal{F}_t]$. Integrating the moments leads to $\mathbb{E}[u_t] = 1/2$ and $\mathbb{E}[u_t^3] = 1/4$. Thus,
\begin{equation}
\mathbb{E}_t \left[ 2\eta^2 \big( u_t - 4\eta^2 L^2 u_t^3 \big) \right] = 2\eta^2 \left( \frac{1}{2} - \frac{4\eta^2 L^2}{4} \right) = \eta^2 \left( 1 - 2\eta^2 L^2 \right).
\end{equation}
We thus obtain the expected single-step descent inequality
\begin{equation}
\mathbb{E}_t\left[\|\theta_{t+1} - \theta^*\|^2\right] \le \|\theta_t - \theta^*\|^2 - \eta^2 \left( 1 - 2\eta^2 L^2 \right) \|F(\theta_t)\|^2.
\end{equation}
Imposing the secondary constraint on the step size, $\eta < \frac{1}{\sqrt{2}L}$, ensures the positivity of $C_{RA} = \eta^2 (1 - 2\eta^2 L^2)$. Taking the total expectation and summing from $t = 0$ to $k$, the sum telescopes
\begin{equation}
C_{RA} \sum_{t=0}^k \mathbb{E}\left[\|F(\theta_t)\|^2\right] \le \|\theta_0 - \theta^*\|^2 - \mathbb{E}\left[\|\theta_{k+1} - \theta^*\|^2\right] \le \|\theta_0 - \theta^*\|^2.
\end{equation}
Dividing by $(k+1)$, rearranging, and noting that the minimum is bounded above by the arithmetic mean completes the proof.
\end{proof}
\subsection{Proof of Theorem \ref{thm:es_rmp_convergence}}\label{sec:es_rmp_convergence}
\begin{proof}[Proof of Theorem \ref{thm:es_rmp_convergence}]
We track $\|\theta_{t+1} - \theta^*\|^2$. By substituting the primary update rule,
\begin{equation} \label{eq:esrmp_initial_expansion}
\|\theta_{t+1} - \theta^*\|^2 = \|\theta_t - \theta^*\|^2 - 2\eta \langle F(y_t), \theta_t - \theta^* \rangle + \eta^2 \|F(y_t)\|^2.
\end{equation}
We decouple the inner product
\begin{equation}
- 2\eta \langle F(y_t), \theta_t - \theta^* \rangle = - 2\eta \langle F(y_t), y_t - \theta^* \rangle - 2\eta \langle F(y_t), \theta_t - y_t \rangle.
\end{equation}
Substituting $y_t = \theta_t - 2\eta u_t F(\theta_t)$, we have $\theta_t - y_t = 2\eta u_t F(\theta_t)$ such that
\begin{equation}
-2\eta \langle F(y_t), \theta_t - y_t \rangle = -4\eta^2 u_t \langle F(y_t), F(\theta_t) \rangle.
\end{equation}
Using the polarization identity, $-2\langle a, b \rangle = \|a-b\|^2 - \|a\|^2 - \|b\|^2$,
\begin{equation}
-4\eta^2 u_t \langle F(y_t), F(\theta_t) \rangle = 2\eta^2 u_t \|F(y_t) - F(\theta_t)\|^2 - 2\eta^2 u_t \|F(y_t)\|^2 - 2\eta^2 u_t \|F(\theta_t)\|^2.
\end{equation}
Define $A = F(y_t) - F(\theta_t)$. Using this in \eqref{eq:esrmp_initial_expansion} and simplifying, we get
\begin{equation} \label{eq:esrmp_grouped}
\|\theta_{t+1} - \theta^*\|^2 = \|\theta_t - \theta^*\|^2 - 2\eta \langle F(y_t), y_t - \theta^* \rangle + \eta^2(1 - 2u_t)\|F(y_t)\|^2 - 2\eta^2 u_t \|F(\theta_t)\|^2 + 2\eta^2 u_t \|A\|^2.
\end{equation}
We expand $\|F(y_t)\|^2$ via $A$
\begin{equation}
\|F(y_t)\|^2 = \|F(\theta_t) + A\|^2 = \|F(\theta_t)\|^2 + 2\langle F(\theta_t), A \rangle + \|A\|^2.
\end{equation}
Substituting this expansion into \eqref{eq:esrmp_grouped} yields
\begin{equation} \label{eq:esrmp_algebraic_collapse}
\|\theta_{t+1} - \theta^*\|^2 = \|\theta_t - \theta^*\|^2 - 2\eta \langle F(y_t), y_t - \theta^* \rangle + \eta^2(1 - 4u_t)\|F(\theta_t)\|^2 + 2\eta^2(1 - 2u_t)\langle F(\theta_t), A \rangle + \eta^2 \|A\|^2.
\end{equation}
By $\rho$-co-hypomonotonicity of $F$, we have $-2\eta \langle F(y_t), y_t - \theta^* \rangle \le 2\eta\rho \|F(y_t)\|^2$. Expanding $\|F(y_t)\|^2$ via $A$ and substituting the bound into \eqref{eq:esrmp_algebraic_collapse} results in
\begin{equation} \label{eq:esrmp_cohypo_applied}
\begin{aligned}
\|\theta_{t+1} - \theta^*\|^2 &\le \|\theta_t - \theta^*\|^2 + \big(\eta^2(1 - 4u_t) + 2\eta\rho\big)\|F(\theta_t)\|^2 \\
&\quad + 2\big(\eta^2(1 - 2u_t) + 2\eta\rho\big)\langle F(\theta_t), A \rangle + \big(\eta^2 + 2\eta\rho\big)\|A\|^2.
\end{aligned}
\end{equation}
To bound the cross-term, we apply Young's inequality. Let $\alpha_t = \eta^2(1 - 2u_t) + 2\eta\rho$. Then,
\begin{equation}
2\alpha_t \langle F(\theta_t), A \rangle \le \frac{\eta^2}{2} \|F(\theta_t)\|^2 + \frac{2\alpha_t^2}{\eta^2} \|A\|^2.
\end{equation}
Since $u_t \in [0, 1]$, $|\alpha_t| \le \eta^2 + 2\eta\rho$. Consequently, $\frac{2\alpha_t^2}{\eta^2} \le 2(\eta + 2\rho)^2$. Defining $\beta = \eta^2 + 2\eta\rho + 2(\eta + 2\rho)^2$, the total coefficient of $\|A\|^2$ is bounded by $\beta$.

Using the $L$-Lipschitz continuity of $F$,
\begin{equation}
\|A\|^2 = \|F(y_t) - F(\theta_t)\|^2 \le L^2\|y_t - \theta_t\|^2 = 4L^2 \eta^2 u_t^2 \|F(\theta_t)\|^2.
\end{equation}
Substituting this and the Young's inequality bound into \eqref{eq:esrmp_cohypo_applied} yields
\begin{equation}
\|\theta_{t+1} - \theta^*\|^2 \le \|\theta_t - \theta^*\|^2 + \left( \eta^2(1 - 4u_t) + 2\eta\rho + \frac{\eta^2}{2} + 4\beta L^2 \eta^2 u_t^2 \right) \|F(\theta_t)\|^2.
\end{equation}
Taking the conditional expectation $\mathbb{E}_t[\cdot] = \mathbb{E}[\cdot \mid \mathcal{F}_t]$ and noting $\mathbb{E}[1 - 4u_t] = -1$ and $\mathbb{E}[u_t^2] = 1/3$, we obtain the expected single-step descent inequality
\begin{equation}
\mathbb{E}_t\left[\|\theta_{t+1} - \theta^*\|^2\right] \le \|\theta_t - \theta^*\|^2 - \eta^2 \left( \frac{1}{2} - \frac{2\rho}{\eta} - \frac{4}{3}\beta L^2 \right) \|F(\theta_t)\|^2.
\end{equation}
Substituting $\beta$ yields
\begin{equation}
\mathbb{E}_t\left[\|\theta_{t+1} - \theta^*\|^2\right] \le \|\theta_t - \theta^*\|^2 - C_{RA}^{\rho,\eta} \|F(\theta_t)\|^2.
\end{equation}
The condition on $\eta$ ensures $C_{RA}^{\rho,\eta} > 0$. Taking the total expectation and summing from $t = 0$ to $k$, the sum telescopes
\begin{equation}
C_{RA}^{\rho,\eta} \sum_{t=0}^k \mathbb{E}\left[\|F(\theta_t)\|^2\right] \le \|\theta_0 - \theta^*\|^2 - \mathbb{E}\left[\|\theta_{k+1} - \theta^*\|^2\right] \le \|\theta_0 - \theta^*\|^2.
\end{equation}
Dividing by $(k+1)$, rearranging, and noting the arithmetic mean is bounded below by the minimum completes the proof.
\end{proof}
\subsection{Proof of Theorem \ref{thm:rampage_alpha_monotone}}\label{sec:rampage_alpha_monotone}
\begin{proof}[Proof of Theorem \ref{thm:rampage_alpha_monotone}]
We track $\|\theta_{t+1} - \theta^*\|^2$. Substituting the primary update rule,
\begin{equation}
\|\theta_{t+1} - \theta^*\|^2 = \|\theta_t - \theta^*\|^2 - 2\eta_t \langle F(y_t), \theta_t - \theta^* \rangle + \eta_t^2 \|F(y_t)\|^2.
\end{equation}
We decouple the inner product via $y_t$
\begin{equation}
-2\eta_t \langle F(y_t), \theta_t - \theta^* \rangle = -2\eta_t \langle F(y_t), y_t - \theta^* \rangle - 2\eta_t \langle F(y_t), \theta_t - y_t \rangle.
\end{equation}
Since $F$ is monotone and $\theta^* \in \mathrm{zer}(F)$, $\langle F(y_t), y_t - \theta^* \rangle \ge 0$. Thus, $-2\eta_t \langle F(y_t), y_t - \theta^* \rangle \le 0.$
Substituting $y_t = \theta_t - 2\eta_t u_t F(\theta_t)$ results in $\theta_t - y_t = 2\eta_t u_t F(\theta_t)$. Hence,
\begin{equation}
-2\eta_t \langle F(y_t), \theta_t - y_t \rangle = -4\eta_t^2 u_t \langle F(y_t), F(\theta_t) \rangle.
\end{equation}
Applying the polarization identity, $-2\langle a, b \rangle = \|a-b\|^2 - \|a\|^2 - \|b\|^2$, yields
\begin{equation}
-4\eta_t^2 u_t \langle F(y_t), F(\theta_t) \rangle = 2\eta_t^2 u_t \|F(y_t) - F(\theta_t)\|^2 - 2\eta_t^2 u_t \|F(y_t)\|^2 - 2\eta_t^2 u_t \|F(\theta_t)\|^2.
\end{equation}
Merging these terms yields
\begin{equation}
\|\theta_{t+1} - \theta^*\|^2 \le \|\theta_t - \theta^*\|^2 + \eta_t^2(1 - 2u_t)\|F(y_t)\|^2 - 2\eta_t^2 u_t \|F(\theta_t)\|^2 + 2\eta_t^2 u_t \|F(y_t) - F(\theta_t)\|^2.
\end{equation}
Define $A_t = F(y_t) - F(\theta_t)$. Expanding $\|F(y_t)\|^2 = \|F(\theta_t)\|^2 + 2\langle F(\theta_t), A_t \rangle + \|A_t\|^2$ and substituting results in
\begin{equation}
\|\theta_{t+1} - \theta^*\|^2 \le \|\theta_t - \theta^*\|^2 + \eta_t^2(1 - 4u_t)\|F(\theta_t)\|^2 + 2\eta_t^2(1 - 2u_t)\langle F(\theta_t), A_t \rangle + \eta_t^2 \|A_t\|^2.
\end{equation}
Applying Young's inequality, $2(1 - 2u_t)\langle F(\theta_t), A_t \rangle \le \frac{1}{2}\|F(\theta_t)\|^2 + 2(1 - 2u_t)^2\|A_t\|^2$. Merging terms yields
\begin{equation} \label{eq:rampage_alpha_variance_isolated}
\|\theta_{t+1} - \theta^*\|^2 \le \|\theta_t - \theta^*\|^2 + \eta_t^2\left(\frac{3}{2} - 4u_t\right)\|F(\theta_t)\|^2 + \eta_t^2\big(1 + 2(1 - 2u_t)^2\big)\|A_t\|^2.
\end{equation}
By the $\alpha$-symmetric property, we have
\begin{equation}
\|A_t\| \le \left( K_0 + K_1 \|F(\theta_t)\|^\alpha + K_2 \|y_t - \theta_t\|^{\frac{\alpha}{1-\alpha}} \right) \|y_t - \theta_t\|.
\end{equation}
Substituting $\|y_t - \theta_t\| = 2\eta_t u_t \|F(\theta_t)\|$ yields
\begin{equation} \label{eq:rampage_alpha_lipschitz_exact}
\|A_t\|^2 \le 4\eta_t^2 u_t^2 \left( K_0 + K_1 \|F(\theta_t)\|^\alpha + K_2 (2\eta_t u_t \|F(\theta_t)\|)^{\frac{\alpha}{1-\alpha}} \right)^2 \|F(\theta_t)\|^2.
\end{equation}
Taking the conditional expectation $\mathbb{E}_t[\cdot] = \mathbb{E}[\cdot \mid \mathcal{F}_t]$, we have $\mathbb{E}_t\left[\frac{3}{2} - 4u_t\right] = -\frac{1}{2}$. Bounding $1 + 2(1 - 2u_t)^2 \le 3$, we evaluate
\begin{equation}
\mathbb{E}_t\big[3 \|A_t\|^2\big] \le 12\eta_t^2 \|F(\theta_t)\|^2 \mathbb{E}_t\left[ u_t^2 \left( A_t^\circ + B_t^\circ u_t^{\frac{\alpha}{1-\alpha}} \right)^2 \right],
\end{equation}
where $A_t^\circ = K_0 + K_1\|F(\theta_t)\|^\alpha$ and $B_t^\circ = K_2 (2\eta_t\|F(\theta_t)\|)^{\frac{\alpha}{1-\alpha}}$. Evaluating the exact moments over the uniform distribution results in $\int_0^1 s^2 ds = \frac{1}{3}$, $\int_0^1 s^{2 + \frac{\alpha}{1-\alpha}} ds = \frac{1-\alpha}{3-2\alpha}$, and $\int_0^1 s^{2 + \frac{2\alpha}{1-\alpha}} ds = \frac{1-\alpha}{3-\alpha}$. Since $\frac{1}{3} > \frac{1-\alpha}{3-2\alpha} > \frac{1-\alpha}{3-\alpha}$ for $\alpha \in (0, 1)$, we bound
\begin{equation}
\mathbb{E}_t\big[3 \|A_t\|^2\big] \le 12\eta_t^2 \|F(\theta_t)\|^2 \left( \frac{(A_t^\circ)^2}{3} + 2A_t^\circ B_t^\circ \frac{1}{3} + \frac{(B_t^\circ)^2}{3} \right) = 4\eta_t^2 \|F(\theta_t)\|^2 (A_t^\circ + B_t^\circ)^2.
\end{equation}
To ensure expected descent, the variance multiplier $4\eta_t^2(A_t^\circ + B_t^\circ)^2$ must be bounded by the descent limit $\frac{1}{2}$. We require $Q_t = \eta_t(A_t^\circ + B_t^\circ) \le \frac{1}{4}$, then $4Q_t^2 = 4 \eta_t^2(A_t^\circ + B_t^\circ)^2 \leq \frac{1}{4}$. Using $\eta_t = \frac{\nu}{K_0 + C_\alpha \|F(\theta_t)\|^\alpha}$ with $C_\alpha = K_1 + 2^{\frac{\alpha}{1-\alpha}} K_2$, we bound the components of $Q_t$
\begin{equation} \label{eq:rampage_alpha_q1_q2}
\eta_t K_0 \le \frac{\nu K_0}{K_0} = \nu, \qquad \eta_t K_1 \|F(\theta_t)\|^\alpha \le \frac{\nu K_1 \|F(\theta_t)\|^\alpha}{C_\alpha \|F(\theta_t)\|^\alpha} = \nu \frac{K_1}{C_\alpha}.
\end{equation}
Using $\eta_t \le \frac{\nu}{C_\alpha \|F(\theta_t)\|^\alpha}$, we bound
\begin{equation} \label{eq:rampage_alpha_q3}
\eta_t B_t^\circ = K_2 2^{\frac{\alpha}{1-\alpha}} \eta_t^{\frac{1}{1-\alpha}} \|F(\theta_t)\|^{\frac{\alpha}{1-\alpha}} \le K_2 2^{\frac{\alpha}{1-\alpha}} \left( \frac{\nu}{C_\alpha \|F(\theta_t)\|^\alpha} \right)^{\frac{1}{1-\alpha}} \|F(\theta_t)\|^{\frac{\alpha}{1-\alpha}} = K_2 2^{\frac{\alpha}{1-\alpha}} \left(\frac{\nu}{C_\alpha}\right)^{\frac{1}{1-\alpha}}.
\end{equation}
Combining these bounds, we obtain
\begin{equation}
Q_t \le \nu \left( 1 + \frac{K_1}{C_\alpha} \right) + K_2 2^{\frac{\alpha}{1-\alpha}} \left(\frac{\nu}{C_\alpha}\right)^{\frac{1}{1-\alpha}}.
\end{equation}
By choosing $\nu > 0$ sufficiently small such that
\begin{equation}
\nu \le \min \left\{ \frac{C_\alpha}{8(C_\alpha + K_1)}, \ C_\alpha \left( \frac{1}{8(C_\alpha - K_1)} \right)^{1-\alpha} \right\},
\end{equation}
we ensure $Q_t \le \frac{1}{4}$. This bounds the variance penalty by $\frac{1}{4}\eta_t^2 \|F(\theta_t)\|^2$, yielding the expected descent inequality
\begin{equation}
\mathbb{E}_t\left[\|\theta_{t+1} - \theta^*\|^2\right] \le \|\theta_t - \theta^*\|^2 - \frac{1}{4}\eta_t^2 \|F(\theta_t)\|^2.
\end{equation}
Taking the total expectation yields
\begin{equation}
\mathbb{E}\left[\|\theta_{t+1} - \theta^*\|^2\right] \le \mathbb{E}\left[\|\theta_t - \theta^*\|^2\right] - \frac{1}{4}\mathbb{E}\left[\eta_t^2 \|F(\theta_t)\|^2\right].
\end{equation}
Summing from $t=0$ to $k$ telescopes
\begin{equation}
\frac{1}{4} \sum_{t=0}^k \mathbb{E}\left[\eta_t^2 \|F(\theta_t)\|^2\right] \le \|\theta_0 - \theta^*\|^2 - \mathbb{E}\left[\|\theta_{k+1} - \theta^*\|^2\right] \le \|\theta_0 - \theta^*\|^2.
\end{equation}
Dividing by $k+1$ and noting that the minimum is bounded by the average yields
\begin{equation}
\min_{0 \le l \le k} \mathbb{E}\left[ \eta_l^2 \|F(\theta_l)\|^2 \right] \le \frac{1}{k+1} \sum_{l=0}^k \mathbb{E}\left[ \eta_l^2 \|F(\theta_l)\|^2 \right] \le \frac{4\|\theta_0 - \theta^*\|^2}{k+1}.
\end{equation}
Define the function $M(x)$ for $x = \|F(\theta_l)\|$ using $\eta_l$
\begin{equation}
M(x) = \left( \frac{\nu x}{K_0 + C_\alpha x^\alpha} \right)^2.
\end{equation}
For $\alpha \in (0, 1)$, $M(x)$ is a continuous, strictly monotonically increasing function on $[0, \infty)$ with $M(x) = 0$ if and only if $x = 0$. Thus,  by Markov inequality and the continuous mapping theorem \cite{mann1943stochastic}, the subsequence of best iterates $l^\ast_k$ enjoys $\|F(\theta_{l^\ast_k})\| \xrightarrow{p} 0$.
\end{proof}
\subsection{Proof of Theorem \ref{thm:cp_rmp_monotone}}\label{sec:cp_rmp_monotone}
\begin{proof}[Proof of Theorem \ref{thm:cp_rmp_monotone}]
We track $\|\theta_{t+1} - \theta^*\|^2$. By the non-expansiveness of the projection $\theta_{t+1} = \Pi_{\mathcal{X}}(\theta_t - 2\eta u_t F(y_t))$ with respect to $\theta^* \in \mathcal{X}$,
\begin{equation} \label{eq:cp_rmp_distance_expansion}
\|\theta_{t+1} - \theta^*\|^2 \le \|\theta_t - \theta^*\|^2 - \|\theta_{t+1} - \theta_t\|^2 - 4\eta u_t \langle F(y_t), \theta_{t+1} - \theta^* \rangle.
\end{equation}
We decouple the inner product via the intermediate state $y_t$
\begin{equation} \label{eq:cp_rmp_inner_product_split}
-4\eta u_t \langle F(y_t), \theta_{t+1} - \theta^* \rangle = -4\eta u_t \langle F(y_t), y_t - \theta^* \rangle - 4\eta u_t \langle F(y_t), \theta_{t+1} - y_t \rangle.
\end{equation}
By the monotonicity of $F$ and the equilibrium condition of $\theta^*$, the first term is non-positive. We expand the remaining differential cross-term by adding and subtracting $F(\theta_t)$
\begin{equation}\label{proof:thm4:diff-crossterm}
-4\eta u_t \langle F(y_t), \theta_{t+1} - y_t \rangle = -4\eta u_t \langle F(\theta_t), \theta_{t+1} - y_t \rangle - 4\eta u_t \langle F(y_t) - F(\theta_t), \theta_{t+1} - y_t \rangle.
\end{equation}
Testing the projection $y_t = \Pi_{\mathcal{X}}(\theta_t - 2\eta u_t F(\theta_t))$ against the updated state $\theta_{t+1} \in \mathcal{X}$ yields
\begin{equation}
\langle \theta_t - 2\eta u_t F(\theta_t) - y_t, \theta_{t+1} - y_t \rangle \le 0.
\end{equation}
Rearranging this inequality
\begin{equation}
-4\eta u_t \langle F(\theta_t), \theta_{t+1} - y_t \rangle \le 2\langle y_t - \theta_t, \theta_{t+1} - y_t \rangle.
\end{equation}
Applying the polarization identity $2\langle a-b, b-c \rangle = \|a-c\|^2 - \|a-b\|^2 - \|b-c\|^2$ results in
\begin{equation}\label{proof:thm4:polarization}
-4\eta u_t \langle F(\theta_t), \theta_{t+1} - y_t \rangle \le \|\theta_{t+1} - \theta_t\|^2 - \|\theta_{t+1} - y_t\|^2 - \|\theta_t - y_t\|^2.
\end{equation}
Applying Young's inequality to the second term at RHS of equality~(\ref{proof:thm4:diff-crossterm})
\begin{equation}\label{proof:thm4:young-ineq}
-4\eta u_t \langle F(y_t) - F(\theta_t), \theta_{t+1} - y_t \rangle \le \|\theta_{t+1} - y_t\|^2 + 4\eta^2 u_t^2 \|F(y_t) - F(\theta_t)\|^2.
\end{equation}
Substituting the bounds in~(\ref{proof:thm4:young-ineq}) and~(\ref{proof:thm4:polarization}) into the primary distance recursion results in~(\ref{eq:cp_rmp_distance_expansion}) using~(\ref{proof:thm4:diff-crossterm})
\begin{equation}
\|\theta_{t+1} - \theta^*\|^2 \le \|\theta_t - \theta^*\|^2 - \|\theta_t - y_t\|^2 + 4\eta^2 u_t^2 \|F(y_t) - F(\theta_t)\|^2.
\end{equation}
Using the $L$-Lipschitz continuity of the operator
\begin{equation}
\|F(y_t) - F(\theta_t)\|^2 \le L^2 \|y_t - \theta_t\|^2.
\end{equation}
Substituting this into the distance recursion yields
\begin{equation}\label{proof:thm4:distance-recursion}
\|\theta_{t+1} - \theta^*\|^2 \le \|\theta_t - \theta^*\|^2 - \big( 1 - 4\eta^2 u_t^2 L^2 \big) \|\theta_t - y_t\|^2.
\end{equation}
Since $u_t \in [0, 1]$, the coefficient satisfies $1 - 4\eta^2 u_t^2 L^2 \ge 1 - 4\eta^2 L^2$. By enforcing the step size constraint $\eta < \frac{1}{2L}$, we guarantee that $C_{SS} = 1 - 4\eta^2 L^2 > 0$. Taking the conditional expectation operator $\mathbb{E}_t[\cdot] = \mathbb{E}[\cdot \mid \mathcal{F}_t]$ yields the expected single-step descent inequality
\begin{equation}\label{proof:thm4:single-step-descent}
\mathbb{E}_t\left[\|\theta_{t+1} - \theta^*\|^2\right] \le \|\theta_t - \theta^*\|^2 - C_{SS} \mathbb{E}_t\left[\|\theta_t - y_t\|^2\right].
\end{equation}
Taking the total expectation and summing from $t = 0$ to $k$, the sum telescopes
\begin{equation}
C_{SS} \sum_{t=0}^k \mathbb{E}\left[\|\theta_t - y_t\|^2\right] \le \|\theta_0 - \theta^*\|^2 - \mathbb{E}\left[\|\theta_{k+1} - \theta^*\|^2\right] \le \|\theta_0 - \theta^*\|^2.
\end{equation}
Dividing by $(k+1)$, rearranging, and noting the arithmetic mean is bounded below by the minimum completes the proof.
\end{proof}
\subsection{Proof of Corollary \ref{cor:cp_rmp_gap}}\label{sec:cp_rmp_gap}
\begin{proof}[Proof of Corollary \ref{cor:cp_rmp_gap}]
We first observe that the proof of Theorem~\ref{thm:cp_rmp_monotone} remains valid when $u_l\sim \mathrm{Unif}([\delta, 1])$, since the argument only relies on the upper bound of $u_l$ from~(\ref{proof:thm4:distance-recursion}) to~(\ref{proof:thm4:single-step-descent}). Therefore, we can apply Theorem~\ref{thm:cp_rmp_monotone} under this modified scheme. By the algorithmic definition, the intermediate state is generated by the projection $y_{l^*} = \Pi_{\mathcal{X}}(\theta_{l^*} - 2\eta u_{l^*} F(\theta_{l^*}))$. Evaluating Lemma \ref{lem:gap} utilizing the projection residual $r_{l^*} = \theta_{l^*} - y_{l^*}$ and the realized algorithmic step size $2\eta u_{l^*}$ yields
\begin{equation}
\operatorname{Gap}_{\mathcal{B}}(\theta_{l^*}) \le \|\theta_{l^*} - y_{l^*}\| \left( \|F(\theta_{l^*})\| + \frac{1}{2\eta u_{l^*}} \sup_{v \in \mathcal{B}} \|\theta_{l^*} - v\| \right).
\end{equation}
Substituting the almost sure bounds $\|F(\theta_{l^*})\| \le G_{\mathcal{B}}$ and $\sup_{v \in \mathcal{B}} \|\theta_{l^*} - v\| \le D_{\mathcal{B}}$ results in
\begin{equation}
\operatorname{Gap}_{\mathcal{B}}(\theta_{l^*}) \le \|\theta_{l^*} - y_{l^*}\| \left( G_{\mathcal{B}} + \frac{D_{\mathcal{B}}}{2\eta u_{l^*}} \right).
\end{equation}
Taking the expectation and applying Jensen's inequality bounding $\mathbb{E}[\|\theta_{l^*} - y_{l^*}\|] \le \sqrt{\mathbb{E}[\|\theta_{l^*} - y_{l^*}\|^2]}$ isolates the residual norm. Substituting the expected best-iterate bound from Theorem \ref{thm:cp_rmp_monotone} completes the proof.
\end{proof}
\subsection{Proof of Theorem \ref{thm:ergodic_esrmp}}\label{sec:ergodic_esrmp}
\begin{proof}[Proof of Theorem \ref{thm:ergodic_esrmp}]
We track $\|\theta_{t+1} - \theta\|^2$. Substituting the primary update rule $\theta_{t+1} = \theta_t - \eta F(y_t)$ yields
\begin{equation} \label{eq:distance_expansion}
\|\theta_{t+1} - \theta\|^2 = \|\theta_t - \theta\|^2 - 2\eta \langle F(y_t), \theta_t - \theta \rangle + \eta^2 \|F(y_t)\|^2.
\end{equation}
We decouple the inner product via $y_t$
\begin{equation} \label{eq:inner_product_split}
-2\eta \langle F(y_t), \theta_t - \theta \rangle = -2\eta \langle F(y_t), y_t - \theta \rangle - 2\eta \langle F(y_t), \theta_t - y_t \rangle.
\end{equation}
Substituting $y_t = \theta_t - 2\eta u_t F(\theta_t)$, we have $\theta_t - y_t = 2\eta u_t F(\theta_t)$. Thus,
\begin{equation} \label{eq:displacement_sub}
-2\eta \langle F(y_t), \theta_t - y_t \rangle = -4\eta^2 u_t \langle F(y_t), F(\theta_t) \rangle.
\end{equation}
Let $E_t$ denote the $\mathcal{O}(\eta^2)$ terms from \eqref{eq:distance_expansion}
\begin{equation}
E_t = \eta^2 \|F(y_t)\|^2 - 4\eta^2 u_t \langle F(y_t), F(\theta_t) \rangle.
\end{equation}
Adding and subtracting $4\eta^2 u_t^2 \|F(\theta_t)\|^2$ results in
\begin{equation} \label{eq:error_square_completion}
E_t = \eta^2 \|F(y_t) - 2u_t F(\theta_t)\|^2 - 4\eta^2 u_t^2 \|F(\theta_t)\|^2.
\end{equation}
Define $A_t = F(y_t) - F(\theta_t)$. Applying $\|a + b\|^2 \le 2\|a\|^2 + 2\|b\|^2$ on $\|A_t + (1-2u_t)F(\theta_t)\|^2$ yields
\begin{equation} \label{eq:error_geometric_bound}
\eta^2 \|F(y_t) - 2u_t F(\theta_t)\|^2 \le 2\eta^2 \|A_t\|^2 + 2\eta^2 (1-2u_t)^2 \|F(\theta_t)\|^2.
\end{equation}
By $L$-Lipschitz continuity of $F$,
\begin{equation} \label{eq:lipschitz_application}
2\eta^2 \|A_t\|^2 \le 2\eta^2 L^2 \|y_t - \theta_t\|^2 = 8\eta^4 L^2 u_t^2 \|F(\theta_t)\|^2.
\end{equation}
Combining these bounds yields
\begin{equation} \label{eq:error_total_unconditioned}
E_t \le \eta^2 \left( 8\eta^2 L^2 u_t^2 + 2(1-2u_t)^2 - 4u_t^2 \right) \|F(\theta_t)\|^2.
\end{equation}
Taking the conditional expectation $\mathbb{E}_t[\cdot] = \mathbb{E}[\cdot \mid \mathcal{F}_t]$ and noting $\mathbb{E}_t[u_t^2] = 1/3$ and $\mathbb{E}_t[(1-2u_t)^2] = 1/3$ results in
\begin{equation} \label{eq:error_expectation_collapsed}
\mathbb{E}_t[E_t] \le \eta^2 \left( \frac{8}{3}\eta^2 L^2 + \frac{2}{3} - \frac{4}{3} \right) \|F(\theta_t)\|^2 = \frac{2}{3}\eta^2 \left( 4\eta^2 L^2 - 1 \right) \|F(\theta_t)\|^2.
\end{equation}
For $\eta \le \frac{1}{2L}$, $4\eta^2 L^2 - 1 \le 0$. Applying this to \eqref{eq:distance_expansion} yields the expected descent inequality
\begin{equation} \label{eq:fundamental_descent_expectation}
2\eta \mathbb{E}_t \left[ \langle F(y_t), y_t - \theta \rangle \right] \le \|\theta_t - \theta\|^2 - \mathbb{E}_t \left[ \|\theta_{t+1} - \theta\|^2 \right].
\end{equation}
Applying \eqref{eq:convex_concave_bound} and taking the total expectation yields
\begin{equation}
2\eta \mathbb{E} \left[ f(x_t^m, z) - f(x, z_t^m) \right] \le \mathbb{E} \left[ \|\theta_t - \theta\|^2 \right] - \mathbb{E} \left[ \|\theta_{t+1} - \theta\|^2 \right].
\end{equation}
Summing from $t=0$ to $k$ telescopes
\begin{equation} \label{eq:telescopic_sum}
\sum_{t=0}^k \mathbb{E} \left[ f(x_t^m, z) - f(x, z_t^m) \right] \le \frac{\|\theta_0 - \theta\|^2 - \mathbb{E} \left[ \|\theta_{k+1} - \theta\|^2 \right]}{2\eta} \le \frac{\|\theta_0 - \theta\|^2}{2\eta}.
\end{equation}
Dividing by $(k+1)$ and applying Jensen's inequality to the jointly convex function $f(\cdot, z) - f(x, \cdot)$ evaluated at the ergodic average $\bar{y}_k$ completes the proof.
\end{proof}
\subsection{Proof of Theorem \ref{thm:sfo_ergodic_rampage}}\label{sec:sfo_ergodic_rampage}
\begin{proof}[Proof of Theorem \ref{thm:sfo_ergodic_rampage}]
We track $\mathbb{E}[\|\theta_{t+1} - \theta\|^2]$. Denote $F_t = F(\theta_t)$, $\hat{F}_t = \hat{F}(\theta_t, \xi_t)$, and $\hat{G}_t = \hat{F}(y_t, \zeta_t)$. Substituting the primary update rule yields
\begin{equation}
\|\theta_{t+1} - \theta\|^2 = \|\theta_t - \theta\|^2 - 2\eta \langle \hat{G}_t, \theta_t - \theta \rangle + \eta^2 \|\hat{G}_t\|^2.
\end{equation}
Let $\mathcal{F}_{t, 1}$ denote the filtration conditioning on $\theta_t, u_t$, and $\xi_t$. Taking the conditional expectation with respect to $\zeta_t$, unbiasedness ensures $\mathbb{E}_{\zeta_t}[\hat{G}_t] = F(y_t)$ and $\mathbb{E}_{\zeta_t}[\|\hat{G}_t\|^2] \le \|F(y_t)\|^2 + \sigma^2$. This yields
\begin{equation} \label{eq:sfo_expected_distance}
\mathbb{E}_{\zeta_t}[\|\theta_{t+1} - \theta\|^2] \le \|\theta_t - \theta\|^2 - 2\eta \langle F(y_t), \theta_t - \theta \rangle + \eta^2 \|F(y_t)\|^2 + \eta^2 \sigma^2.
\end{equation}
We decouple the inner product via $y_t$
\begin{equation} \label{eq:sfo_inner_split}
-2\eta \langle F(y_t), \theta_t - \theta \rangle = -2\eta \langle F(y_t), y_t - \theta \rangle - 2\eta \langle F(y_t), \theta_t - y_t \rangle.
\end{equation}
Substituting $\theta_t - y_t = 2\eta u_t \hat{F}_t$ and applying the polarization identity $-2\langle a, b \rangle = \|a-b\|^2 - \|a\|^2 - \|b\|^2$ yields
\begin{equation}
-4\eta^2 u_t \langle F(y_t), \hat{F}_t \rangle = 2\eta^2 u_t \|F(y_t) - \hat{F}_t\|^2 - 2\eta^2 u_t \|F(y_t)\|^2 - 2\eta^2 u_t \|\hat{F}_t\|^2.
\end{equation}
Merging this with $\eta^2 \|F(y_t)\|^2$ from \eqref{eq:sfo_expected_distance} yields $\eta^2 (1 - 2u_t) \|F(y_t)\|^2$. Define $A_t = F(y_t) - F_t$. Expanding $\|F(y_t)\|^2 = \|F_t\|^2 + 2\langle F_t, A_t \rangle + \|A_t\|^2$ yields
\begin{equation} \label{eq:sfo_f_y_expansion}
\eta^2 (1 - 2u_t) \|F_t\|^2 + 2\eta^2 (1 - 2u_t) \langle F_t, A_t \rangle + \eta^2 (1 - 2u_t) \|A_t\|^2.
\end{equation}
By Young's inequality
\begin{equation}
2\eta^2 (1 - 2u_t) \langle F_t, A_t \rangle \le \frac{\eta^2}{2} \|F_t\|^2 + 2\eta^2 (1 - 2u_t)^2 \|A_t\|^2.
\end{equation}
Defining the base oracle noise $N_t = F_t - \hat{F}_t$, we bound the variance term
\begin{equation}
2\eta^2 u_t \|F(y_t) - \hat{F}_t\|^2 = 2\eta^2 u_t \|A_t + N_t\|^2 \le 4\eta^2 u_t \|A_t\|^2 + 4\eta^2 u_t \|N_t\|^2.
\end{equation}
Aggregating the coefficients of $\|A_t\|^2$ yields $\eta^2 [ (1 - 2u_t) + 2(1 - 2u_t)^2 + 4u_t ] = \eta^2 (3 - 6u_t + 8u_t^2) := \eta^2 P(u_t)$. By $L$-Lipschitz continuity of $F$
\begin{equation}
\|A_t\|^2 \le L^2 \|y_t - \theta_t\|^2 = 4\eta^2 L^2 u_t^2 \|\hat{F}_t\|^2.
\end{equation}
Thus, the aggregate penalty is bounded by $4\eta^4 L^2 u_t^2 P(u_t) \|\hat{F}_t\|^2$. Let $Q(u_t) = 4 u_t^2 P(u_t)$. Integrating over the uniform distribution yields $\mathbb{E}[Q(u_t)] = 4.4 := C_R$. Taking the conditional expectation $\mathbb{E}_t[\cdot] = \mathbb{E}[\cdot \mid \mathcal{F}_t]$, independence of $u_t$ and $\xi_t$ yields
\begin{equation}
\begin{aligned}
\mathbb{E}_t \left[ \eta^2\left( \frac{3}{2} - 2u_t \right) \|F_t\|^2 \right] &= \frac{1}{2}\eta^2 \|F_t\|^2, \\
\mathbb{E}_t \left[ -2\eta^2 u_t \|\hat{F}_t\|^2 \right] &= -\eta^2 \big( \|F_t\|^2 + \mathbb{E}_t[\|N_t\|^2] \big), \\
\mathbb{E}_t \left[ 4\eta^2 u_t \|N_t\|^2 \right] &= 2\eta^2 \mathbb{E}_t[\|N_t\|^2], \\
\mathbb{E}_t \left[ \eta^4 L^2 Q(u_t) \|\hat{F}_t\|^2 \right] &= \eta^4 L^2 C_R \big( \|F_t\|^2 + \mathbb{E}_t[\|N_t\|^2] \big).
\end{aligned}
\end{equation}
Let $\sigma_t^2 = \mathbb{E}_t[\|N_t\|^2] \le \sigma^2$. Summing the expectations yields the coefficient for $\|F_t\|^2$ as $-\eta^2 \big( \frac{1}{2} - \eta^2 L^2 C_R \big)$. For $\eta \le 1 / (L \sqrt{2 C_R})$, this term is non-positive and can be discarded. The remaining variance terms, including $\eta^2 \sigma^2$ from \eqref{eq:sfo_expected_distance}, sum to
\begin{equation}
\eta^2 \sigma^2 - \eta^2 \sigma_t^2 + 2\eta^2 \sigma_t^2 + \eta^4 L^2 C_R \sigma_t^2 \le \eta^2 \big( 2 + \eta^2 L^2 C_R \big) \sigma^2.
\end{equation}
Substituting this variance bound into \eqref{eq:sfo_expected_distance} and using convax-concave structure yields
\begin{equation}
2\eta \mathbb{E} \left[ f(x_{y,t}, z) - f(x, z_{y,t}) \right] \le \mathbb{E} \left[ \|\theta_t - \theta\|^2 \right] - \mathbb{E} \left[ \|\theta_{t+1} - \theta\|^2 \right] + \eta^2 \big( 2 + \eta^2 L^2 C_R \big) \sigma^2.
\end{equation}
Summing from $t=0$ to $k$ telescopes
\begin{equation}
\sum_{t=0}^k \mathbb{E} \left[ f(x_{y,t}, z) - f(x, z_{y,t}) \right] \le \frac{\|\theta_0 - \theta\|^2}{2\eta} + \frac{(k+1)\eta \big( 2 + \eta^2 L^2 C_R \big)}{2} \sigma^2.
\end{equation}
Dividing by $k+1$ and applying Jensen's inequality to the jointly convex function $f(\cdot, z) - f(x, \cdot)$ evaluated at $\bar{y}_k$ yields
\begin{equation}
\mathbb{E} \left[ f(\bar{x}_k, z) - f(x, \bar{z}_k) \right] \le \frac{1}{k+1} \sum_{t=0}^k \mathbb{E} \left[ f(x_{y,t}, z) - f(x, z_{y,t}) \right].
\end{equation}
This completes the proof.
\end{proof}
\section{Numerical Verification}\label{sec:exp}
To empirically validate the advantages of the proposed randomized integration schemes, we evaluate \eqref{eq:eg} and \eqref{eq:rampage+} across three nonlinear examples. For each setting, we execute $1000$ independent Monte Carlo trials. 

We employ the same initialization for both methods within each experiment. We also select the step size such that it lies precisely at the instability threshold where \eqref{eq:eg} diverges due to its deterministic $\mathcal{O}(\eta^2)$ discretization bias. This highlights that the proposed unbiased integration and the variance-reduced architecture of \eqref{eq:rampage+} leads to convergence in aggressive step-size regimes where standard extrapolation of \eqref{eq:eg} fails. Beside running \eqref{eq:eg} with this unstable stepsize, we run another instance with a slightly lower stepsize which is convergent. In all cases, we also find the largest step size for \eqref{eq:rampage+} that still leads to convergence to verify our main claim that \eqref{eq:rampage+} has lower estimation error than \eqref{eq:eg} and as such is more stable.

\subsection{Conservative Polynomial Field}\label{sec:exp-1}
We first consider an unconstrained conservative vector field derived from a 4th-order polynomial objective, evaluated component-wise as
\begin{equation}
F_i(\theta) = \theta_i + 5\theta_i^3 - 6\theta_i^2, \quad\theta \in \mathbb{R}^{10}.
\end{equation}
As the state vector moves away from the origin, the operator's curvature grows rapidly. The deterministic truncation error of \eqref{eq:eg} is heavily penalized by these massive higher-order derivatives, systematically inducing trajectory overshoot and divergence. Conversely,  \eqref{eq:rampage+} bypasses this bottleneck via unbiased integration while enjoying a significantly low variance.

\subsection{High-Frequency Rotational Min-Max Game}\label{sec:exp-2}
Next, motivated by our infinite-GAN example discussed in Section \ref{sec:motivation} and detailed in Appendix \ref{sec:gan}, we evaluate a non-conservative, highly rotational min-max game with high-frequency terms. The driving operator is 
\begin{equation}
F(\theta) = M\theta + 0.005\omega \odot \sin(\omega \odot \theta), \quad\theta \in \mathbb{R}^{20}.
\end{equation}
The matrix $M \in \mathbb{R}^{20 \times 20}$ is block-diagonal, composed of $2 \times 2$ sub-blocks taking the form
\begin{equation}
B_i = \begin{bmatrix} 0.1 & \beta_i \\ -\beta_i & 0.1 \end{bmatrix}.
\end{equation}
The added trigonometric perturbation guarantees \eqref{eq:eg} fails due to deterministic truncation error as we discussed previously. \eqref{eq:rampage+}, on the other hand, maintains convergence. We uniformly space $\beta_i \in [2.0, 8.0]$ and $\omega_i \in [15.0, 45.0]$.

\subsection{2d High-Frequency Rotational Min-Max Game}\label{sec:exp-3}
We evaluate a 2-dimensional min-max game, again motivated by our infinite-GAN example. The driving operator is formulated as
\begin{equation}
F(\theta) = M\theta + 0.04 \odot \sin(\omega \odot \theta), \quad\theta \in \mathbb{R}^{2}.
\end{equation}
The matrix $M \in \mathbb{R}^{2 \times 2}$ takes the form
\begin{equation}
M = \begin{bmatrix} 0 & -1 \\ 1 & 0 \end{bmatrix}.
\end{equation}
The added trigonometric perturbation guarantees \eqref{eq:eg} fails due to deterministic truncation error as we discussed previously. \eqref{eq:rampage+}, on the other hand, maintains convergence. We set $\omega =25$.

\subsection{Distributionally Robust Optimization (DRO)}\label{sec:exp-4}
To evaluate the behavior of the proposed \eqref{eq:rampage+} in a non-bilinear convex-concave min-max problem, we consider a KL-regularized distributionally robust optimization (DRO) formulation for a binary logistic regression task. Specifically, given a dataset of $N$ samples, we solve the following saddle-point problem 

\begin{equation}
    \min_{\theta\in \mathbb{R}^d}\max_{v\in \mathbb{R}^N} \Phi(\theta, v) = \sum_{i=1}^Np_i(v)\ell_i(\theta) - \gamma \sum_{i=1}^Np_i(v) + \frac{\lambda}{2}\|\theta\|_2^2 - \frac{\alpha}{2}\|v\|_2^2,
\end{equation}where the adversarial distribution over samples is $p_i(v) = e^{v_i}/\sum_{j=1}^Ne^{v_j}$, and the inner maximization corresponds to a KL-divergence uncertainty set. The objective is a combination of weighted empirical logistic loss with entropy regularization on the adversarial distribution and quadratic regularization on both primal and dual variables. 

We formulate this problem on the Breast Cancer Wisconsin dataset~\cite{breast_cancer_wisconsin_diagnostic_17} with dimensionality $d=30$ and sample size $N=569$, and treat it as a binary classification task. Following standard preprocessing, all features are standardized  to zero mean and unit variance. Binay labels are mapepd to $\{-1, 1\}$, and the per-sample loss is chosen as the logistic loss 
\begin{equation}
    \ell_i (\theta) = \log(1 + \mathrm{exp}(-y_ix_i^T\theta)).
\end{equation}The operator $F(\theta, v)$ associated with the saddle-point problem is constructed from the gradient of the primal objective and the negative gradient of the dual objective. For numerical stability, logits are clipped before evaluating exponential terms. The entropy regularization parameter is set to $\gamma=0.1$, while the primal and dual quadratic regularization parameters are set to $\lambda =0.01$ and $\alpha=0.01$, respectively. 

We compare the standard~\eqref{eq:eg} method and the proposed~\eqref{eq:rampage+} method under both conservative and aggressive step-size regimes. For extragradient, we use step sizes $\eta=1.09$ and $\eta=1.10$, while for~\eqref{eq:rampage+} we consider $\eta=1.09$ and a substantially larger step size $\eta=2.0$. Each method is run for $500$ iterations over $100$ independent trials. For each trial, all methods are initialized from the same random primal variable, $\theta_0\sim\mathcal{N}(0,0.01^2I)$, and dual variable, $v_0=0$, while the initialization is resampled across trials. We report the mean operator residual norm $\|F(\theta_k,v_k)\|_2$ across trials, with shaded regions indicating one standard deviation around the mean.

The main objective of this experiment is to assess the robustness of~\eqref{eq:rampage+} to larger step sizes and compare its empirical convergence behavior against~\eqref{eq:eg} on the KL-regularized DRO problem. The results show that both methods converge stably under their conservative step-size regimes. However, a slight increase in the extragradient step size causes the residual norm to plateau at a larger step size, indicating instability. In contrast,~\eqref{eq:rampage+} remains stable and continues to reduce the residual norm even under a significantly larger step size, demonstrating improved robustness to aggressive step sizes.

\subsection{Adversarial Training}\label{sec:exp-5}
We further evaluate \eqref{eq:rampage+} on a non-bilinear min-max adversarial training problem for binary logistic regression. Given a  dataset of $N$ training samples, we solve the following saddle-point problem:
\begin{equation}
    \min_{\theta\in \mathbb{R}^d}\max_{\Delta\in \mathbb{R}^{N\times d}} \Phi(\theta, \Delta) = \sum_{i=1}^N\log\left(1 + \mathrm{exp}(-y_i \theta^T(x_i + \delta_i))\right) - \frac{\gamma}{2N}\sum_{i=1}^N\|\delta_i\|_2^2,
\end{equation}where $\theta$ denotes the model parameters, $\Delta=[\delta_1,\ldots,\delta_N]^T$ represents the adversarial perturbations, and $\gamma >0$ controls the strength of the quadratic regularization on the perturbations. We set $\gamma=1.0$.  

We use the same Breast Cancer Wisconsin dataset~\cite{breast_cancer_wisconsin_diagnostic_17} as in the previous experiments. We followed the same pre-processing steps, except that the logistic loss was evaluated on the adversarially perturbed inputs $x_i+\delta_i$. 

Each method is run for $1000$ iterations over $100$ independent trials. All methods are initialized from the same zero initialization in each trial. We compare the standard \eqref{eq:eg} and the proposed~\eqref{eq:rampage+} under both conservative and aggressive step size regimes. For~\eqref{eq:eg}, we use step size $\eta=1.41$ and $\eta =1.42$, while for~\eqref{eq:rampage+} we consider $\eta = 1.42$ and a substantially larger step size $\eta=2.23$. 

The main objective of this experiment is to assess the robustness of~\eqref{eq:rampage+} to larger step sizes and compare its convergence behavior against~\eqref{eq:eg} on the adversarial training problem. The results show that both methods converge stably under their conservative step size regimes. However, a slight increase in the~\eqref{eq:eg} causes the residual norm to plateau at a relatively large value, indicating instability. In contrast,~\eqref{eq:rampage+} remains stable and continues to reduce the residual norm under a significantly larger step size, demonstrating improved robustness to aggressive step sizes. 

\end{document}